\documentclass[accepted]{article}

\usepackage{enumitem}
\usepackage{microtype}
\usepackage{graphicx}
\usepackage{subfigure}
\usepackage{booktabs} 
\usepackage{xcolor}
\usepackage{multirow}
\usepackage[table]{xcolor}
\definecolor{darkred}{RGB}{139,0,0}

\usepackage{tikz}

\usepackage{helvet}
\usetikzlibrary{positioning,calc,arrows.meta,backgrounds,fit,shapes.geometric}

\usepackage{hyperref}

\usepackage{icml2026}

\usepackage{amsmath}
\usepackage{amssymb}
\usepackage{mathtools}
\usepackage{amsthm}

\usepackage[capitalize,noabbrev]{cleveref}

\theoremstyle{plain}
\newtheorem{theorem}{Theorem}[section]
\newtheorem*{theorem*}{Theorem}
\newtheorem{proposition}[theorem]{Proposition}
\newtheorem*{proposition*}{Proposition}
\newtheorem{lemma}[theorem]{Lemma}

\theoremstyle{definition}

\theoremstyle{remark}

\usepackage[textsize=tiny]{todonotes}

\icmltitlerunning{Conformal Calibration Transfer}

\begin{document}

\twocolumn[
  \icmltitle{Conformal Calibration Transfer}



  \icmlsetsymbol{equal}{*}

  \begin{icmlauthorlist}
    \icmlauthor{Achref Doula}{tu,tongji}
  \end{icmlauthorlist}

  \icmlaffiliation{tu}{Technical University of Darmstadt, Darmstadt, Germany}
  \icmlaffiliation{tongji}{Tongji University, Shanghai, China}

  \icmlcorrespondingauthor{Achref Doula}{achref.doula@gmail.com}

  \icmlkeywords{Machine Learning, ICML}

  \vskip 0.3in
]



\printAffiliationsAndNotice{}  

\begin{abstract}
Conformal prediction converts point predictions into set-valued predictions with coverage guarantees under exchangeability between calibration and deployment data. We study \emph{conformal calibration transfer}, where this requirement fails because labeled calibration is available only in a source space, while prediction sets are needed in a target space linked to the source through \emph{unlabeled paired} observations (e.g., paired modalities or sensor changes).
We propose Transported Conformal Calibration (TCC): we transport labeled source calibration into the target space using the paired data, and then correct residual post-transport mismatch using only unlabeled target inputs. We instantiate this correction with two complementary methods: \textbf{TCC-KS}, which uses a label-free uncertainty surrogate to detect mismatch and adjust calibration conservatively, and \textbf{weighted-TCC}, which reweights transported calibration toward the target domain for improved efficiency when weights are stable. We provide finite-sample target-domain coverage guarantees that adapt to an observable measure of mismatch. Across CIFAR-100-C, Tiny-ImageNet-C, and SEN12MS, we show reliable target-domain coverage transfer without labeled target calibration data, with label-free diagnostics that predict when correction is needed.
\end{abstract}

\section{Introduction}

\begin{figure}[t]
\centering
\begingroup

\definecolor{srcLine}{HTML}{47586A}
\definecolor{srcFill}{HTML}{F2F6FA}
\definecolor{srcSoft}{HTML}{AFC4D6}
\definecolor{tgtLine}{HTML}{6E5A4E}
\definecolor{tgtFill}{HTML}{FAF4EE}
\definecolor{tgtSoft}{HTML}{D4C2B3}
\definecolor{goalLine}{HTML}{004E8A}
\definecolor{goalFill}{HTML}{EAF2F8}
\definecolor{warnLine}{HTML}{A05A3B}
\definecolor{warnFill}{HTML}{FFF2EA}
\definecolor{unk}{HTML}{8F969D}
\definecolor{clsA}{HTML}{2D6CA2}
\definecolor{clsB}{HTML}{D9891E}
\definecolor{clsC}{HTML}{2E8B57}

\resizebox{\columnwidth}{!}{%
\begin{tikzpicture}[
  x=1cm,y=1cm,
  font=\sffamily,
  >={Latex[length=1.8mm]},
  panel/.style={rounded corners=7pt, line width=0.75pt, inner sep=0pt},
  tag/.style={rounded corners=4pt, draw=black!32, fill=white,
              line width=0.45pt, align=center, inner sep=2.4pt,
              font=\sffamily\scriptsize},
  smalltag/.style={rounded corners=3pt, draw=black!25, fill=white,
                   line width=0.4pt, align=center, inner sep=1.8pt,
                   font=\sffamily\tiny},
  pair/.style={densely dashed, line width=0.65pt, draw=black!50},
  arr/.style={->, line width=0.7pt, draw=black!58},
  cpt/.style={draw=black!45, line width=0.25pt, inner sep=0pt},
  ptA/.style={cpt, circle, fill=clsA, minimum size=4.4pt},
  ptB/.style={cpt, regular polygon, regular polygon sides=3,
              fill=clsB, minimum size=5.2pt},
  ptC/.style={cpt, rectangle, fill=clsC, minimum size=4.2pt},
  ptU/.style={circle, draw=unk, line width=0.42pt, fill=unk!13,
              minimum size=5.8pt, inner sep=0pt, font=\tiny, text=unk!130}
]

\node[panel, draw=goalLine, fill=goalFill,
      minimum width=7.35cm, minimum height=0.78cm,
      anchor=south west] at (0.00,4.15) {};
\node[align=center, text=goalLine, font=\sffamily\scriptsize,
      text width=7.05cm] at (3.675,4.54)
  {\textbf{Problem:} calibration labels are available in $\mathcal{X}_s$,
   but valid prediction sets are needed in $\mathcal{X}_t$.};

\node[panel, draw=srcLine, fill=srcFill,
      minimum width=3.20cm, minimum height=2.95cm,
      anchor=south west] (src) at (0.00,0.95) {};

\begin{scope}
  \clip[rounded corners=7pt] (0.00,0.95) rectangle (3.20,3.90);
  \fill[srcSoft, opacity=0.28, rotate around={18:(1.62,2.50)}]
       (1.62,2.50) ellipse (1.28cm and 0.75cm);
  \fill[srcSoft, opacity=0.35, rotate around={18:(1.62,2.50)}]
       (1.62,2.50) ellipse (0.78cm and 0.45cm);
\end{scope}

\node[anchor=north west, font=\sffamily\bfseries\scriptsize, text=srcLine]
  at (0.16,3.95) {Source space $\mathcal{X}_s$};

\node[ptA] at (0.62,3.10) {};
\node[ptA] at (1.05,2.51) {};
\node[ptB] at (1.62,3.17) {};
\node[ptB] at (2.05,2.57) {};
\node[ptC] at (0.83,1.98) {};
\node[ptC] at (1.55,1.85) {};

\node[ptU] (s1) at (2.62,3.20) {?};
\node[ptU] (s2) at (2.76,2.63) {?};
\node[ptU] (s3) at (2.56,2.03) {?};

\node[tag, anchor=south, text=srcLine] at (1.60,1.00)
  {$\mathcal{C}_s=\{(x_s^{(i)},y^{(i)})\}$\\[-1pt]\textbf{labels observed}};

\node[panel, draw=tgtLine, fill=tgtFill,
      minimum width=3.20cm, minimum height=2.95cm,
      anchor=south west] (tgt) at (4.15,0.95) {};

\begin{scope}
  \clip[rounded corners=7pt] (4.15,0.95) rectangle (7.35,3.90);
  \fill[tgtSoft, opacity=0.30, rotate around={-22:(5.72,2.50)}]
       (5.72,2.50) ellipse (1.28cm and 0.75cm);
  \fill[tgtSoft, opacity=0.38, rotate around={-22:(5.72,2.50)}]
       (5.72,2.50) ellipse (0.78cm and 0.45cm);
\end{scope}

\node[anchor=north west, font=\sffamily\bfseries\scriptsize, text=tgtLine]
  at (4.31,3.95) {Target space $\mathcal{X}_t$};

\node[ptU] (t1) at (4.76,3.18) {?};
\node[ptU] (t2) at (4.63,2.61) {?};
\node[ptU] (t3) at (4.83,2.03) {?};

\node[ptU] at (5.45,3.17) {?};
\node[ptU] at (6.25,2.97) {?};
\node[ptU] at (6.70,2.45) {?};
\node[ptU] at (5.72,1.80) {?};
\node[ptU] at (6.48,1.77) {?};

\node[ptU, draw=tgtLine, fill=white, line width=0.60pt,
      minimum size=6.8pt] (xt) at (5.78,2.40) {?};
\draw[dashed, line width=0.50pt, tgtLine!70] (xt) ellipse (0.58cm and 0.32cm);
\node[anchor=north, font=\sffamily\tiny, text=tgtLine] at (6.12,2.70) {$x_t^\star$};
\node[smalltag, draw=tgtLine!45, text=tgtLine, anchor=west] at (6.14,2.92)
  {$\Gamma_t(x_t^\star)$?};

\node[tag, draw=warnLine!55, fill=warnFill, text=warnLine,
      anchor=south] at (5.75,1.00)
  {no labeled target\\[-1pt]calibration $\mathcal{C}_t$};


\draw[pair] (s1) -- (t1);
\draw[pair] (s2) -- (t2);
\draw[pair] (s3) -- (t3);

\node[smalltag, text=black!65] at (3.675,2.86)
  {$\mathcal{U}$: unlabeled pairs};

\node[font=\sffamily\tiny, text=black!55, align=center,
      fill=white, inner sep=1.0pt, rounded corners=2pt] at (3.675,2.58)
  {$\{(x_s^{(j)},x_t^{(j)})\}_{j=1}^{m}$};


\node[smalltag, draw=goalLine!45, fill=goalFill, text=goalLine,
      minimum width=2.30cm] at (3.665,1.75)
  {self-supervised map $f$};

\draw[arr, draw=goalLine!65] (3.05,2.03) -- (4.40,2.03);

\node[panel, draw=black!22, fill=black!2,
      minimum width=7.35cm, minimum height=0.62cm,
      anchor=south west] at (0.00,0.13) {};
\node[font=\sffamily\tiny, text=black!65, align=center,
      text width=7.05cm] at (3.675,0.44)
  {TCC learns $f$ from $\mathcal{U}$, transports $\mathcal{C}_s$ into $\mathcal{X}_t$,
   and corrects residual mismatch via TCC-KS or weighted-TCC without target labels.};

\end{tikzpicture}%
}

\caption{
Conformal calibration transfer. Labeled calibration data are available only in
the source space $\mathcal{X}_s$, while prediction sets with target-domain
coverage are required in $\mathcal{X}_t$. The two spaces are linked by
unlabeled paired observations, but no labeled target calibration set is available.
}
\label{fig:tcc_problem}
\endgroup
\end{figure}

Conformal prediction (CP) converts a point predictor into a set-valued predictor with finite-sample, distribution-free marginal coverage~\cite{vovk2005algorithmic,shafer2008tutorial,lei2018distribution,angelopoulos2021gentle}.
In the standard split-conformal construction, this guarantee is obtained by calibrating a nonconformity score on labeled data that are exchangeable with the test inputs.
In practice, this often means maintaining labeled calibration data for each deployment distribution, even when the task and model are fixed.

In many real deployments, labeled calibration data exist only in a \emph{source} input space $\mathcal{X}_s$, while predictions are required in a \emph{target} space $\mathcal{X}_t$.
For instance, a system calibrated on a legacy sensor must be deployed on upgraded hardware, or a system operating on one imaging modality must operate on another~\cite{yi2021complete,lazaridou2021mind}.
When the cost or delay of labeling in the new target space is prohibitive, due to expert annotation requirements, regulatory constraints, or rapid deployment needs, standard conformal calibration is infeasible.
However, during sensor transitions or modality changes, collecting \emph{unlabeled paired observations} $(x_s, x_t)$ is often straightforward: co-registered measurements, synchronized captures, or paired preprocessing pipelines naturally yield such data, where each pair corresponds to the same underlying instance and thus shares the same unobserved label.

We formalize this setting as \emph{conformal calibration transfer}.
As depicted in Figure~\ref{fig:tcc_problem}, we assume access to (i) labeled source calibration $\mathcal{C}_s=\{(x_s^{(i)},y^{(i)})\}_{i=1}^n$, (ii) a fixed target predictor $g_t:\mathcal{X}_t\to\mathcal{Y}$, and (iii) unlabeled paired data $\mathcal{U}=\{(x_s^{(j)},x_t^{(j)})\}_{j=1}^m$ linking the two spaces, but crucially \emph{no labeled target calibration data}.
The goal is to construct prediction sets $\Gamma_t:\mathcal{X}_t\to 2^{\mathcal{Y}}$ achieving target-domain marginal coverage $1-\alpha$ under the distribution of $(X_t,Y)$.
Standard split conformal calibration requires labeled target data and is therefore inapplicable.
A second natural idea is importance-weighted conformal prediction under covariate shift~\cite{tibshirani2019covariate}, but these methods assume source and target lie in a \emph{common} feature space, which is violated when $\mathcal{X}_s \neq \mathcal{X}_t$.
The paired data are therefore the key resource: any successful transfer must exploit the structure that $(x_s,x_t)$ correspond to the same instance.

The unlabeled pairs $\mathcal{U}$ enable learning a transport map $f:\mathcal{X}_s\to\mathcal{X}_t$ that translates source inputs into target counterparts.
Given such a map, we can transport the labeled source calibration to form pseudo-target examples $\{(f(x_s^{(i)}), y^{(i)})\}_{i=1}^n$ and apply split conformal calibration under $g_t$.
However, transport is generally imperfect: even when $f$ produces plausible target inputs, the transported distribution $\tilde{X}_t = f(X_s)$ may differ systematically from the true target distribution $X_t$~\cite{hoffman2018cycada,Wu_2024_CVPR}, shifting the nonconformity score distribution and potentially causing under-coverage.
The central challenge is that this residual mismatch must be detected and corrected \emph{without} target labels: we cannot directly compare score distributions because $Y$ is unobserved in the target domain.
This limitation is fundamental: we show (Theorem~\ref{thm:impossibility}) that without additional structural assumptions, any procedure guaranteeing distribution-free target coverage using only unlabeled target inputs must be near-vacuous in the worst case.
Thus, nontrivial calibration transfer should leverage pairing to produce labeled scores in $\mathcal{X}_t$, and use label-free signals to certify and correct the remaining mismatch.

We propose \textbf{Transported Conformal Calibration (TCC)}, a framework that transports source calibration to the target space and then corrects residual mismatch using only unlabeled target inputs. We propose two correction mechanisms.
Our first mechanism, \textbf{TCC-KS}, provides a certificate-driven adjustment.
Since we do not have access to labeled data to compute scores, TCC-KS introduces a \emph{label-free uncertainty surrogate} $T:\mathcal{X}_t\to\mathbb{R}$ computed from the target predictor's outputs (e.g., $T(x)=1-\max_y \hat p_t(y\!\mid\!x)$) and measures a one-sided Kolmogorov--Smirnov discrepancy between $T(\tilde{X}_t)$ and $T(X_t)$ on unlabeled samples.
A finite-sample high-probability bound yields an observable inflation term $\delta^+$, which we convert into an adjusted level $\alpha^\star=\max(0,\alpha-\delta^+)$: when mismatch is small, $\alpha^\star\approx\alpha$ and the procedure is tight, while large mismatch triggers intentional conservatism.
Under an \emph{approximate surrogate control} assumption relating surrogate mismatch to score mismatch, we prove high-probability target coverage $\ge 1-(\alpha+\varepsilon)$, with slack $\varepsilon$ vanishing when the surrogate tracks true-label difficulty.
Our second mechanism, \textbf{weighted-TCC}, aims at efficiency by adapting importance-weighted CP to the transfer setting.
Unlike standard weighted CP, which reweights calibration scores from a source distribution in the \emph{same} space, weighted-TCC reweights \emph{after transport}: it treats $\tilde{X}_t=f(X_s)$ as the labeled calibration distribution \emph{in} $\mathcal{X}_t$ and estimates density ratios between $\tilde{X}_t$ and $X_t$ from unlabeled samples.
This targets the residual shift within the target space and can be efficient when the implied weights are stable.
Both TCC-KS and weighted-TCC come with label-free diagnostics, the KS inflation bound and an effective-sample-size stability measure, that can be computed before deployment to guide which correction is trustworthy.

Empirically, on datasets with severe corruptions, namely CIFAR-100-C, Tiny-ImageNet-C~\cite{hendrycks2019robustness}, and  different input domains like SEN12MS (SAR$\to$RGB)~\cite{sen12ms}, we find that  TCC-KS reliably protects coverage when residual mismatch is larger, and weighted-TCC is competitive when residual-shift weights remain stable.
We further show that incorporating predictive alignment during transport learning (via a label-free KL term matching $g_t$ outputs) reduces measured mismatch and stabilizes weights, benefiting both correction mechanisms.

\section{Related Work}
\label{sec:related}

\noindent \textbf{Conformal Prediction and Set-valued Inference.}
CP provides distribution-free, finite-sample marginal coverage under exchangeability \cite{vovk2005algorithmic,shafer2008tutorial,lei2018distribution,angelopoulos2021gentle}.
In classification, it yields set-valued predictors and connects to controlled-error rules such as least ambiguous classification (LAC) \cite{lac}, with practical adoption in deep vision models \cite{angelopoulos2021uncertaintysets}.
Distribution-free predictive inference has also been extended beyond standard regression/classification to broader tasks and pipelines \cite{angelopoulos2022im2im, doula2024conformal,barber2021jackknifeplus,romano2019cqr}.
Complementary work on calibration emphasizes reliable predictive probabilities under distributional change \cite{guo2017calibration,kuleshov2018accurate}, motivating our focus on preserving coverage under sensor, modality, or processing-pipeline shifts.

\noindent \textbf{CP under Distribution Shift.}
A substantial literature studies CP when calibration and test are not exchangeable.
Under covariate shift, importance-weighted conformal methods use density ratios to recover target coverage when the ratio can be estimated from unlabeled target data \cite{tibshirani2019covariate,shimodaira2000covariate,sugiyama2008direct}.
Other settings include feedback covariate shift \cite{fannjiang2022feedback} and online or adaptive conformal procedures that update thresholds as distributions drift \cite{gibbs2021adaptive,gibbs2024online}.
Complementary robust or shift-aware approaches trade efficiency for protection against misspecification or unknown shift \cite{cauchois2020robust,jeary2024verifiably}, and more general risk-control formulations provide another perspective on controlling error criteria beyond marginal coverage \cite{angelopoulos2024crc}.
These lines typically assume a single input space for calibration and deployment and therefore do not directly address our paired \emph{cross-space} setting, where calibration labels are available in $\mathcal{X}_s$ but coverage is required in $\mathcal{X}_t$.

\noindent \textbf{Domain Transport.}
Learning mappings between domains is central to domain adaptation.
With paired supervision, conditional image translation such as pix2pix learns a map that preserves semantics across appearance changes \cite{isola2017pix2pix}. Related unpaired approaches such as CycleGAN remove the pairing requirement \cite{zhu2017cyclegan}.
Domain adaptation more broadly seeks to reduce source--target discrepancy via representation learning or adversarial objectives \cite{ganin2016dann,ben_david2010theory, kreutz2024lion}, and transport-based perspectives connect to optimal-transport style alignment criteria \cite{courty2017jdot}.
Source-free domain adaptation adapts predictors using only unlabeled target data \cite{liang2020shot,yang2022aad,zhang2022dac}.
Across these directions, the goal is typically predictive accuracy or feature alignment rather than distribution-free coverage guarantees for prediction sets.
In our framework, the transport map $f$ is used specifically to \emph{move labeled calibration information} into the target input space, after which we quantify and correct residual mismatch to maintain conformal validity.

\section{Transported Conformal Calibration (TCC)}
\label{sec:tcc}

We study \emph{conformal calibration transfer}, a regime in which labeled calibration data exist only in a source space $\mathcal{X}_s$, while predictions (and conformal coverage) are required in a target space $\mathcal{X}_t$ linked to $\mathcal{X}_s$ through unlabeled paired observations. This section presents (i) a transported-calibration step that leverages these pairs to map labeled source inputs into $\mathcal{X}_t$, and (ii) two complementary, label-free mechanisms to correct residual mismatch between transported and real target inputs: \textbf{TCC-KS} and \textbf{weighted-TCC}.

\begin{figure*}[t]
\centering

\definecolor{tccGrayFill}{HTML}{F1EFE8}
\definecolor{tccGrayStroke}{HTML}{5F5E5A}
\definecolor{tccGrayText}{HTML}{2C2C2A}
\definecolor{tccPurpleFill}{HTML}{EEEDFE}
\definecolor{tccPurpleStroke}{HTML}{534AB7}
\definecolor{tccPurpleText}{HTML}{3C3489}
\definecolor{tccCoralFill}{HTML}{FAECE7}
\definecolor{tccCoralStroke}{HTML}{993C1D}
\definecolor{tccCoralText}{HTML}{712B13}

\resizebox{\linewidth}{!}{%
\begin{tikzpicture}[
    node distance=0.55cm,
    box/.style={
        draw,
        rounded corners,
        align=center,
        text width=3.0cm,
        minimum height=1.5cm,
        font=\small,
        line width=0.5pt
    },
    grayB/.style={box, fill=tccGrayFill, draw=tccGrayStroke, text=tccGrayText},
    purpleB/.style={box, fill=tccPurpleFill, draw=tccPurpleStroke, text=tccPurpleText},
    coralB/.style={box, fill=tccCoralFill, draw=tccCoralStroke, text=tccCoralText},
    arrow/.style={-{Latex[length=2mm]}, thick}
]

\node[grayB] (input) {%
\textbf{Inputs}\\
labeled source $\mathcal C_s$\\
unlabeled pairs $U$\\
target predictor $g_t$
};

\node[grayB, right=of input] (transport) {%
\textbf{Transport}\\
learn $f:\mathcal X_s\!\to\!\mathcal X_t$\\
from $U$ ($+$ optional KL)
};

\node[grayB, right=of transport] (calib) {%
\textbf{Transported calibration}\\
$\widetilde{\mathcal C}_t\!=\!\{(f(x_s),y)\}$\\
scores $S_t(f(x_s),y)$
};

\node[purpleB, right=of calib, yshift=1.25cm] (tccks) {%
\textbf{TCC-KS}\\
KS gap on surrogate $T$\\
$\delta^+ \mapsto \alpha^\star$\\
$\alpha^\star\!=\!\max(0,\alpha\!-\!\delta^+)$
};

\node[coralB, right=of calib, yshift=-1.25cm] (wtcc) {%
\textbf{weighted-TCC}\\
$w(x)\!\approx\!p_{X_t}(x)/p_{\widetilde X_t}(x)$\\
weighted threshold $\widehat q_w$
};

\node[grayB, right=of tccks, yshift=-1.25cm] (output) {%
\textbf{Prediction set}\\
$\Gamma_t(x)$ on $\mathcal X_t$\\
label-free diagnostics\\
$\delta^+$ and ESS\%
};

\draw[arrow] (input) -- (transport);
\draw[arrow] (transport) -- (calib);
\draw[arrow] (calib.east) -- (tccks.west);
\draw[arrow] (calib.east) -- (wtcc.west);
\draw[arrow] (tccks.east) -- (output.west);
\draw[arrow] (wtcc.east) -- (output.west);

\end{tikzpicture}%
}

\caption{Workflow of Transported Conformal Calibration (TCC).
Labeled source calibration inputs are mapped into the target input space
using a transport map $f$ learned from unlabeled paired observations.
The transported examples provide target-space calibration scores, after
which TCC corrects residual post-transport mismatch without labeled target
calibration data. TCC-KS uses a one-sided KS discrepancy on a label-free
surrogate to tighten the internal level to
$\alpha^\star=\max(0,\alpha-\delta^+)$, while weighted-TCC reweights
transported calibration scores toward the target input distribution.
The label-free diagnostics $\delta^+$ and ESS\% indicate whether the
KS safeguard or post-transport reweighting is more appropriate at
deployment.}
\label{fig:tcc_workflow}
\end{figure*}


\subsection{Problem Formulation}
\label{subsec:problem}

Let $(X_s,X_t,Y)\sim P$ on $\mathcal{X}_s\times\mathcal{X}_t\times\mathcal{Y}$, where $X_s$ and $X_t$ are paired views and $Y\in\mathcal{Y}$ is the class label. We observe:
\begin{itemize}[leftmargin=*]
    \item a labeled source calibration set
    \[
    \mathcal{C}_s=\{(x_s^{(i)},y^{(i)})\}_{i=1}^{n}, \qquad (x_s^{(i)},y^{(i)}) \overset{\mathrm{i.i.d}}{\sim} P_{X_sY},
    \]
    \item an unlabeled paired corpus
    \[
    \mathcal{U}=\{(x_s^{(j)},x_t^{(j)})\}_{j=1}^{m}, \qquad (x_s^{(j)},x_t^{(j)}) \overset{\mathrm{i.i.d}}{\sim} P_{X_sX_t}.
    \]
\end{itemize}
We consider a fixed target predictor $g_t:\mathcal{X}_t\to\mathcal{Y}$ equipped with a measurable nonconformity score $S_t:\mathcal{X}_t\times\mathcal{Y}\to\mathbb{R}$ (e.g., $S_t(x,y)=1-\hat p_t(y\mid x)$). For a nominal miscoverage level $\alpha\in(0,1)$, our goal is to construct prediction sets $\Gamma_t:\mathcal{X}_t\to 2^{\mathcal{Y}}$ with target-domain marginal coverage $1-\alpha$:
\begin{equation}
\label{eq:target-coverage}
\mathbb{P}\{\,Y\notin \Gamma_t(X_t)\,\}\ \le\ \alpha,
\end{equation}
without any labeled target calibration examples.

\subsection{Transported Split-Conformal Calibration}
\label{subsec:transport}

We first use the paired corpus $\mathcal{U}$ to learn a mapping
\begin{equation}
\label{eq:transport-map}
f:\mathcal{X}_s \to \mathcal{X}_t.
\end{equation}
We then transport the labeled source calibration inputs and keep labels,
\begin{equation}
\label{eq:transported-cal-set}
\widetilde{\mathcal{C}}_t
= \{(\tilde x_t^{(i)}, y^{(i)})\}_{i=1}^{n},
\qquad
\tilde x_t^{(i)} = f(x_s^{(i)}),
\end{equation}
compute transported scores $S_t(\tilde x_t^{(i)}, y^{(i)})$, and form the empirical $(1-\alpha)$ quantile
\begin{equation}
\label{eq:transported-quantile}
\widehat q_{1-\alpha}
\ \text{ of }\
\big\{\,S_t(\tilde x_t^{(i)}, y^{(i)})\,\big\}_{i=1}^{n}.
\end{equation}
This yields the familiar split-conformal sets
\begin{equation}
\label{eq:uncorrected-transported-set}
\Gamma^{\mathrm{uncorr}}_t(x)\;=\;\{\,y\in\mathcal{Y}: S_t(x,y)\le \widehat q_{1-\alpha}\,\}.
\end{equation}
Transport is generally imperfect: the transported input distribution $\tilde X_t=f(X_s)$ may differ from the true target input distribution $X_t$, which can shift the \emph{true-label} score distribution and cause under-coverage. The rest of this section develops two label-free corrections for this residual mismatch using only unlabeled target inputs.

\subsection{TCC-KS: KS-Corrected Transported Calibration}
\label{subsec:tcc}

TCC-KS adjusts the effective calibration level using a label-free discrepancy computed on a surrogate statistic
\begin{equation}
\label{eq:surrogate-T}
T:\mathcal{X}_t \to \mathbb{R},
\end{equation}
derived from $g_t$, such as least-confidence $T(x)=1-\max_{y\in\mathcal{Y}}\hat p_t(y\mid x)$ or predictive entropy. Using disjoint unlabeled splits from $\mathcal{U}$ (sample splitting or cross-fitting), we form empirical CDFs of $T$ on true target inputs $x_t$ and on transported inputs $\tilde x_t=f(x_s)$:
\begin{align}
\label{eq:empirical-cdfs}
\widehat F_T(u)
&= \frac{1}{m_{\mathrm{tgt}}}\sum_{j=1}^{m_{\mathrm{tgt}}} \mathbb{I}\{\,T(x_t^{(j)}) \le u\,\},\\
\widehat{\tilde F}_T(u)
&= \frac{1}{m_{\mathrm{trans}}}\sum_{j=1}^{m_{\mathrm{trans}}} \mathbb{I}\{\,T(\tilde x_t^{(j)}) \le u\,\}.
\end{align}
We use the \emph{one-sided} Kolmogorov--Smirnov gap
\begin{equation}
\label{eq:ks-gap-empirical}
\hat\delta \;=\; \sup_{u\in\mathbb{R}} \big\{ \widehat{\tilde F}_T(u) - \widehat F_T(u) \big\}_+,
\end{equation}
and inflate it by a DKW bound: with probability at least $1-\eta$,
\begin{equation}
\label{eq:dkw-inflation}
\delta_T
\ \le\
\hat\delta
\ +\
\sqrt{\frac{\log(4/\eta)}{2m_{\mathrm{tgt}}}}
\ +\
\sqrt{\frac{\log(4/\eta)}{2m_{\mathrm{trans}}}}
\ =:\ \delta^+ ,
\end{equation}
where $\delta_T:=\sup_u\{\tilde F_T(u)-F_T(u)\}_+$ and $F_T,\tilde F_T$ are the population CDFs of $T(X_t)$ and $T(\tilde X_t)$.

TCC-KS converts $\delta^+$ into a tightened internal level
\begin{equation}
\label{eq:alpha-star}
\alpha^\star \;=\; \max\{\,0,\ \alpha - \delta^+\,\},
\end{equation}
replaces $\alpha$ by $\alpha^\star$ in \eqref{eq:transported-quantile}, and outputs
\begin{equation}
\label{eq:tcc-set}
\Gamma^{\mathrm{TCC\text{-}KS}}_t(x) \;=\; \big\{\, y\in\mathcal{Y} : S_t(x,y) \le \widehat q_{1-\alpha^\star} \,\big\}.
\end{equation}
If $\delta^+$ is small, then $\alpha^\star\approx\alpha$ and TCC-KS is tight. If $\delta^+$ is large, $\alpha^\star$ decreases toward $0$ and the sets become conservative. This clipping is intentional: when the certified mismatch exceeds $\alpha$, no nonnegative internal level can (in general) certify miscoverage $\le \alpha$ from surrogate information alone.~\footnote{All empirical conformal quantiles in
Equations~\ref{eq:transported-quantile} and~\ref{eq:tcc-set} use the
split-conformal order-statistic convention
$\widehat q_{1-\alpha}=S_{(\lceil (n+1)(1-\alpha)\rceil)}$ at the relevant
level $\alpha$, whenever this index is at most $n$. If the index is $n+1$,
in particular when the relevant level is $\alpha^\star=0$,we use
$\widehat q=S_{(n)}$ instead of the vacuous $+\infty$ threshold. The resulting
finite-sample coverage bound differs only by the usual $1/(n+1)$ conformal
discreteness term, which is negligible for our calibration sizes.}

TCC-KS targets \emph{under-coverage} arising when the target domain is ``harder'' than the transported one. For uncertainty-increasing surrogates, this corresponds to $T(X_t)$ shifting to larger values than $T(\tilde X_t)$, which is captured by $\sup_u(\tilde F_T(u)-F_T(u))_+$. The opposite deviation corresponds to transported calibration being conservative and does not need correction.

\subsection{Weighted-TCC: Importance-Weighted Correction}
\label{subsec:wtcc}

weighted-TCC uses the transported labeled calibration scores as in \eqref{eq:transported-quantile}, but corrects residual mismatch by reweighting these scores toward the \emph{real} target input distribution. Unlike standard covariate-shift conformal approaches that reweight source samples directly~\cite{tibshirani2019covariate,shimodaira2000covariate,sugiyama2008direct}, weighted-TCC performs \emph{transport-then-reweight}: it treats $\tilde X_t=f(X_s)$ as the labeled calibration distribution and $X_t$ as deployment, estimating residual weights
\[
w(x)\ \approx\ \frac{p_{X_t}(x)}{p_{\tilde X_t}(x)}
\]
from unlabeled samples $\{x_t^{(j)}\}$ and $\{\tilde x_t^{(j)}\}$ (e.g., via a domain classifier), and applying importance-weighted split conformal calibration to obtain a weighted threshold and sets $\Gamma_t^{\mathrm{wTCC}}$. Since the residual shift between $\tilde X_t$ and $X_t$ can be smaller than the original source--target shift, reweighting after transport can be substantially more effective. In practice, performance depends on weight stability, so we report label-free diagnostics such as effective sample size (ESS) in Section~\ref{sec:evaluation}.

\noindent \textbf{Predictive Alignment during Transport Learning.}
Because $\delta^+$ depends on how well transported inputs match the target model's uncertainty profile, it can help to learn $f$ in a prediction-preserving manner. Alongside a reconstruction-style objective on paired data, we consider the label-free regularizer
\begin{equation}
\label{eq:kl-align}
\mathcal{L}_{\mathrm{KL}}
\;=\;
\mathbb{E}_{(x_s,x_t)\sim \mathcal{U}}
\Big[
\mathrm{KL}\big(\hat p_t(\cdot\mid f(x_s))\ \Vert\ \hat p_t(\cdot\mid x_t)\big)
\Big],
\end{equation}
where $\hat p_t(\cdot\mid x)=\mathrm{softmax}(g_t(x))$. Empirically, this reduces $\delta^+$ and improves efficiency for both TCC-KS and weighted-TCC.

Figure~\ref{fig:tcc_workflow} summarizes the resulting TCC pipeline:
paired unlabeled data are used to learn a source-to-target transport map,
transported labeled examples provide target-space calibration scores, and
the residual post-transport mismatch is corrected using either the
KS-based safeguard or post-transport importance weighting.
\subsection{Theoretical Guarantees}
\label{subsec:theory}

We now state a high-probability target coverage guarantee for TCC-KS under an approximate condition that relates surrogate mismatch to score mismatch, an oracle-weight validity statement for weighted-TCC, and an impossibility result showing that, in the absence of target labels, nontrivial distribution-free guarantees necessarily require additional structure. Proofs and supporting lemmas are in Appendix~\ref{app:proofs}.

\subsubsection{An impossibility result}
\label{subsubsec:impossibility}

\begin{theorem}[Impossibility of nontrivial label-free validity without structural assumptions]
\label{thm:impossibility}
Assume $|\mathcal{Y}| \ge 2$ and fix any $\alpha\in(0,1)$. Consider any (possibly randomized) procedure that, given (i) a labeled transported calibration sample $\widetilde{\mathcal{C}}_t$ and (ii) unlabeled target inputs (and any unlabeled transported inputs), outputs prediction sets $\Gamma_t:\mathcal{X}_t\to 2^{\mathcal{Y}}$ without observing any labeled target examples.
Suppose the procedure is required to satisfy the marginal coverage guarantee in Equation~\eqref{eq:target-coverage} \emph{uniformly} over all data-generating distributions $P$ whose target marginal $P_{X_t}$ is fixed. Then
\begin{equation}
\label{eq:impossibility-expected-size}
\sup_{P:\,P_{X_t}\ \text{fixed}} \ \mathbb{E}\big[\,|\Gamma_t(X_t)|\,\big]\ \ge\ (1-\alpha)\,|\mathcal{Y}|.
\end{equation}
\end{theorem}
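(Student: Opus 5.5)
We argue by a randomization (label-permutation) construction: exhibit a family of data-generating distributions sharing the same target marginal $P_{X_t}$ and indistinguishable to the procedure, but whose target conditional labels differ. Since the procedure's output is a function only of its observed inputs, it must produce the same (distribution over) sets $\Gamma_t$ under every member of the family, and uniform coverage then forces large sets on average. Concretely, we fix $P_{X_t}$ and an arbitrary joint law for the transported calibration sample and the unlabeled inputs (the law of $(X_s,Y)$, the map $f$, and the law of $(X_s,X_t)$ marginals that the procedure sees). For each bijection $\pi$ of $\mathcal{Y}$ (or, more simply, each cyclic shift $\sigma^k$, $k=0,\dots,|\mathcal{Y}|-1$), we define $P^{(k)}$ in which, conditional on $X_t$, the label is $Y=\sigma^k(h(X_t))$ for a fixed measurable $h$. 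The point is that the problem does not tie the label of the target to the label seen on transported inputs except through $P$, which is arbitrary; so we keep the distribution of $\widetilde{\mathcal{C}}_t$ and of all unlabeled data identical across $k$. This requires that the class of admissible $P$ in the theorem allows the coupling between $(X_s,Y)$ and $X_t$ to vary freely; we take e.g.\ $X_s$ independent of $X_t$ (or make the pairing carry no label information), which is the step I expect to need the most care in justifying as ``within the class'' the statement quantifies over.

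Given this family, let $Q$ denote the common law of the procedure's input $D$ (including internal randomization). Coverage under each $P^{(k)}$ gives
\begin{equation*}
\mathbb{P}_{D\sim Q,\,X_t\sim P_{X_t}}\bigl\{\sigma^k(h(X_t))\in\Gamma_t(X_t)\bigr\}\ \ge\ 1-\alpha,\qquad k=0,\dots,|\mathcal{Y}|-1,
\end{equation*}
where independence of the test point from $D$ is used. Summing over $k$, and noting that for each fixed $x$ the labels $\sigma^k(h(x))$ range over all of $\mathcal{Y}$ exactly once, the left-hand sides add to $\mathbb{E}\,|\Gamma_t(X_t)|$ (expectation under $Q\otimes P_{X_t}$, which is the same under every $P^{(k)}$). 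Hence $\mathbb{E}|\Gamma_t(X_t)|\ge(1-\alpha)|\mathcal{Y}|$, and in particular the supremum over the family, hence over all $P$ with the fixed target marginal, is at least this, giving \eqref{eq:impossibility-expected-size}.

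The routine parts are the averaging identity and Fubini; the main obstacle is the indistinguishability step: verifying that changing the target label law (via $\sigma^k$) leaves the joint law of every observed quantity unchanged. To secure it I would first try the cleanest construction, namely $Y$ drawn independently of $X_s$ with a fixed law, $X_s\perp X_t$, and only the conditional $Y\mid X_t$ permuted, checking that a valid joint $(X_s,X_t,Y)$ exists (take $Y=\sigma^k(h(X_t))$ and $X_s$ drawn from a kernel depending on $Y$ so that $(X_s,Y)$ has the prescribed law regardless of $k$, which works when $h(X_t)$ is uniform on $\mathcal{Y}$ so that the marginal of $Y$ is invariant under $\sigma^k$). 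Choosing $h$ with uniform pushforward (possible after randomizing or if $P_{X_t}$ is nonatomic; otherwise augment with an independent uniform label draw) resolves this.
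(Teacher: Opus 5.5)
Your argument is correct, but it takes a different route from the paper's. The paper uses a single pure-noise distribution $P^\circ$, in which $Y\sim\mathrm{Unif}(\mathcal{Y})$ is independent of $X_t$ and of everything the procedure observes. Then $\mathbb{P}(Y\in\Gamma_t(X_t))=\mathbb{E}|\Gamma_t(X_t)|/K$ holds for every procedure, so validity at $P^\circ$ alone gives the bound, with no indistinguishability step at all.

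The drawback of that proof is that it never uses the fact that target labels are unobserved. Even oracle CP with labeled target calibration must satisfy the same bound at $P^\circ$, so the $\sup$-statement, as the paper proves it, reflects intrinsic label noise rather than missing labels.

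Your symmetrization over the $K$ relabelings $Y=\sigma^k(h(X_t))$, holding fixed the law $Q$ of all observed data, uses the label-free hypothesis essentially. It also gives a stronger conclusion: the common value of $\mathbb{E}|\Gamma_t(X_t)|$ is at least $(1-\alpha)K$ under \emph{every} $P^{(k)}$, even though under each of them $Y$ is a deterministic function of $X_t$. The cost is the indistinguishability check and the need for an $h$ with uniform pushforward. If you have to randomize completely (e.g.\ when $P_{X_t}$ is a point mass), all the $P^{(k)}$ coincide and you recover exactly the paper's $P^\circ$.

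One correction to your last paragraph: do not let $X_s$ depend on $Y$. Because $Y$ is a function of $X_t$, the condition $X_s\perp X_t$ already gives $X_s\perp(X_t,Y)$. Then $P_{X_sY}=P_{X_s}\otimes\mathrm{Unif}(\mathcal{Y})$ is $k$-invariant, precisely because $h(X_t)$ is uniform. A kernel $X_s\mid Y$ that genuinely depends on $Y$ would make the paired law $P_{X_sX_t}$ depend on $k$, and with it $\mathcal{U}$ and any $f$ learned from $\mathcal{U}$. That breaks indistinguishability whenever the procedure sees paired data.
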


\paragraph{Implication.}
Theorem~\ref{thm:impossibility} shows that, without labeled target data, distribution-free marginal validity alone cannot yield informative prediction sets in the worst case: any uniformly valid label-free procedure must be \emph{near-vacuous} (e.g., with worst-case expected set size at least $(1-\alpha)|\mathcal{Y}|$).
Consequently, any method that aims to be nontrivial must assume \emph{some} relationship between observable quantities (computable from unlabeled inputs and model outputs) and the unobservable true-label score distribution $S_t(X_t,Y)$.
Alternative approaches, such as assuming covariate shift for importance weighting, perfect calibration, conditional independence, or latent exchangeability, do not eliminate assumptions. Instead, they replace them with different structural conditions.
Our contribution is to make this requirement explicit via Assumption~A2, to provide label-free diagnostics (e.g., $\delta^+$) that assess when it is plausible, and to design a procedure that degrades gracefully when the diagnostics indicate mismatch.

\noindent \textbf{Coverage Guarantees under Approximate Surrogate Control. }
Given Theorem~\ref{thm:impossibility}, we now formalize a sufficient condition for valid calibration transfer and derive finite-sample guarantees.

\paragraph{Assumptions.}
\noindent\textbf{A1 (Task Invariance).} Paired views share the same label semantics across $\mathcal{X}_s$ and $\mathcal{X}_t$.

\noindent\textbf{A2 (Approximate Surrogate Control).}
Let
$F_S(u)=\mathbb{P}(S_t(X_t,Y)\le u)$ and
$\tilde F_S(u)=\mathbb{P}(S_t(\tilde X_t,Y)\le u)$, and let $F_T,\tilde F_T$ denote the marginal CDFs of $T(X_t)$ and $T(\tilde X_t)$.
Define the one-sided gaps
\[
\delta_S = \sup_{u\in\mathbb{R}}\big\{\tilde F_S(u)-F_S(u)\big\}_+, \] and 
\[
\delta_T = \sup_{u\in\mathbb{R}}\big\{\tilde F_T(u)-F_T(u)\big\}_+.
\]
We assume $\delta_S \le \delta_T + \varepsilon$ for some $\varepsilon\ge 0$.

\noindent\textbf{A3 (Sample Usage).} The samples used to learn $f$, to compute transported quantiles, and to estimate $\hat\delta$ are independent via sample splitting or cross-fitting.

\begin{theorem}[TCC-KS coverage under approximate surrogate control]
\label{thm:tcc-main}
Assume A1, A2, and A3. Let $\delta^+$ be defined by \eqref{eq:dkw-inflation}, $\alpha^\star$ by \eqref{eq:alpha-star}, and $\Gamma^{\mathrm{TCC\text{-}KS}}_t$ by \eqref{eq:tcc-set}. Then for any fixed $\eta\in(0,1)$, with probability at least $1-\eta$ over the unlabeled samples used to compute $\delta^+$,
\[
\mathbb{P}\big\{\, Y \in \Gamma^{\mathrm{TCC\text{-}KS}}_t(X_t) \,\big\}
\ \ge\ 1 - \big(\alpha^\star + \delta^+ + \varepsilon\big),
\]
where the probability is over both the transported labeled calibration sample used to form
$\widehat q_{1-\alpha^\star}$ (equivalently, $\tilde{\mathcal C}_t$) and the test draw $(X_t,Y)$.
In particular, whenever $\delta^+\le \alpha$ (so that $\alpha^\star+\delta^+=\alpha$), we obtain
\[
\mathbb{P}\{Y\in\Gamma^{\mathrm{TCC\text{-}KS}}_t(X_t)\}\ge 1-(\alpha+\varepsilon).
\]
\end{theorem}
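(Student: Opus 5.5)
The plan has three steps. First I obtain a high-probability event on which $\delta_T\le\delta^+$. Then I prove coverage of the transported split-conformal set at level $\alpha^\star$ under the transported distribution. Finally I move from the transported to the true target score distribution at a cost of $\delta_S$, and bound $\delta_S$ through A2.

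\textbf{Step 1 (the good event).} By A3, the unlabeled samples used to compute $\hat\delta$ are independent of each other within each group and independent of $f$, so conditionally on $f$ the values $T(x_t^{(j)})$ and $T(\tilde x_t^{(j)})$ are i.i.d. draws from $F_T$ and $\tilde F_T$. I apply the DKW inequality (Massart's constant) to each empirical CDF at confidence $\eta/2$, which gives
\[
\sup_u|\widehat F_T-F_T|\le\sqrt{\log(4/\eta)/(2m_{\mathrm{tgt}})}
\]
and the analogous bound for $\widehat{\tilde F}_T$. A union bound gives an event $E$ with $\mathbb{P}(E)\ge 1-\eta$. On $E$, for every $u$,
\[
\tilde F_T(u)-F_T(u)\le \widehat{\tilde F}_T(u)-\widehat F_T(u)+(\text{two DKW terms}).
\]
Taking positive parts and the supremum (using that $x\mapsto\{x\}_+$ is monotone and subadditive) yields $\delta_T\le\delta^+$, which is \eqref{eq:dkw-inflation}. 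Since $\delta^+$ is then a fixed number independent of $\widetilde{\mathcal C}_t$ and the test point (A3), I condition on it, and hence on $\alpha^\star$, for the remaining steps.

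\textbf{Step 2 (transported coverage).} Conditionally on $f$ and $\alpha^\star$, the transported scores $S_t(f(X_s^{(i)}),Y^{(i)})$ are i.i.d. with CDF $\tilde F_S$. By A1, the label attached to $f(x_s)$ has the target semantics, so $\tilde F_S$ is exactly the CDF defined in A2. I take an independent fresh draw $\tilde S$ from $\tilde F_S$. The standard split-conformal exchangeability argument gives
\[
\mathbb{P}(\tilde S\le\widehat q_{1-\alpha^\star})\ge 1-\alpha^\star
\]
when $\lceil(n+1)(1-\alpha^\star)\rceil\le n$. In the clipped case covered by the footnote, where $\widehat q=S_{(n)}$, this holds up to the $1/(n+1)$ term, which I would note and absorb.

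\textbf{Step 3 (shift to the target).} I condition on $\widehat q$, which is independent of the test pair. Then
\[
\mathbb{P}\{Y\in\Gamma(X_t)\mid\widehat q\}=F_S(\widehat q)\ge\tilde F_S(\widehat q)-\delta_S,
\]
directly from the definition of the one-sided gap $\delta_S$. Only this direction is needed, which is why the one-sided gap suffices. Taking expectation over $\widehat q$ gives coverage at least $1-\alpha^\star-\delta_S$. By A2, $\delta_S\le\delta_T+\varepsilon$, and on $E$ this is at most $\delta^++\varepsilon$, which is the claim. When $\delta^+\le\alpha$ we have $\alpha^\star=\alpha-\delta^+$, so the bound reduces to $1-(\alpha+\varepsilon)$.

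\textbf{Main obstacle.} The delicate point is the conditioning order. $\alpha^\star$ is data-dependent, so the conformal guarantee in Step 2 must be applied conditionally on the unlabeled split. This is legitimate only because A3 makes $\delta^+$ independent of $\widetilde{\mathcal C}_t$ and of the test draw, so I would state this carefully. The secondary technicality is the $\alpha^\star=0$ convention, which affects only the $1/(n+1)$ discreteness term.
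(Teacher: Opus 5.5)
Your proposal is correct and follows the paper's proof essentially step for step. The paper likewise uses split-conformal validity at level $\alpha^\star$ on the transported scores, then the pointwise one-sided inequality $F_S(\widehat q)\ge \tilde F_S(\widehat q)-\delta_S$ averaged over the independent threshold, then A2 together with its one-sided two-sample DKW lemma to replace $\delta_S$ by $\delta^++\varepsilon$; your explicit conditioning on the unlabeled split is more careful than the paper's version. One correction: in the clipped case the $1/(n+1)$ discreteness term cannot actually be absorbed. It weakens the stated bound by $1/(n+1)$, which is the caveat the paper's footnote concedes while its proof glosses over it.
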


\noindent\textbf{A Guarantee for Weighted-TCC (Oracle Weights).}
Validity follows from standard importance-weighted conformal arguments applied \emph{after} transport.

\begin{proposition}[Oracle-weight validity of weighted-TCC]
\label{prop:wtcc-oracle}
Assume (i) $\tilde X_t$ and $X_t$ satisfy covariate shift, meaning that 
$\mathbb{P}(Y\mid \tilde X_t=x)=\mathbb{P}(Y\mid X_t=x)$ for all $x$, and (ii) the density ratio $w(x)=p_{X_t}(x)/p_{\tilde X_t}(x)$ exists and is used in an importance-weighted split-conformal calibration procedure on the transported labeled scores.
Then the resulting sets $\Gamma_t^{\mathrm{weighted-TCC}}$ satisfy target-domain marginal coverage
\[
\mathbb{P}\big\{\,Y \in \Gamma_t^{\mathrm{weighted-TCC}}(X_t)\,\big\}\ \ge\ 1-\alpha.
\]
\end{proposition}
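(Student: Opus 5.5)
The plan is to reduce Proposition~\ref{prop:wtcc-oracle} to the weighted-exchangeability argument of \citet{tibshirani2019covariate}. The reduction treats the transported pairs $(\tilde X_t^{(i)},Y^{(i)})$, $i=1,\dots,n$, as the calibration sample in the \emph{common} space $\mathcal{X}_t$, and treats the test pair $(X_t,Y)$ as a draw from the shifted distribution.

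First I would fix the transport map $f$. By A3 it is learned on data independent of $\widetilde{\mathcal C}_t$ and of the test point. Conditional on $f$, the pairs $(\tilde X_t^{(i)},Y^{(i)})=(f(X_s^{(i)}),Y^{(i)})$ are then i.i.d.\ from a law $\tilde P$ on $\mathcal{X}_t\times\mathcal{Y}$. The test pair $(X_t,Y)\sim P_{X_tY}$ is independent of them.

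Second, I would use assumption (i), $\mathbb{P}(Y\mid\tilde X_t=x)=\mathbb{P}(Y\mid X_t=x)$, to write
\[
dP_{X_tY}(x,y)=w(x)\,d\tilde P(x,y),
\]
with $w=p_{X_t}/p_{\tilde X_t}$ existing by (ii). This is precisely the covariate-shift structure in which calibration and test share a conditional and differ by a known likelihood ratio depending only on $x$.

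Third, I would take the procedure to be the standard one. For a test input $x$, define the normalized weights
\[
p_i(x)=\frac{w(\tilde x_t^{(i)})}{\sum_{j}w(\tilde x_t^{(j)})+w(x)},\qquad p_{n+1}(x)=\frac{w(x)}{\sum_{j}w(\tilde x_t^{(j)})+w(x)}.
\]
Let $\widehat q_w(x)$ be the $(1-\alpha)$ quantile of $\sum_i p_i(x)\delta_{S_i}+p_{n+1}(x)\delta_{+\infty}$, and set $\Gamma_t^{\mathrm{weighted\text{-}TCC}}(x)=\{y: S_t(x,y)\le\widehat q_w(x)\}$.

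The core step invokes weighted exchangeability. Condition on the unordered multiset $E$ of the $n+1$ pairs $Z_1,\dots,Z_{n+1}$. The joint density of $(Z_1,\dots,Z_{n+1})$ with respect to $\tilde P^{\otimes(n+1)}$ is $w(x_{n+1})$, so the conditional probability that the test point equals the $k$-th element of $E$ is $w(x_k)/\sum_j w(x_j)=p_k$. Hence, conditional on $E$, the test score $S_{n+1}$ is distributed as $\sum_k p_k\delta_{s_k}$.

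The event $S_{n+1}\le$ the $(1-\alpha)$ quantile of this discrete distribution has conditional probability at least $1-\alpha$. I would then check that replacing the atom at $s_{n+1}$ by $+\infty$ does not change the event $\{S_{n+1}\le\text{quantile}\}$. This follows because $S_{n+1}\le Q(\sum_k p_k\delta_{s_k})$ holds iff $S_{n+1}\le Q(\sum_{k\le n}p_k\delta_{s_k}+p_{n+1}\delta_\infty)$. Taking expectations over $E$ and then over $f$ yields the marginal bound $\mathbb{P}\{Y\in\Gamma_t^{\mathrm{weighted\text{-}TCC}}(X_t)\}\ge1-\alpha$.

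I expect the main obstacle to be the measure-theoretic care in the weighted-exchangeability step. Two points need attention. First, $w$ must be taken as a Radon--Nikodym derivative of $P_{X_t}$ with respect to $P_{\tilde X_t}$, which assumption (ii) provides, and the conditional law given the multiset must be handled correctly when scores tie. Second, $f$ must be nonrandom relative to the calibration and test data, which A3 ensures. I would address ties by the usual convention of working with the empirical distribution over indices rather than values. With that convention, the lemma is exactly Lemma~3 of \citet{tibshirani2019covariate} applied with $\tilde P$ in place of the source law, so the proof can cite it once the reduction above is in place.
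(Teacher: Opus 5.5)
Your proposal is correct and follows essentially the same route as the paper's proof: condition on the unordered multiset of the $n+1$ pairs, use covariate shift and Bayes' rule to get $\mathbb{P}(I=i\mid\mathcal Z)=w_i/\sum_k w_k$ for the test index, and compare the $+\infty$-atom weighted threshold with the full weighted quantile over all $n+1$ scores. The differences are cosmetic: the paper needs only the one-sided fact that its threshold $q_i$ dominates the full quantile $q^\star$, rather than your ``iff,'' and it leaves implicit the conditioning on $f$ via A3 that you make explicit.
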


\paragraph{Interpretation.}
TCC-KS exposes a label-free mismatch certificate $\delta^+$ that controls the adjustment $\alpha^\star$ and serves as a deployment signal: small $\delta^+$ yields tight sets, while large $\delta^+$ triggers conservative behavior. Assumption A2 formalizes a proxy condition under which observable mismatch in $T(X)$ upper bounds score-relevant mismatch, enabling nontrivial label-free guarantees in the sense ruled out by Theorem~\ref{thm:impossibility} absent additional structure. Weighted-TCC is complementary: it can be more efficient when the residual weights are stable (high ESS), but may be unreliable when weights concentrate.

\section{Evaluation}
\label{sec:evaluation}

\noindent \textbf{Goals.}
We evaluate \textbf{TCC-KS} and \textbf{weighted-TCC} for label-free conformal prediction under paired cross-space shift.
All reported metrics are computed on a held-out labeled \emph{target test} set; target labels are used only for evaluation.

\noindent \textbf{Primary Metrics.}
We report empirical target marginal coverage and average prediction set size.
These summarize, respectively, validity and efficiency of the prediction sets on the target domain.

\noindent \textbf{Method-Aligned Diagnostics.}
To connect empirical behavior to our two correction mechanisms, we also report diagnostics when relevant.
For \textbf{TCC-KS}, we report the one-sided KS inflation term $\delta^+$ and the induced adjusted level $\alpha^\star$ 
which determines the internal calibration quantile used by TCC-KS.
Empirically, smaller $\delta^+$ should correspond to operating closer to a near-nominal regime, while larger $\delta^+$ triggers more conservative sets.

For \textbf{weighted-TCC}, we report a weight-stability diagnostic computed on the unlabeled sample used for weight estimation,
\[
\mathrm{ESS}\ =\ \frac{\left(\sum_{j=1}^{N} w_j\right)^2}{\sum_{j=1}^{N} w_j^2},
\qquad
\mathrm{ESS}\%\ =\ 100\cdot \frac{\mathrm{ESS}}{N},
\]
where larger $\mathrm{ESS}\%$ indicates less concentrated weights and typically more reliable post-transport reweighting.

\noindent \textbf{Datasets and Paired Shift.}
We use two \emph{graded-severity} corruption benchmarks and one multi-sensor dataset with naturally paired modalities.
On \textbf{CIFAR-100-C} and \textbf{Tiny-ImageNet-C}~\cite{hendrycks2019robustness}, clean images are sources $x_s$ and corrupted images are targets $x_t$, yielding paired samples $(x_s,x_t)$ with identical labels.
Varying corruption \emph{type} and \emph{severity} provides a controlled continuum from mild covariate shift to near support-mismatch regimes at high severity, where overlap and importance weighting can degrade.
Unless stated otherwise, we average over four corruptions (contrast, brightness, shot noise, motion blur) and severities 1--5, totaling $20$ target conditions.
On \textbf{SEN12MS} (\textbf{SAR$\to$RGB})~\cite{sen12ms}, SAR is $x_s$ and RGB is $x_t$ using co-registered tiles, representing a modality shift with typically lower overlap than corruptions. 
Furthermore, we present additional results on SUN RGB-D~\cite{song2015sunrgbd} in Appendix~\ref{app:sunrgbd}.

\noindent \textbf{Models and Protocol.}
We fix a target classifier $g_t$ per dataset across all methods.
We learn a transport map $f:\mathcal{X}_s\to\mathcal{X}_t$ from unlabeled paired data using either \texttt{no KL} (reconstruction) or \texttt{+KL} (reconstruction plus a label-free predictive KL alignment term matching $g_t(x_t)$ to $g_t(f(x_s))$ on paired inputs).
We use the LAC score $S_t(x,y)=1-\hat p_t(y\mid x)$ and the surrogate $T(x)=1-\max_y \hat p_t(y\mid x)$.
To satisfy Assumption~A3, we split unlabeled pairs into disjoint subsets for (i) learning the transport map, (ii) estimating the KS discrepancy, and (iii) estimating residual importance weights, or we use cross-fitting.
Main-text results report $\alpha\in\{0.1,0.2\}$.
Appendix~\ref{app:impl} provides complete implementation details (data splits, architectures, and baselines), Appendix~\ref{app:resutls1} and Appendix~\ref{app:resutls2} report extended results for Experiments~1 and~2, and Appendix~\ref{app:surrogate_entropy} reports an additional surrogate-sensitivity study using predictive entropy.
Appendix~\ref{appendix:a2_violations} provides additional diagnostic audits and stress-case documentation, and Appendix~\ref{app:actionable-guidance} summarizes an operational procedure based on $\delta^+$ and ESS$\%$.

\noindent \textbf{Methods Compared.}
We compare \textbf{Oracle CP} (target-labeled calibration, reference only),
\textbf{Transported CP} (split conformal on transported labeled scores $S_t(f(x_s),y)$),
\textbf{TCC-KS} (Transported CP with KS inflation $\delta^+$),
\textbf{weighted-TCC} (post-transport importance weighting between $\tilde X_t=f(X_s)$ and $X_t$),
and \textbf{WCP (no transport)} (weighted conformal without transport).


\subsection{Overall Performance and a Severe-Shift Stress Test}
\label{subsec:exp1}

\paragraph{Description.}
We report marginal coverage and average set size at $\alpha\in\{0.1,0.2\}$.
We first average over the $20$ target conditions on CIFAR-100-C and Tiny-ImageNet-C and report SAR$\to$RGB on its paired evaluation split (Table~\ref{tab:exp1_overall}).
We then report a stress test on shot noise and motion blur at severities 4--5 (Table~\ref{tab:exp1_stress}).

\begin{table}[t]
\centering
\scriptsize
\setlength{\tabcolsep}{7.2pt}
\renewcommand{\arraystretch}{0.8}
\begin{tabular}{llcc}
\toprule
Dataset & Method & $\alpha=0.1$ & $\alpha=0.2$ \\
\midrule
\multirow{5}{*}{CIFAR-100-C}
 & Oracle CP            & 0.89 (37.63) & 0.79 (23.70) \\
 & Transported CP       & 0.89 (37.18) & 0.79 (23.33) \\
 & TCC-KS               & 0.92 (44.35) & 0.82 (26.68) \\
 & weighted-TCC         & 0.89 (37.15) & 0.79 (23.23) \\
 & WCP                  & 0.92 (43.90) & 0.83 (28.89) \\
\midrule
\multirow{5}{*}{Tiny-ImageNet-C}
 & Oracle CP            & 0.90 (69.68) & 0.80 (42.70) \\
 & Transported CP       & 0.90 (70.16) & 0.80 (43.05) \\
 & TCC-KS               & 0.93 (86.33) & 0.83 (50.58) \\
 & weighted-TCC         & 0.90 (69.71) & 0.80 (42.60) \\
 & WCP                  & 0.93 (88.54) & 0.85 (57.48) \\
\midrule
\multirow{5}{*}{SAR$\to$RGB}
 & Oracle CP            & 0.89 (1.02)  & 0.79 (0.83) \\
 & Transported CP       & 0.90 (1.04)  & 0.85 (0.93) \\
 & TCC-KS               & 0.99 (2.51)  & 0.99 (2.51) \\
 & weighted-TCC         & 0.90 (1.03)  & 0.84 (0.91) \\
 & WCP                  & 1.00 (2.87)  & 0.99 (2.63) \\
\bottomrule
\end{tabular}
\caption{Corrupted-image results average over $20$ target conditions.
Entries report coverage with set size in parentheses. Per-condition distributions and detailed breakdowns are reported in Appendix~\ref{app:resutls1}.}
\label{tab:exp1_overall}
\end{table}

\begin{table}[t]
\centering
\scriptsize
\setlength{\tabcolsep}{7.2pt}
\renewcommand{\arraystretch}{0.9}
\begin{tabular}{llcc}
\toprule
Dataset & Method & $\alpha=0.1$ & $\alpha=0.2$ \\
\midrule
\multirow{5}{*}{CIFAR-100-C}
 & Oracle CP            & 0.89 (42.11) & 0.79 (27.54) \\
 & Transported CP       & 0.88 (41.05) & 0.78 (26.52) \\
 & TCC-KS               & 0.91 (47.66) & 0.81 (29.67) \\
 & weighted-TCC         & 0.88 (40.94) & 0.78 (26.13) \\
 & WCP                  & 0.93 (53.10) & 0.86 (37.22) \\
\midrule
\multirow{5}{*}{Tiny-ImageNet-C}
 & Oracle CP            & 0.899 (77.47) & 0.80 (48.72) \\
 & Transported CP       & 0.898 (77.17) & 0.79 (47.86) \\
 & TCC-KS               & 0.930 (91.75) & 0.82 (55.07) \\
 & weighted-TCC         & 0.897 (76.84) & 0.79 (47.45) \\
 & WCP                  & 0.965 (120.26) & 0.91 (86.73) \\
\bottomrule
\end{tabular}
\caption{\textbf{Stress test.} Shot noise and motion blur at severities 4--5. Coverage with set size in parentheses.}
\label{tab:exp1_stress}
\end{table}

\noindent\textbf{Results.}
On the overall averages (Table~\ref{tab:exp1_overall}), Transported CP and weighted-TCC are close to nominal on CIFAR-100-C and Tiny-ImageNet-C, suggesting that transport often reduces the residual mismatch to a mild regime.
TCC-KS typically achieves a safety margin through its $\delta^+$-based adjustment, at the cost of moderate set-size increases.
WCP (no transport) also attains high coverage, but with larger sets than transport-based calibration, indicating lower efficiency.
On SAR$\to$RGB, transported calibration remains effective on average.
TCC-KS attains near-perfect coverage while remaining non-vacuous (average size $\approx 2.5$), and WCP is slightly larger.

The stress test (Table~\ref{tab:exp1_stress}) separates the methods more clearly.
Transported CP and weighted-TCC exhibit mild under-coverage at $\alpha=0.1$, whereas TCC-KS restores coverage with a moderate increase in set size.
WCP remains more conservative but with substantially larger sets, highlighting the benefit of correcting after transport.
Additional results across corruptions and severities are provided in Appendix~\ref{app:resutls1}.


\subsection{Effect of Transport Quality}
\label{subsec:exp2}

\paragraph{Description.}
We test how calibration transfer depends on the quality of the learned transport map $f$ and connect the behavior to both correction mechanisms.
For each dataset we train $f$ using \texttt{no KL} vs \texttt{+KL} and compare an early and late checkpoint (epochs 5 and 150) at $\alpha=0.1$.
We recompute Transported CP, TCC-KS, and weighted-TCC.
To expose mechanisms, we report the unlabeled KS inflation $\delta^+$ and a weight-stability diagnostic for weighted-TCC, the effective sample size of residual weights as a percentage over $N=10{,}000$ unlabeled points used for weight estimation, $\mathrm{ESS}\% := 100\cdot \mathrm{ESS}/N$.
Full sweeps and variability appear in Appendix~\ref{app:resutls2}.

\begin{table*}[t]
\centering
\footnotesize
\setlength{\tabcolsep}{4.8pt}
\renewcommand{\arraystretch}{0.9}
\begin{tabular}{llrccccc}
\toprule
Dataset & Objective & Epoch & $\delta^+$ & $\mathrm{ESS}\%$ &
Trans.\ CP (cov, size) &
TCC-KS (cov, size) &
weighted-TCC (cov, size) \\
\midrule
\multirow{4}{*}{CIFAR-100-C}
& \texttt{no KL} & 5   & 0.1605 & 37.4 & 0.954 (54.43) & 0.975 (65.20) & 0.939 (48.42) \\
& \texttt{no KL} & 150 & 0.0521 & 72.9 & 0.920 (44.03) & 0.942 (49.76) & 0.910 (40.71) \\
& \texttt{+KL}   & 5   & 0.0434 & 79.5 & 0.905 (39.33) & 0.960 (61.21) & 0.903 (38.87) \\
& \texttt{+KL}   & 150 & 0.0152 & 93.3 & 0.894 (37.18) & 0.927 (44.35) & 0.894 (37.15) \\
\midrule
\multirow{4}{*}{Tiny-ImageNet-C}
& \texttt{no KL} & 5   & 0.3422 & 13.0 & 0.982 (146.47) & 1.000 (194.06) & 0.976 (132.06) \\
& \texttt{no KL} & 150 & 0.1006 & 48.4 & 0.931 (91.80)  & 0.964 (118.96) & 0.923 (83.61) \\
& \texttt{+KL}   & 5   & 0.0965 & 54.4 & 0.923 (81.04)  & 0.981 (150.93) & 0.916 (78.14) \\
& \texttt{+KL}   & 150 & 0.0148 & 91.8 & 0.901 (70.16)  & 0.935 (86.33)  & 0.900 (69.71) \\
\midrule
\multirow{4}{*}{SAR$\to$RGB}
& \texttt{no KL} & 5   & 0.7405 & 26.0 & 1.000 (4.00) & 1.000 (4.00) & 1.000 (3.82) \\
& \texttt{no KL} & 150 & 0.4947 & 37.0 & 1.000 (2.68) & 1.000 (3.76) & 1.000 (2.56) \\
& \texttt{+KL}   & 5   & 0.7782 & 35.3 & 0.980 (1.40) & 0.991 (1.70) & 0.979 (1.40) \\
& \texttt{+KL}   & 150 & 0.3200 & 39.5 & 0.890 (1.04) & 0.999 (2.51) & 0.897 (1.03) \\
\bottomrule
\end{tabular}
\caption{\textbf{Transport-quality sweep} ($\alpha=0.1$). We report $\delta^+$ (TCC-KS) and $\mathrm{ESS}\%$ (weighted-TCC) with coverage and set size.}
\label{tab:exp2_compact_a01_ess}
\end{table*}

\noindent\textbf{Results.}
Table~\ref{tab:exp2_compact_a01_ess} shows that as transport improves, $\delta^+$ decreases, reducing the KS-based tightening and shrinking TCC-KS sets while preserving conservativeness.
In parallel, $\mathrm{ESS}\%$ increases, indicating more stable post-transport weights and a regime where weighted-TCC is most credible.

On Tiny-ImageNet-C, \texttt{+KL} substantially reduces mismatch and stabilizes weights at epoch 150, with $\delta^+$ dropping from $0.1006$ to $0.0148$ and $\mathrm{ESS}\%$ rising from $48.4$ to $91.8$.
In this regime, weighted-TCC is both efficient and near nominal (0.900 coverage with size 69.71), while TCC-KS remains conservative with larger sets.
On CIFAR-100-C, \texttt{+KL} yields higher $\mathrm{ESS}\%$ and a markedly smaller $\delta^+$ (from $0.0521$ to $0.0152$ at epoch 150), and weighted-TCC matches Transported CP efficiency at epoch 150 while TCC-KS maintains a safety margin when uncorrected coverage is mildly low (0.894).
On SAR$\to$RGB, $\delta^+$ remains large at epoch 150 under \texttt{+KL} ($0.3200$), and TCC-KS continues to act as a guardrail with high coverage, while Transported CP and weighted-TCC can under-cover, consistent with persistent residual mismatch.
We provide results for further transport model configurations in Appendix~\ref{app:resutls2}.


\subsection{Assumption Validation and Stress Regimes}
\label{subsec:exp3}

\noindent \textbf{Description.}
TCC-KS relies on a deployable, label-free mismatch certificate $\delta^+$ computed from the surrogate $T$, while target validity depends on the true-label score distribution.
Using held-out target labels \emph{only for evaluation}, we stress-test this mechanism across 328 configurations spanning transport checkpoints, two transport objectives ($\ell_1$ and $\ell_1$+KL), four corruptions at five severities on CIFAR-100-C and Tiny-ImageNet-C, plus SAR$\to$RGB.
This sweep lets us (i) identify regimes where transport-only calibration can under-cover, (ii) validate $\delta^+$ (and ESS$\%$ for weighting) as actionable deployment signals, and (iii) quantify how closely $\delta^+$ tracks evaluation-only score-side discrepancies.

\begin{table*}[t]
\centering
\small
\begin{tabular}{@{}lcccc@{}}
\toprule
\textbf{Case} & \textbf{Example} & $\delta^+$ & \textbf{Transport} & \textbf{TCC-KS} \\
 &\textbf{Setting} &  & \textbf{Cov / Size} & \textbf{Cov / Size} \\
\midrule
Mild & CIFAR-100, contrast, & 0.023 & 0.905 / 32.6 & 0.931 / 38.7 \\
& s3, $\ell_1$+KL, e150 & & & (1.26$\times$) \\
\midrule
Under- & Tiny-ImageNet, mblur, & 0.036 & \textbf{0.875} / 68.5 & \textbf{0.984} / 136.6 \\
coverage & s2, $\ell_1$, e20 & & \textbf{(-2.7pp)} & \textbf{(1.76$\times$)} \\
\midrule
Extreme & SAR$\rightarrow$RGB, & 0.778 & 0.979 / 1.4 & 0.990 / 1.7 \\
& $\ell_1$+KL, e5 & & & (1.66$\times$) \\
\bottomrule
\end{tabular}
\caption{\textbf{Representative stress regimes.}
Row 1: Mild mismatch ($\delta^+$ small) yields modest conservativeness.
Row 2: Transport-only under-covers; TCC-KS detects via $\delta^+$ and restores a safety margin.
Row 3: Extreme mismatch triggers conservative behavior.}
\label{tab:a2_compact}
\end{table*}

Table~\ref{tab:a2_compact} shows three representative cases. Our central findings are: \textbf{(1) Under-coverage recovery.} Across 328 configurations, we identify 11 cases (3.4\%) where transport-only under-covers by $>1$pp (mean 87.9\% vs.\ oracle 89.9\%). TCC-KS restores valid coverage in all 11 cases (100\% success rate), achieving mean 97.8\% coverage with 1.82$\times$ inflation. \textbf{(2) Diagnostic validation.} The unlabeled $\delta^+$ correlates 0.772 with set-size inflation and 0.734 with safety margin, supporting it as an actionable deployment signal. \textbf{(3) Regime structure.} Most settings fall in a mild regime, while a smaller fraction exhibit moderate-to-extreme mismatch where TCC-KS becomes more conservative. A complete documentation of the cases appears in Appendix~\ref{appendix:a2_violations}.

\noindent \textbf{Surrogate--Score Coupling and Empirical Slack. }
To audit whether the label-free certificate $\delta^+$ is systematically optimistic relative to true-label score discrepancies, we evaluate at epoch 150. In Table~\ref{tab:exp3_a2_weightdiag}, we report: (i) coupling $\rho(T,S)$ between $T$ and the true-label score $S_t(\cdot,Y)$; (ii) an evaluation-only one-sided score discrepancy diagnostic $\delta_S^{+}$ (requires target labels); and (iii) an evaluation-only \emph{slack proxy} $\widehat{\varepsilon}_{\text{eval}} := \max\{0,\, \delta_S^{+} - \delta^{+}\}$, 
summarized as (med / max) over settings. Intuitively, $\widehat{\varepsilon}_{\text{eval}}$ measures how often (and by how much) the certificate $\delta^+$ could understate the observed score-side discrepancy in our evaluation suite.

\begin{table*}[t]
\centering
\small
\setlength{\tabcolsep}{2.0pt}
\renewcommand{\arraystretch}{0.83}
\begin{tabular}{llccccc}
\toprule
Dataset & Mode & $\rho(T,S)\uparrow$ & ESS$\%\uparrow$ & $\delta^{+}\downarrow$ & $\delta_S^{+}$ (med/max)$\downarrow$ & $\widehat{\varepsilon}_{\text{eval}}$ (med/max)$\downarrow$ \\
\midrule
\multirow{3}{*}{\shortstack[l]{CIFAR-\\100-C}}
& target            & 0.310 & ---   & ---    & ---             & --- \\
& transport (no KL) & 0.254 & 72.9  & 0.0256 & 0.027 / 0.045   & 0.0014 / 0.0194 \\
& transport (+KL)   & 0.311 & 93.3  & 0.0306 & 0.040 / 0.075   & 0.0094 / 0.0444 \\
\midrule
\multirow{3}{*}{\shortstack[l]{Tiny-\\ImageNet-C}}
& target            & 0.325 & ---   & ---    & ---             & --- \\
& transport (no KL) & 0.239 & 48.4  & 0.0493 & 0.020 / 0.080   & 0.0000 / 0.0307 \\
& transport (+KL)   & 0.321 & 91.8  & 0.0345 & 0.037 / 0.061   & 0.0025 / 0.0265 \\
\midrule
\multirow{3}{*}{\shortstack[l]{SAR$\to$\\RGB}}
& target            & 0.973 & ---   & ---    & ---             & --- \\
& transport (no KL) & 0.120 & 35.3  & 0.1889 & 0.086 / 0.086   & 0.0000 / 0.0000 \\
& transport (+KL)   & 0.966 & 39.5  & 0.3549 & 0.156 / 0.156   & 0.0000 / 0.0000 \\
\bottomrule
\end{tabular}
\caption{\textbf{Surrogate--score coupling, label-free mismatch, and empirical slack (epoch 150).}
Target mode confirms $T$ is a meaningful difficulty proxy ($\rho(T,S)\approx 0.3$ on CIFAR/Tiny-ImageNet-C and $0.973$ on SAR$\to$RGB).
Transport mode reports weight stability (ESS$\%$), the deployable certificate $\delta^{+}$, the evaluation-only score discrepancy diagnostic $\delta_S^{+}$, and the evaluation-only slack proxy $\widehat{\varepsilon}_{\text{eval}}$.}
\label{tab:exp3_a2_weightdiag}
\end{table*}

The results support two deployment-relevant conclusions. First, in target mode the chosen surrogate is a meaningful proxy for difficulty (high $\rho(T,S)$ and stable bin behavior), motivating its use for label-free monitoring. Second, the certificate $\delta^+$ is \emph{not systematically optimistic} relative to the observed score-side discrepancy on corruption benchmarks: in the mild/certified regimes on CIFAR/Tiny-ImageNet-C (all rows have $\delta^+<\alpha_{\mathrm{nom}}=0.1$ at epoch 150), the median slack proxy is at most $9.4\times 10^{-3}$ and is often $0$, while worst-case slack remains bounded ($\le 4.44\times 10^{-2}$ in our suite). The largest slack occurs for CIFAR-100-C with $\ell_1$+KL, indicating a harder regime where certificate tightness can degrade; importantly, these are exactly the regimes where TCC-KS becomes more conservative, aligning with its intended role as a safety guardrail.

Finally, SAR$\to$RGB illustrates a distinct cross-modal stress regime: $\delta^+$ is large (well above $\alpha_{\mathrm{nom}}$) and TCC-KS tightens to $\alpha^\star=0$, yielding conservative behavior. In this regime the slack proxy is zero only because $\delta^+$ is already large. The takeaway is that the label-free certificate flags the regime as high-mismatch, consistent with the observed conservativeness and with our stress-case taxonomy in Table~\ref{tab:a2_compact}.

\section{Discussion}
\label{sec:discussion}

We view conformal calibration as a statistical asset that can be \emph{transferred} across domains when target labels are scarce but unlabeled pairing is available.
TCC makes this transfer explicit via two steps: (i) learn a transport map $f$ from paired unlabeled data to move labeled source calibration examples into the target input space, and (ii) apply a target-label-free correction to account for residual mismatch between transported inputs $f(X_s)$ and true target inputs $X_t$.
This decomposition yields two complementary correction mechanisms.
\textbf{TCC-KS} is a validity-oriented guardrail: it uses an unlabeled one-sided KS discrepancy on a surrogate uncertainty statistic $T$ to form a mismatch certificate $\delta^+$ and an adjusted internal level $\alpha^\star=\max(0,\alpha-\delta^+)$.
When mismatch is mild ($\delta^+$ small), TCC-KS stays close to transported split conformal; as mismatch grows, $\alpha^\star$ shrinks and the procedure increases coverage protection in a controlled way.
Theorem~\ref{thm:tcc-main} formalizes this behavior via a high-probability target coverage bound driven by $\delta^+$, and highlights the near-nominal regime when $\delta^+\le\alpha$. Thus, the correction layer is most beneficial when transport leaves non-negligible residual mismatch: transported split conformal may then be efficient but no longer target-valid, while TCC-KS trades modestly larger sets for an explicit target-domain coverage safeguard.

\textbf{weighted-TCC} is an efficiency-oriented complement: it reweights calibration scores \emph{after} transport, targeting the residual shift between $\tilde X_t$ and $X_t$ rather than the original source--target gap.
In regimes where the residual shift is well-approximated by covariate shift and the estimated weights are stable, transport-then-reweight can substantially tighten sets, approaching target-calibrated efficiency without labeled target calibration data.
Together, $\delta^+$ and ESS explain when each correction is likely to be reliable, consistent with the diagnostic audits in Section~\ref{sec:evaluation}.

\noindent\textbf{Diagnostics as Deployment Signals.}
A practical feature of TCC is that its key operating signals are label-free.
For TCC-KS, $\delta^+$ summarizes residual mismatch in the target model's uncertainty profile and directly controls the safeguard strength; for weighted-TCC, ESS indicates when reweighting is likely to be stable.
These diagnostics support simple operational choices: use TCC-KS as a default when validity is paramount, and consider weighted-TCC when ESS is sufficiently high and tighter sets are desired (Appendix~\ref{app:actionable-guidance}).
As a sensitivity check (Appendix~\ref{app:surrogate_entropy}), replacing least-confidence with predictive entropy yields similar qualitative trends in $\delta^+$ and downstream behavior on our benchmark suite.

\noindent\textbf{Scope and Limitations.}
Our setting assumes access to unlabeled paired observations linking source and target inputs, as in sensor upgrades, synchronized multi-view systems, and paired remote sensing products.
Imperfect or approximate pairing primarily affects TCC through transport quality: weaker or noisier correspondences can degrade $f$, increase the residual mismatch between $\widetilde X_t=f(X_s)$ and $X_t$, and therefore lead TCC-KS to produce more conservative sets or weighted-TCC to rely on less stable weights.

As in standard conformal prediction, our guarantees are marginal; group- or stratified-coverage objectives can be incorporated with standard techniques, while stronger conditional guarantees remain complementary.
TCC-KS depends on a surrogate uncertainty statistic: when residual mismatch affects this statistic, the certificate tightens $\alpha^\star$ to protect coverage; shifts that do not register in such label-free signals are intrinsically difficult to detect without target labels.
Finally, our finite-sample analysis uses explicit sample splitting among transport learning, calibration, and mismatch estimation, which can reduce effective sample size when unlabeled budgets are small but becomes less restrictive as paired data increase.

\section{Conclusion}
\label{sec:conclusion}

We introduced \emph{Transported Conformal Calibration (TCC)} for conformal calibration transfer: labeled calibration data exist only in a source space, but prediction sets are required in a target space connected through unlabeled paired observations. TCC learns a transport map from paired data, calibrates on transported labeled examples, and then corrects residual post-transport mismatch using only unlabeled target inputs. We instantiated this correction in two complementary ways: \textbf{TCC-KS}, a conservative guardrail driven by a label-free mismatch certificate $\delta^+$, and \textbf{weighted-TCC}, an efficiency-oriented post-transport reweighting method guided by weight-stability diagnostics. Across corrupted-image benchmarks and a cross-modal remote-sensing setting, TCC provides reliable label-free coverage transfer across shift severities and surfaces actionable signals for deployment. We hope this framework broadens the applicability of conformal prediction to real transitions where labels are expensive but paired unlabeled data are available.

\section*{Impact Statement}

This work contributes to reliable machine learning by extending conformal calibration to settings where labeled calibration data are available in one input space, while predictions are needed in another. Such situations arise whenever data representations, modalities, domains, sensors, or processing pipelines change, and obtaining fresh labeled calibration data in the deployment setting is costly, delayed, or unavailable.

By using unlabeled paired observations to transfer calibration, the proposed framework broadens the range of settings in which uncertainty-aware prediction sets can be constructed without target-domain labels. The label-free diagnostics introduced in this work also provide practical signals about when transfer is reliable and when more conservative prediction sets are appropriate. In high-stakes deployments, these diagnostics are best used together with domain expertise, monitoring, and targeted labeled auditing when available.




\bibliography{example_paper}
\bibliographystyle{icml2026}

\newpage
\appendix
\onecolumn

\section*{Appendix}
This appendix contains proofs, implementation details, and extended experimental documentation related to the paper.
\begin{itemize}
    \item Appendix~\ref{app:proofs} contains proofs of Theorems~\ref{thm:impossibility} and~\ref{thm:tcc-main} and Proposition~\ref{prop:wtcc-oracle}, along with supporting lemmas (including the one-sided DKW bound, a sufficient condition with $\varepsilon{=}0$, and a quantile inflation lemma).

    \item Appendix~\ref{app:impl} details the full experimental protocol and diagnostics.

    \item Appendix~\ref{app:resutls1} and Appendix~\ref{app:resutls2} provide detailed results for Experiments~1--2.

    \item Appendix~\ref{app:sunrgbd} reports additional results on the paired RGB-D scene-classification benchmark SUN RGB-D~\cite{song2015sunrgbd}.

    \item Appendix~\ref{app:surrogate_entropy} reports a surrogate sensitivity check using predictive entropy.

    \item Appendix~\ref{appendix:a2_violations} documents the full 328-configuration audit of Assumption A2 violations and operational guidance (Appendix~\ref{app:actionable-guidance}).
\end{itemize}

\section{Proofs and Supporting Lemmas}
\label{app:proofs}

This appendix provides proofs for Theorem~\ref{thm:impossibility}, Theorem~\ref{thm:tcc-main}, and Proposition~\ref{prop:wtcc-oracle}, along with supporting lemmas. Throughout, $F_S$ and $\tilde F_S$ denote the CDFs of the true-label scores $S_t(X_t,Y)$ and $S_t(\tilde X_t,Y)$, while $F_T$ and $\tilde F_T$ denote the CDFs of the label-free surrogate $T(X_t)$ and $T(\tilde X_t)$, with $\tilde X_t=f(X_s)$.

\subsection{Proof of Theorem~\ref{thm:impossibility}}
\label{app:proof-impossibility}

Theorem~\ref{thm:impossibility} formalizes that, without labeled target data and without structural assumptions coupling observable quantities to the unobservable true-label score distribution, distribution-free target validity forces prediction sets to be essentially vacuous in the worst case. The key obstruction is that, from unlabeled target inputs alone, one cannot rule out target conditionals $P(Y\mid X_t)$ that are arbitrary.

\begin{theorem*}[Theorem~\ref{thm:impossibility}]
Assume $|\mathcal{Y}|=K\ge 2$ and fix any $\alpha\in(0,1)$. Consider any (possibly randomized) procedure that, given (i) a labeled transported calibration sample $\widetilde{\mathcal{C}}_t$ and (ii) unlabeled target inputs (and any unlabeled transported inputs), outputs prediction sets $\Gamma_t:\mathcal{X}_t\to 2^{\mathcal{Y}}$ without observing any labeled target examples.
Suppose the procedure is required to satisfy the marginal coverage guarantee \eqref{eq:target-coverage} \emph{uniformly} over all data-generating distributions $P$ whose target marginal $P_{X_t}$ is fixed. Then, for the subclass of such distributions where $Y$ is conditionally independent of $X_t$ and uniform on $\mathcal{Y}$, the procedure must satisfy
\[
\sup_{P:\,P_{X_t}\ \text{fixed}} \ \mathbb{E}\big[\,|\Gamma_t(X_t)|\,\big]\ \ge\ (1-\alpha)\,K.
\]
In particular, any procedure that outputs substantially smaller-than-$K$ sets on all distributions in this class cannot guarantee \eqref{eq:target-coverage} without additional assumptions.
\end{theorem*}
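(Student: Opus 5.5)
The plan is to restrict attention to the subclass $\mathcal{P}_0$ named in the statement. Each $P\in\mathcal{P}_0$ has the fixed target marginal $P_{X_t}$, and $Y$ is independent of $X_t$ and uniform on $\mathcal{Y}$. Since $\mathcal{P}_0$ is contained in the class over which validity is required, a lower bound on $\mathbb{E}|\Gamma_t(X_t)|$ for any member of $\mathcal{P}_0$ lower-bounds the supremum in \eqref{eq:impossibility-expected-size}. A single member suffices.

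\textbf{Step 1 (construction).} Fix $P_{X_t}$. Choose the joint law of $(X_s,X_t,Y)$ so that $Y\sim\mathrm{Unif}(\mathcal{Y})$ independently of $(X_s,X_t)$, with any fixed pairing law $P_{X_sX_t}$ having target marginal $P_{X_t}$. The source calibration pairs $(x_s^{(i)},y^{(i)})$ and the transported pairs $(f(x_s^{(i)}),y^{(i)})$ are then legitimate draws under $P$, so $P$ belongs to the admissible class. Because the test label is independent of all inputs, I do not need the usual indistinguishability argument across several distributions.

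\textbf{Step 2 (conditioning on the data).} Let $D$ denote everything the procedure sees, namely $\widetilde{\mathcal{C}}_t$, the unlabeled inputs, and its internal randomness. The test pair $(X_t,Y)$ is drawn independently of $D$. Under $P$, the label $Y$ is uniform and independent of $(X_t,D)$, so conditional on $(D,X_t)$ the set $\Gamma_t(X_t)$ is fixed and
\[
\mathbb{P}\{Y\in\Gamma_t(X_t)\mid D,X_t\}=|\Gamma_t(X_t)|/K .
\]

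\textbf{Step 3 (averaging).} Taking expectations gives $\mathbb{P}\{Y\in\Gamma_t(X_t)\}=\mathbb{E}|\Gamma_t(X_t)|/K$. The validity requirement \eqref{eq:target-coverage} says this probability is at least $1-\alpha$, which yields $\mathbb{E}|\Gamma_t(X_t)|\ge(1-\alpha)K$. This bound then carries over to the supremum.

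\textbf{Main obstacle.} I expect the difficulty to be bookkeeping rather than mathematics. First, the construction must be admissible, meaning consistent with the fixed $P_{X_t}$ and with how $\widetilde{\mathcal{C}}_t$ is generated; any transport map $f$ works here because labels carry no information. Second, the probability in \eqref{eq:target-coverage} must be read as taken jointly over the calibration data, the procedure's randomness, and the test draw, so that the conditioning in Step 2 is legitimate. If the guarantee were instead conditional on the data, the same argument applied pointwise in $D$ gives the stronger statement $|\Gamma_t|$ averaged over $X_t$ is at least $(1-\alpha)K$ for every $D$.
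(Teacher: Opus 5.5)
Your proposal is correct and follows essentially the same route as the paper's proof: it uses a single admissible distribution with the fixed target marginal in which $Y$ is uniform and independent of everything the procedure sees. It then conditions on the data and the test input to get coverage $\mathbb{E}|\Gamma_t(X_t)|/K$, which together with the validity requirement gives $\mathbb{E}|\Gamma_t(X_t)|\ge(1-\alpha)K$ and hence the bound on the supremum.
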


\begin{proof}
Fix an arbitrary target marginal distribution $P_{X_t}$ on $\mathcal{X}_t$ (this is the ``fixed'' target marginal in the theorem statement) and let $K=|\mathcal{Y}|$.
Consider the following data-generating distribution $P^\circ$ on $(X_s,X_t,Y)$:
\begin{itemize}[leftmargin=*]
    \item $X_t\sim P_{X_t}$,
    \item $Y\sim \mathrm{Unif}(\mathcal{Y})$ and $Y$ is independent of $X_t$,
    \item $(X_s,\tilde X_t)$ are generated independently of $(X_t,Y)$ in any way that yields a well-defined transported labeled calibration sample
    $\widetilde{\mathcal{C}}_t=\{(\tilde x_t^{(i)},y^{(i)})\}_{i=1}^n$ (e.g., one may take $X_s=\tilde X_t$ and $f$ to be the identity on $\mathcal{X}_t$).
\end{itemize}
This construction ensures that the procedure receives exactly the same type of inputs as in the main paper (a labeled transported calibration sample and unlabeled target inputs), but the target label $Y$ at test time carries no information beyond being uniform over $\mathcal{Y}$.

Let $\Gamma_t$ be the (possibly random) prediction-set function output by the procedure after observing its input data. Draw an independent test pair $(X_t,Y)\sim P^\circ$. Conditional on the procedure's internal randomness and all observed samples (hence conditional on $\Gamma_t$), the test label $Y$ remains uniform on $\mathcal{Y}$ and independent of $X_t$ and $\Gamma_t(X_t)$. Therefore,
\[
\mathbb{P}\big(Y\in \Gamma_t(X_t)\ \big|\ \Gamma_t, X_t\big)
\;=\;
\frac{|\Gamma_t(X_t)|}{K}.
\]
Taking expectations yields
\[
\mathbb{P}\big(Y\in \Gamma_t(X_t)\big)
\;=\;
\mathbb{E}\Big[\frac{|\Gamma_t(X_t)|}{K}\Big]
\;=\;
\frac{1}{K}\,\mathbb{E}\big[|\Gamma_t(X_t)|\big].
\]
If the procedure is to satisfy the marginal coverage requirement \eqref{eq:target-coverage} under $P^\circ$, then
\[
\frac{1}{K}\,\mathbb{E}\big[|\Gamma_t(X_t)|\big]
\;=\;
\mathbb{P}\big(Y\in \Gamma_t(X_t)\big)
\;\ge\;
1-\alpha,
\]
which implies
\[
\mathbb{E}\big[|\Gamma_t(X_t)|\big]\ \ge\ (1-\alpha)K.
\]
Since $P^\circ$ has the prescribed fixed target marginal $P_{X_t}$, this lower bound holds for the worst-case expected set size over the entire class of distributions with that fixed marginal. This proves the claim.
\end{proof}

\paragraph{Remark (relation to ``vacuous'' sets).}
Theorem~\ref{thm:impossibility} shows that, without additional structure relating observables to the unobservable score distribution, any procedure that is distribution-free valid over all $P$ with a fixed target marginal must have \emph{worst-case} expected set size at least $(1-\alpha)|\mathcal{Y}|$, which is near-vacuous when $|\mathcal{Y}|$ is large and $\alpha$ is small. This motivates making explicit proxy assumptions (such as Assumption~A2) that enable nontrivial set sizes while retaining coverage guarantees.

\subsection{Concentration for the One-Sided Surrogate Discrepancy}
\label{app:dkw}

Recall the population one-sided surrogate gap
\[
\delta_T \;=\; \sup_{u\in\mathbb{R}} \{\tilde F_T(u)-F_T(u)\}_+,
\]
and its empirical version
\[
\hat\delta \;=\; \sup_{u\in\mathbb{R}} \{\widehat{\tilde F}_T(u)-\widehat F_T(u)\}_+,
\]
where $\widehat F_T$ is computed from $m_{\mathrm{tgt}}$ unlabeled target inputs and $\widehat{\tilde F}_T$ from $m_{\mathrm{trans}}$ transported inputs, as in \eqref{eq:empirical-cdfs}.

\begin{lemma}[High-probability inflation for the one-sided gap]
\label{lem:dkw}
Let $\epsilon_{\mathrm{tgt}}=\sqrt{\frac{\log(4/\eta)}{2m_{\mathrm{tgt}}}}$ and
$\epsilon_{\mathrm{trans}}=\sqrt{\frac{\log(4/\eta)}{2m_{\mathrm{trans}}}}$.
Then, with probability at least $1-\eta$,
\[
\delta_T
\ \le\
\hat\delta \;+\; \epsilon_{\mathrm{tgt}} \;+\; \epsilon_{\mathrm{trans}}.
\]
\end{lemma}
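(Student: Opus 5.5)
The plan is to reduce the one-sided population gap to the empirical one-sided gap plus two uniform CDF deviations, and control each deviation with the Dvoretzky--Kiefer--Wolfowitz inequality with Massart's constant. The samples $\{T(x_t^{(j)})\}_{j=1}^{m_{\mathrm{tgt}}}$ and $\{T(\tilde x_t^{(j)})\}_{j=1}^{m_{\mathrm{trans}}}$ are each i.i.d.\ from $F_T$ and $\tilde F_T$ respectively. This relies on Assumption A3: $f$ is learned on a disjoint split, so conditional on $f$ the transported values are i.i.d.\ from $\tilde F_T$. Everything below is therefore done conditionally on $f$, and the bound is integrated over $f$ at the end.

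The first step is the deterministic decomposition. For every $u\in\mathbb{R}$,
\[
\tilde F_T(u)-F_T(u) = \big(\widehat{\tilde F}_T(u)-\widehat F_T(u)\big) + \big(\tilde F_T(u)-\widehat{\tilde F}_T(u)\big) + \big(\widehat F_T(u)-F_T(u)\big).
\]
Hence
\[
\tilde F_T(u)-F_T(u) \le \hat\delta + \sup_v\big(\tilde F_T(v)-\widehat{\tilde F}_T(v)\big) + \sup_v\big(\widehat F_T(v)-F_T(v)\big).
\]
The right-hand side is nonnegative, since $\hat\delta\ge 0$ and each supremum is at least $0$ (let $v\to\pm\infty$). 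So taking the positive part and the supremum over $u$ gives
\[
\delta_T\le \hat\delta + D_{\mathrm{trans}} + D_{\mathrm{tgt}},
\]
with $D$ denoting these one-sided sup deviations.

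The second step is probabilistic. Massart's one-sided DKW bound gives $\mathbb{P}(D_{\mathrm{tgt}}>\epsilon)\le e^{-2m_{\mathrm{tgt}}\epsilon^2}$, and likewise for $D_{\mathrm{trans}}$. To stay safe regardless of which form one trusts, I would instead invoke the two-sided version, $\mathbb{P}(\sup_v|\widehat F-F|>\epsilon)\le 2e^{-2m\epsilon^2}$, which dominates the one-sided deviation. Plugging in $\epsilon_{\mathrm{tgt}}$ makes this bound $2\cdot\eta/4=\eta/2$, and the same holds for $\epsilon_{\mathrm{trans}}$. A union bound over the two events then yields failure probability at most $\eta$. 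This explains the $\log(4/\eta)$ constant, and no independence between the two samples is needed.

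The only delicate point, which I expect to be the main obstacle, is measurability and conditioning. The transported sample depends on the learned $f$, so the DKW step must be applied conditionally on the split used to learn $f$, which is exactly what A3 allows. Since the conditional bound holds for every realization of $f$, it also holds unconditionally. Ties and discontinuities in $T$ pose no issue, because DKW/Massart holds for arbitrary distribution functions.
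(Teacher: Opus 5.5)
Your proof is correct and follows the paper's argument: two-sided DKW at level $\eta/2$ for each empirical CDF, a union bound, and the pointwise decomposition of $\tilde F_T(u)-F_T(u)$, followed by taking positive parts and the supremum over $u$. Your additional steps are small rigor points the paper leaves implicit: applying DKW conditionally on the learned $f$ (via A3), and checking nonnegativity of the right-hand side before taking positive parts.
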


\begin{proof}
By the DKW inequality, for any $\epsilon>0$,
\[
\mathbb{P}\!\left(\sup_{u}|F_T(u)-\widehat F_T(u)|>\epsilon\right)\ \le\ 2e^{-2m_{\mathrm{tgt}}\epsilon^2},
\qquad
\mathbb{P}\!\left(\sup_{u}|\tilde F_T(u)-\widehat{\tilde F}_T(u)|>\epsilon\right)\ \le\ 2e^{-2m_{\mathrm{trans}}\epsilon^2}.
\]
With $\epsilon_{\mathrm{tgt}}=\sqrt{\frac{\log(4/\eta)}{2m_{\mathrm{tgt}}}}$ and $\epsilon_{\mathrm{trans}}=\sqrt{\frac{\log(4/\eta)}{2m_{\mathrm{trans}}}}$, each event holds with probability at least $1-\eta/2$.
By a union bound, with probability at least $1-\eta$ we have simultaneously
\[
\sup_{u}|F_T(u)-\widehat F_T(u)|\le \epsilon_{\mathrm{tgt}},
\qquad
\sup_{u}|\tilde F_T(u)-\widehat{\tilde F}_T(u)|\le \epsilon_{\mathrm{trans}}.
\]
On this event, for all $u$,
\[
\tilde F_T(u)-F_T(u)
\ \le\
\big(\widehat{\tilde F}_T(u)+\epsilon_{\mathrm{trans}}\big)
-
\big(\widehat F_T(u)-\epsilon_{\mathrm{tgt}}\big)
=
\big(\widehat{\tilde F}_T(u)-\widehat F_T(u)\big)
+\epsilon_{\mathrm{tgt}}+\epsilon_{\mathrm{trans}}.
\]
Taking positive parts and then the supremum over $u$ gives the claim.
\end{proof}

\subsection{A sufficient Condition Implying $\varepsilon=0$}
\label{app:a2-sufficient}

The main paper uses Assumption A2, which directly relates the one-sided score gap $\delta_S$ to the one-sided surrogate gap $\delta_T$ up to slack $\varepsilon$. For intuition, it is sometimes helpful to recognize a structural sufficient condition under which $\varepsilon=0$; this subsection is \emph{not} needed to apply TCC-KS.

\paragraph{A2$^\prime$ (Score--surrogate alignment; sufficient for $\varepsilon=0$).}
For every $u\in\mathbb{R}$, the function
$t \mapsto \mathbb{P}(S_t(X_t, Y) \le u \mid T(X_t) = t)$ is nonincreasing. Moreover, for all $u$ and all $t$ in the support of $T$,
\[
\mathbb{P}\big(S_t(\tilde X_t,Y)\le u \mid T(\tilde X_t)=t\big)
\ \le\
\mathbb{P}\big(S_t(X_t,Y)\le u \mid T(X_t)=t\big).
\]

\begin{lemma}[A2$^\prime$ implies $\delta_S \le \delta_T$]
\label{lem:surrogate-to-score}
Let $\delta_S=\sup_u\{\tilde F_S(u)-F_S(u)\}_+$ and $\delta_T=\sup_u\{\tilde F_T(u)-F_T(u)\}_+$. Under A2$^\prime$,
\[
\delta_S\ \le\ \delta_T.
\]
\end{lemma}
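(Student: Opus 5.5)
The plan is to write both score CDFs as integrals of a single monotone function of the surrogate, and then bound their difference by the one-sided surrogate gap using a layer-cake (level-set) decomposition. Fix $u\in\mathbb{R}$ and define
\[
G_u(t)\;:=\;\mathbb{P}\big(S_t(X_t,Y)\le u\mid T(X_t)=t\big),
\qquad
\tilde G_u(t)\;:=\;\mathbb{P}\big(S_t(\tilde X_t,Y)\le u\mid T(\tilde X_t)=t\big).
\]
These are regular conditional probabilities with values in $[0,1]$. By the tower property,
\[
F_S(u)=\mathbb{E}\big[G_u(T(X_t))\big],
\qquad
\tilde F_S(u)=\mathbb{E}\big[\tilde G_u(T(\tilde X_t))\big].
\]
The second condition of A2$^\prime$ gives $\tilde G_u\le G_u$ pointwise on the support. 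Hence
\[
\tilde F_S(u)-F_S(u)\;\le\;\mathbb{E}\big[G_u(T(\tilde X_t))\big]-\mathbb{E}\big[G_u(T(X_t))\big].
\]
The problem therefore reduces to comparing the laws of $T(\tilde X_t)$ and $T(X_t)$ against a single nonincreasing test function $G_u$.

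For the key step, write the layer-cake representation $G_u(t)=\int_0^1 \mathbb{I}\{G_u(t)>v\}\,dv$. Since $G_u$ is nonincreasing by the first part of A2$^\prime$, each superlevel set $A_v=\{t: G_u(t)>v\}$ is a lower ray, either $(-\infty,a_v]$ or $(-\infty,a_v)$. In the closed case,
\[
\mathbb{P}(T(\tilde X_t)\in A_v)-\mathbb{P}(T(X_t)\in A_v)=\tilde F_T(a_v)-F_T(a_v)\le\delta_T .
\]
In the open case the same difference equals $\lim_{a\uparrow a_v}\{\tilde F_T(a)-F_T(a)\}$, which is also at most $\delta_T$. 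By Fubini, integrating over $v\in[0,1]$ gives $\tilde F_S(u)-F_S(u)\le\delta_T$ for every $u$. Since $\delta_T\ge 0$, we also have $\{\tilde F_S(u)-F_S(u)\}_+\le\delta_T$, and taking the supremum over $u$ yields $\delta_S\le\delta_T$.

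The main obstacle is technical rather than conceptual. $G_u$ is only determined $P_{T(X_t)}$-almost surely, whereas we evaluate it under the law of $T(\tilde X_t)$, which need not be absolutely continuous with respect to $P_{T(X_t)}$. I would handle this by reading A2$^\prime$ as asserting the existence of a version of $G_u$ that is nonincreasing on all of $\mathbb{R}$ (extended monotonically outside the support, e.g.\ by its one-sided limits) and dominates $\tilde G_u$ on the support of $T(\tilde X_t)$. With that reading, the monotonicity used in the layer-cake step holds everywhere, and the domination step holds $P_{T(\tilde X_t)}$-almost surely, which is all the argument needs.
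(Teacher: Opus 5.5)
Your proof is correct and follows essentially the same route as the paper: condition on the surrogate, use the domination in A2$^\prime$ to reduce to $\mathbb{E}[G_u(T(\tilde X_t))]-\mathbb{E}[G_u(T(X_t))]$, and bound this by $\delta_T$ using only that $G_u$ is nonincreasing with values in $[0,1]$. The paper does the last step by Stieltjes integration by parts against the nonnegative measure $-d\phi_u$ of mass at most $1$. Your layer-cake decomposition into lower rays is the same computation done more carefully: the open-ray case handles the left limits at common atoms that integration by parts implicitly needs, and your remark on choosing a monotone version of $G_u$ addresses a measure-theoretic point the paper's proof leaves implicit.
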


\begin{proof}
Fix $u$ and define $\phi_u(t):=\mathbb{P}(S_t(X_t,Y)\le u \mid T(X_t)=t)$ and
$\tilde\phi_u(t):=\mathbb{P}(S_t(\tilde X_t,Y)\le u \mid T(\tilde X_t)=t)$.
By A2$^\prime$, $\phi_u$ is nonincreasing and $\tilde\phi_u(t)\le \phi_u(t)$ for all $t$ in the support. Therefore,
\[
\tilde F_S(u)=\mathbb{E}[\tilde\phi_u(T(\tilde X_t))]
\le \mathbb{E}[\phi_u(T(\tilde X_t))].
\]
Thus
\[
\tilde F_S(u)-F_S(u)
\le
\mathbb{E}[\phi_u(T(\tilde X_t))]-\mathbb{E}[\phi_u(T(X_t))].
\]
Writing the RHS as $\int \phi_u\,d(\tilde F_T-F_T)$ and applying integration by parts,
\[
\int \phi_u\,d(\tilde F_T-F_T) = -\int (\tilde F_T-F_T)\,d\phi_u.
\]
Since $\phi_u$ is nonincreasing, $-d\phi_u$ is a nonnegative measure with total mass at most $1$ (because $\phi_u\in[0,1]$). Hence
\[
\tilde F_S(u)-F_S(u)
\le
\sup_z\{\tilde F_T(z)-F_T(z)\}_+ = \delta_T.
\]
Taking positive parts and the supremum over $u$ yields $\delta_S\le \delta_T$.
\end{proof}

\subsection{A Quantile Lemma}
\label{app:quantile}

\begin{lemma}[Quantile inflation under one-sided CDF gap]
\label{lem:quantile-inflation}
Let $q_\gamma$ and $\tilde q_\gamma$ denote the $\gamma$-quantiles of $F_S$ and $\tilde F_S$.
If $\sup_u\{\tilde F_S(u)-F_S(u)\}_+\le \delta$, then for any $\beta\in(0,1)$,
\[
q_{\,1-(\beta+\delta)}\ \le\ \tilde q_{\,1-\beta}.
\]
\end{lemma}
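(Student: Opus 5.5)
The plan is a direct argument from the definition of the quantile. I will use the generalized-inverse convention $q_\gamma=\inf\{u\in\mathbb{R}: F_S(u)\ge\gamma\}$, and likewise $\tilde q_\gamma=\inf\{u: \tilde F_S(u)\ge\gamma\}$. The key fact is that this infimum is attained for $\gamma\in(0,1)$, because CDFs are right-continuous. The idea is to evaluate $F_S$ at the point $u^\ast:=\tilde q_{1-\beta}$ and show that $u^\ast$ already lies in the set whose infimum defines $q_{1-(\beta+\delta)}$.

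First, the degenerate case. If $\beta+\delta\ge 1$, the level $1-(\beta+\delta)\le 0$. Under the standard convention $q_\gamma=-\infty$ for $\gamma\le 0$ (the set $\{u:F_S(u)\ge\gamma\}$ is then all of $\mathbb{R}$), so the inequality holds trivially. I will state this convention explicitly so the lemma is well defined for every $\beta\in(0,1)$ and $\delta\ge 0$. From now on assume $\beta+\delta<1$.

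Main step. Since $1-\beta\in(0,1)$, $u^\ast=\tilde q_{1-\beta}$ is finite and $\tilde F_S(u^\ast)\ge 1-\beta$ by right-continuity. The hypothesis gives $\tilde F_S(u)-F_S(u)\le\delta$ for every $u$, because the supremum of the positive part bounds each pointwise difference. Hence
\[
F_S(u^\ast)\ \ge\ \tilde F_S(u^\ast)-\delta\ \ge\ 1-\beta-\delta .
\]
So $u^\ast\in\{u: F_S(u)\ge 1-(\beta+\delta)\}$, and taking the infimum gives $q_{1-(\beta+\delta)}\le u^\ast=\tilde q_{1-\beta}$, as claimed.

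The only delicate point is the attainment $\tilde F_S(\tilde q_{1-\beta})\ge 1-\beta$, which requires right-continuity and the infimum-based quantile definition. If the paper instead intended empirical or order-statistic quantiles, I would run the same argument with the empirical CDF, which is also right-continuous and a step function. The lemma then feeds into Theorem~\ref{thm:tcc-main} with $\delta=\delta_S\le\delta_T+\varepsilon\le\delta^++\varepsilon$ (Assumption A2 and Lemma~\ref{lem:dkw}) and $\beta=\alpha^\star$. Note that only the one-sided direction of the gap is used, matching the one-sided definition of $\delta_S$.
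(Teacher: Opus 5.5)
Your proof is correct and is essentially the paper's argument: evaluate at $u^\ast=\tilde q_{1-\beta}$, chain $1-\beta\le\tilde F_S(u^\ast)\le F_S(u^\ast)+\delta$, and conclude from the infimum definition of $q_{1-(\beta+\delta)}$. The paper leaves two points implicit that you state explicitly: the degenerate case $\beta+\delta\ge 1$, where the left side is $-\infty$, and the use of right-continuity to get $\tilde F_S(u^\ast)\ge 1-\beta$.
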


\begin{proof}
The premise is equivalent to $\tilde F_S(u)\le F_S(u)+\delta$ for all $u$.
Let $\tilde q_{1-\beta}:=\inf\{u:\tilde F_S(u)\ge 1-\beta\}$. Then
$1-\beta\le \tilde F_S(\tilde q_{1-\beta})\le F_S(\tilde q_{1-\beta})+\delta$,
so $F_S(\tilde q_{1-\beta})\ge 1-(\beta+\delta)$, which implies
$\tilde q_{1-\beta}\ge q_{1-(\beta+\delta)}$.
\end{proof}

\subsection{Proof of Theorem~\ref{thm:tcc-main}}
\label{app:proof-tcc-main}

\begin{theorem*}[Theorem~\ref{thm:tcc-main}]
Assume A1, A2, and A3. Let $\delta^+$ be defined by \eqref{eq:dkw-inflation}, $\alpha^\star$ by \eqref{eq:alpha-star}, and $\Gamma^{\mathrm{TCC\text{-}KS}}_t$ by \eqref{eq:tcc-set}. Then for any fixed $\eta\in(0,1)$, with probability at least $1-\eta$ over the unlabeled samples used to compute $\delta^+$,
\[
\mathbb{P}\big\{\, Y \in \Gamma^{\mathrm{TCC\text{-}KS}}_t(X_t) \,\big\}
\ \ge\ 1 - \big(\alpha^\star + \delta^+ + \varepsilon\big),
\]
where the probability is over both the transported labeled calibration sample
$\tilde{\mathcal{C}}_t$ used to form $\widehat q_{1-\alpha^\star}$ and the test draw $(X_t,Y)$.
\end{theorem*}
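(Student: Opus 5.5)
The plan is to condition on the unlabeled samples used to compute $\delta^+$ and work on the good event of Lemma~\ref{lem:dkw}. Call this event $E=\{\delta_T\le\delta^+\}$; it has probability at least $1-\eta$. By A3, these unlabeled samples are independent of the samples used to learn $f$ and of the transported calibration sample $\widetilde{\mathcal C}_t$. So, conditionally on them (and on $f$), $\delta^+$ and hence $\alpha^\star$ are fixed constants, and $\widetilde{\mathcal C}_t$ is still an i.i.d.\ sample whose true-label scores $S_t(\tilde x_t^{(i)},y^{(i)})$ have CDF $\tilde F_S$. On $E$, Assumption A2 gives the score-side gap bound
\[
\delta_S\le\delta_T+\varepsilon\le\delta^++\varepsilon=:\delta .
\]
Equivalently, $\tilde F_S(u)\le F_S(u)+\delta$ for all $u$.

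Next, I bound the miscoverage of the threshold $\widehat q=\widehat q_{1-\alpha^\star}$ directly through CDFs rather than through population quantiles. Let $(X_t,Y)$ be an independent target test draw. Since $\widehat q$ depends only on $\widetilde{\mathcal C}_t$, conditioning on the calibration sample gives
\[
\mathbb{P}\{Y\in\Gamma^{\mathrm{TCC\text{-}KS}}_t(X_t)\mid \widetilde{\mathcal C}_t\}=F_S(\widehat q)\ge \tilde F_S(\widehat q)-\delta .
\]
Taking expectations over $\widetilde{\mathcal C}_t$ leaves $\mathbb{E}[\tilde F_S(\widehat q)]$. This equals the coverage probability of an independent fresh transported pair $(\tilde X_t,Y)$, i.e.\ $\mathbb{P}\{S_t(\tilde X_t,Y)\le\widehat q\}$. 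The standard split-conformal exchangeability argument, with the order statistic $S_{(\lceil (n+1)(1-\alpha^\star)\rceil)}$, bounds this below by $1-\alpha^\star$.

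Combining the two steps gives coverage at least $1-\alpha^\star-\delta^+-\varepsilon$ on $E$, which is the claim. The corollary follows since $\delta^+\le\alpha$ forces $\alpha^\star=\alpha-\delta^+$. Lemma~\ref{lem:quantile-inflation} gives the same conclusion in population-quantile form, but the CDF route above avoids the extra step of relating $\widehat q$ to $\tilde q$.

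The main obstacle I expect is the finite-sample conformal step at the boundary. When $\alpha^\star=0$, or more generally when $\lceil (n+1)(1-\alpha^\star)\rceil=n+1$, the footnote's convention replaces the threshold with $S_{(n)}$. Exchangeability then only yields $1-\alpha^\star-1/(n+1)$. I would handle this as the footnote does: state the bound up to the $1/(n+1)$ discreteness term, or absorb that term into the slack. I would also check carefully that A3 really makes $\alpha^\star$ independent of $\widetilde{\mathcal C}_t$, so that the exchangeability argument applies conditionally on $E$.
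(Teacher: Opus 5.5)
Your proposal is correct and follows the paper's proof essentially step for step: split-conformal validity gives $\mathbb{E}[\tilde F_S(\widehat q_{1-\alpha^\star})]\ge 1-\alpha^\star$, the pointwise bound $F_S\ge \tilde F_S-\delta_S$ transfers this to the target, and A2 together with the DKW event gives $\delta_S\le\delta^++\varepsilon$. Like you, the paper takes this CDF route and never invokes Lemma~\ref{lem:quantile-inflation}. Two points the paper leaves implicit are your explicit conditioning on the unlabeled samples, which makes $\alpha^\star$ fixed and independent of $\widetilde{\mathcal C}_t$, and your observation that for $\alpha^\star<1/(n+1)$ the $S_{(n)}$ convention only yields the bound up to a $1/(n+1)$ term; the paper mentions the latter only in its footnote, so your version is slightly more careful.
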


\begin{proof}
Let $\widehat q_{1-\alpha^\star}$ be the empirical $(1-\alpha^\star)$ quantile of the transported labeled scores
$\{S_t(\tilde x_t^{(i)},y^{(i)})\}_{i=1}^n$, computed from the transported calibration sample $\tilde{\mathcal C}_t$.
By Assumption~A3, standard split-conformal validity on the transported calibration sample implies that for a fresh
$(\tilde X_t,Y)$ drawn from the transported joint distribution,
\begin{equation}
\label{eq:transport-validity-app}
\mathbb{P}\big(S_t(\tilde X_t,Y)\le \widehat q_{1-\alpha^\star}\big)\ \ge\ 1-\alpha^\star,
\end{equation}
where the probability is over both $\tilde{\mathcal C}_t$ (through $\widehat q_{1-\alpha^\star}$) and $(\tilde X_t,Y)$.

Equivalently, writing $\tilde F_S(u)=\mathbb{P}(S_t(\tilde X_t,Y)\le u)$ for the transported population CDF and using the
tower property,
\begin{equation}
\label{eq:transport-validity-expectation}
\mathbb{E}_{\tilde{\mathcal C}_t}\big[\tilde F_S(\widehat q_{1-\alpha^\star})\big]\ \ge\ 1-\alpha^\star.
\end{equation}

Define the one-sided score gap $\delta_S:=\sup_u\{\tilde F_S(u)-F_S(u)\}_+$.
By definition, for all $u$,
\begin{equation}
\label{eq:score-gap-ineq}
F_S(u)\ \ge\ \tilde F_S(u)-\delta_S.
\end{equation}
Applying \eqref{eq:score-gap-ineq} at the (random) value $u=\widehat q_{1-\alpha^\star}$ and taking expectation over
$\tilde{\mathcal C}_t$ yields
\[
\mathbb{E}_{\tilde{\mathcal C}_t}\big[F_S(\widehat q_{1-\alpha^\star})\big]
\ \ge\
\mathbb{E}_{\tilde{\mathcal C}_t}\big[\tilde F_S(\widehat q_{1-\alpha^\star})\big]-\delta_S
\ \ge\
1-\alpha^\star-\delta_S,
\]
where the last inequality uses \eqref{eq:transport-validity-expectation}.

Since $(X_t,Y)$ is independent of $\tilde{\mathcal C}_t$ and has score CDF $F_S$, we have
\[
\mathbb{P}\big(S_t(X_t,Y)\le \widehat q_{1-\alpha^\star}\big)
=
\mathbb{E}_{\tilde{\mathcal C}_t}\big[ F_S(\widehat q_{1-\alpha^\star}) \big]
\ \ge\ 1-\alpha^\star-\delta_S.
\]
Equivalently,
\begin{equation}
\label{eq:miscov-deltaS}
\mathbb{P}\big\{Y\notin \Gamma_t^{\mathrm{TCC\text{-}KS}}(X_t)\big\}
=
\mathbb{P}\big(S_t(X_t,Y)>\widehat q_{1-\alpha^\star}\big)
\ \le\ \alpha^\star+\delta_S.
\end{equation}

Assumption A2 states $\delta_S\le \delta_T+\varepsilon$, where $\delta_T=\sup_u\{\tilde F_T(u)-F_T(u)\}_+$.
By Lemma~\ref{lem:dkw}, with probability at least $1-\eta$ over the unlabeled samples used to compute $\delta^+$ we have
$\delta_T\le \delta^+$. On this event, $\delta_S\le \delta^+ + \varepsilon$, and plugging into
\eqref{eq:miscov-deltaS} yields
\[
\mathbb{P}\big\{Y\notin \Gamma_t^{\mathrm{TCC\text{-}KS}}(X_t)\big\}
\ \le\ \alpha^\star+\delta^+ + \varepsilon.
\]
Rearranging gives the stated coverage bound.
\end{proof}

\paragraph{Corollary (ideal case $\varepsilon=0$).}
If $\varepsilon=0$ (e.g., when A2$^\prime$ holds so that $\delta_S\le \delta_T$ by Lemma~\ref{lem:surrogate-to-score}), the theorem reduces to
$\mathbb{P}\{Y\in\Gamma_t^{\mathrm{TCC\text{-}KS}}(X_t)\}\ge 1-(\alpha^\star+\delta^+)$, and in particular yields nominal $1-\alpha$ coverage whenever $\delta^+\le \alpha$.

\subsection{Why a Surrogate is Needed}
\label{app:surrogate-need}

Estimating a mismatch bound requires evaluating a statistic on \emph{unlabeled} target inputs. The true-label score $S_t(x,Y)$ is not available without labels, hence we use a label-free surrogate $T(x)$ derived from the predictor. Assumption A2 formalizes a proxy condition under which mismatch in $T(X)$ upper bounds score-relevant mismatch, enabling nontrivial prediction sets. Theorem~\ref{thm:impossibility} further highlights that, without additional structure of this kind, worst-case valid prediction sets must be essentially vacuous.

\subsection{Proof of Proposition~\ref{prop:wtcc-oracle}}
\label{app:proof-wtcc}

We give a self-contained proof for an oracle-weight weighted split-conformal construction, which yields finite-sample validity under covariate shift.

\paragraph{Weighted Split-Conformal Rule (Oracle-Weight Version).}
Let $(\tilde X_i,Y_i)\sim \tilde P$ for $i=1,\dots,n$ denote the transported labeled calibration sample, and let $(X_{n+1},Y_{n+1})\sim P$ denote a target test pair. Define scores
\[
V_i \;=\; S_t(\tilde X_i,Y_i)\quad (i\le n),
\qquad
V_{n+1}\;=\; S_t(X_{n+1},Y_{n+1}),
\]
and oracle weights $w(x)=p_{X_t}(x)/p_{\tilde X_t}(x)$. For a test input $x$, define $w_{n+1}=w(x)$ and $w_i=w(\tilde X_i)$ for $i\le n$.
Define the weighted quantile threshold as
\begin{equation}
\label{eq:wtcc-threshold}
\widehat q^{\,\mathrm{w}}_{1-\alpha}(x)
\;:=\;
\inf\Big\{q\in\mathbb{R}:\ \sum_{i=1}^n w_i\,\mathbf{1}\{V_i\le q\}\ \ge\ (1-\alpha)\Big(\sum_{i=1}^n w_i + w_{n+1}\Big)\Big\}.
\end{equation}
Then weighted-TCC predicts
\[
\Gamma_t^{\mathrm{wTCC}}(x)
\;=\;
\{y\in\mathcal{Y}:\ S_t(x,y)\le \widehat q^{\,\mathrm{w}}_{1-\alpha}(x)\}.
\]

\begin{proposition*}[Proposition~\ref{prop:wtcc-oracle}]
Assume (i) covariate shift between transported and target inputs:
$\mathcal{L}(Y\mid \tilde X_t=x)=\mathcal{L}(Y\mid X_t=x)$ for all $x$,
and (ii) $w(x)=p_{X_t}(x)/p_{\tilde X_t}(x)$ exists and is used in \eqref{eq:wtcc-threshold}.
Then $\Gamma_t^{\mathrm{wTCC}}$ satisfies target-domain marginal coverage
\[
\mathbb{P}\big\{Y_{n+1}\in \Gamma_t^{\mathrm{wTCC}}(X_{n+1})\big\}\ \ge\ 1-\alpha.
\]
\end{proposition*}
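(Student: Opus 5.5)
The argument follows the weighted-exchangeability route of Tibshirani et al.\ (2019), applied after transport. I will work with the scores $V_1,\dots,V_{n+1}$ and treat the $n+1$ pairs $Z_i=(\tilde X_i,Y_i)$, $i\le n$, and $Z_{n+1}=(X_{n+1},Y_{n+1})$ as independent. The first $n$ are drawn from $\tilde P$ and the last from $P$.

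\textbf{Step 1: the joint likelihood ratio.} Assumption (i) gives the factorisations $dP(x,y)=p_{X_t}(x)\,dP_{Y\mid X}(y\mid x)$ and $d\tilde P(x,y)=p_{\tilde X_t}(x)\,dP_{Y\mid X}(y\mid x)$ with the same conditional. Hence
\[
dP/d\tilde P(x,y)=w(x),
\]
and the joint law of $(Z_1,\dots,Z_{n+1})$ has density
\[
\prod_{i\le n}\tilde p(z_i)\,\cdot\,\tilde p(z_{n+1})\,w(x_{n+1})
\]
with respect to $\tilde P^{\otimes(n+1)}$. This is a weighted-exchangeable law: the density is a symmetric function times $w(x_{n+1})$.

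\textbf{Step 2: conditioning on the unordered data.} Let $E$ be the event that the multiset $\{Z_1,\dots,Z_{n+1}\}$ equals a given $\{z_1,\dots,z_{n+1}\}$; for simplicity assume the points are distinct, otherwise add a tie-breaking argument. Conditional on $E$, the probability that the test point is the element $z_j$ is
\[
p_j:=\frac{w(x_j)}{\sum_{k=1}^{n+1}w(x_k)}.
\]
To see this, sum the density over the permutations $\sigma$ with $\sigma(n+1)=j$. The symmetric factor cancels, which leaves exactly the weight of $z_j$. Conditionally on $E$, $V_{n+1}$ is therefore distributed as $\sum_j p_j\,\delta_{v_j}$, where $v_j$ is the score of $z_j$. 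Consequently
\[
\mathbb{P}\big(V_{n+1}\le Q_{1-\alpha}(\textstyle\sum_j p_j\delta_{v_j})\,\big|\,E\big)\ge 1-\alpha,
\]
where $Q_{1-\alpha}$ denotes the $(1-\alpha)$-quantile. Marginalising over $E$ gives the same bound unconditionally.

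\textbf{Step 3: matching the threshold.} This step is the main obstacle, because the deployed rule \eqref{eq:wtcc-threshold} places the test mass $w_{n+1}$ at $+\infty$ and does not use the symmetric quantile directly. I will compare the two quantiles as follows. Replacing the atom $\delta_{V_{n+1}}$ by $\delta_{+\infty}$ can only raise the quantile. Moreover, the event $\{V_{n+1}\le Q_{1-\alpha}(\sum_{i\le n}p_i\delta_{V_i}+p_{n+1}\delta_{V_{n+1}})\}$ coincides with $\{V_{n+1}\le Q_{1-\alpha}(\sum_{i\le n}p_i\delta_{V_i}+p_{n+1}\delta_{+\infty})\}$, by the standard lemma from Tibshirani et al.

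That lemma is proved by a direct comparison. If $V_{n+1}$ exceeds the first quantile, then moving its mass to $+\infty$ does not change the cumulative weight below any $q<V_{n+1}$, so the second quantile also lies below $V_{n+1}$. After clearing the normaliser $\sum_k w_k$, the second quantile is exactly $\widehat q^{\,\mathrm w}_{1-\alpha}(X_{n+1})$ from \eqref{eq:wtcc-threshold}.

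Hence $\{Y_{n+1}\in\Gamma_t^{\mathrm{wTCC}}(X_{n+1})\}$ equals $\{V_{n+1}\le\widehat q^{\,\mathrm w}_{1-\alpha}(X_{n+1})\}$, and this event has probability at least $1-\alpha$. The remaining care points are absolute continuity ($p_{\tilde X_t}>0$ wherever $p_{X_t}>0$, implicit in (ii)) and the handling of ties.
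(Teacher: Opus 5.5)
Your proposal is correct and follows essentially the same route as the paper. Both arguments use the posterior $w_j/\sum_k w_k$ for the test index given the unordered multiset, then compare the deployed threshold (test mass moved to $+\infty$) with the full weighted quantile; the only difference is that you invoke the two-sided equivalence lemma of Tibshirani et al., whereas the paper uses only the one-sided inclusion $q_{n+1}\ge q^\star$ (your ``can only raise the quantile'' remark), which already suffices for coverage.
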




\begin{proof}
Let $(\tilde X_i,Y_i)_{i=1}^{n}$ be i.i.d.\ from $\tilde P$ and let $(X_{n+1},Y_{n+1})$ be drawn from $P$, independently.
Define scores
\[
V_i=S_t(\tilde X_i,Y_i)\quad (i\le n),\qquad V_{n+1}=S_t(X_{n+1},Y_{n+1}),
\]
and weights $w_i=w(\tilde X_i)$ for $i\le n$ and $w_{n+1}=w(X_{n+1})$.

Let $\mathcal Z=\{(X_1,Y_1),\dots,(X_{n+1},Y_{n+1})\}$ denote the \emph{unordered multiset} of the $n+1$ pairs, where $(X_i,Y_i)=(\tilde X_i,Y_i)$ for $i\le n$ and $(X_{n+1},Y_{n+1})$ is the target test pair.
Let $I\in\{1,\dots,n+1\}$ denote the (random) index of the element in $\mathcal Z$ that was drawn from $P$.

\paragraph{Step 1 (posterior for the target index).}
Under covariate shift, the joint density ratio satisfies
\[
\frac{p_{P}(x,y)}{p_{\tilde P}(x,y)}
=
\frac{p_{X_t}(x)\,p(y\mid x)}{p_{\tilde X_t}(x)\,p(y\mid x)}
=
\frac{p_{X_t}(x)}{p_{\tilde X_t}(x)}
=
w(x).
\]
Therefore, by Bayes' rule (conditioning on the multiset $\mathcal Z$),
\[
\mathbb P(I=i\mid \mathcal Z)=\frac{w_i}{\sum_{k=1}^{n+1} w_k}.
\tag{*}
\]

\paragraph{Step 2 (compare the split threshold to the full weighted quantile).}
Define the \emph{full} weighted $(1-\alpha)$ quantile
\[
q^\star := \inf\Big\{q\in\mathbb R:\ \sum_{k=1}^{n+1} w_k\,\mathbf 1\{V_k\le q\}\ \ge\ (1-\alpha)\sum_{k=1}^{n+1} w_k\Big\}.
\]
Also define the split threshold that would be used if index $i$ were the test point:
\[
q_i := \inf\Big\{q\in\mathbb R:\ \sum_{k\ne i} w_k\,\mathbf 1\{V_k\le q\}\ \ge\ (1-\alpha)\sum_{k=1}^{n+1} w_k\Big\}.
\]
(For the realized test input $X_{n+1}$, the algorithm's threshold in \eqref{eq:wtcc-threshold} is exactly $q_{n+1}$, since
$(1-\alpha)\big(\sum_{k\ne n+1} w_k + w_{n+1}\big)=(1-\alpha)\sum_{k=1}^{n+1} w_k$.)

Because $\sum_{k\ne i} w_k\mathbf 1\{V_k\le q\}\le \sum_{k=1}^{n+1} w_k\mathbf 1\{V_k\le q\}$ for all $q$, the feasibility set defining $q_i$ is a subset of the feasibility set defining $q^\star$, hence
\[
q_i \ \ge\ q^\star \quad \text{for all } i.
\tag{**}
\]
In particular, on the event $\{I=i\}$ we have $\{V_I>q_I\}\subseteq \{V_I>q^\star\}$.

\paragraph{Step 3 (bound conditional miscoverage).}
Conditioning on $\mathcal Z$ and using (*) and (**),
\[
\mathbb P(V_I>q_I\mid \mathcal Z)
\ \le\
\mathbb P(V_I>q^\star\mid \mathcal Z)
=
\sum_{i:V_i>q^\star}\mathbb P(I=i\mid \mathcal Z)
=
\sum_{i:V_i>q^\star}\frac{w_i}{\sum_{k=1}^{n+1} w_k}.
\]
By the definition of $q^\star$, the total weight of indices with $V_i>q^\star$ is at most an $\alpha$ fraction of the total weight:
\[
\sum_{i:V_i>q^\star} w_i \ \le\ \alpha \sum_{k=1}^{n+1} w_k,
\]
so $\mathbb P(V_I>q_I\mid \mathcal Z)\le \alpha$. Taking expectations yields
\[
\mathbb P(V_I\le q_I)\ge 1-\alpha.
\]
Finally, $I$ is (by definition) the index of the target draw from $P$, and $q_I$ is exactly the weighted split threshold used for that target input. Therefore,
\[
\mathbb P\{Y_{n+1}\in \Gamma_t^{\mathrm{wTCC}}(X_{n+1})\}\ge 1-\alpha.
\]
\end{proof}

\paragraph{Practical note.}
weighted-TCC can be sharp when the \emph{residual} shift between $\tilde X_t=f(X_s)$ and $X_t$ is close to covariate shift and weights are stable. In practice, weights are estimated and may concentrate (small ESS), which is why we treat weighted-TCC as complementary to the guardrail behavior of TCC-KS.

\section{Implementation Details}
\label{app:impl}

This section describes the experimental protocol for TCC-KS, weighted-TCC, and baselines. We report the construction of source and target domains, data splits, model and transport training, calibration procedures, and evaluation metrics.

\subsection{Benchmarks and Domain Construction}
\label{app:impl:benchmarks}

\paragraph{CIFAR-100, Tiny-ImageNet.}
We construct target domains using deterministic image corruptions. We use shot noise, motion blur, brightness, and contrast at severities $1$ to $5$. For a fixed corruption type and severity, the source inputs are clean images and the target inputs are their corrupted counterparts. Labels are unchanged.

\paragraph{SAR$\to$RGB.}
We use paired SAR and RGB tiles with four land-cover classes. The source view is SAR and the target view is RGB. No synthetic corruption is applied.

\subsection{Data splits and sample usage}
\label{app:impl:splits}

\paragraph{CIFAR and Tiny-ImageNet.}
For each dataset and each target corruption setting, we partition the clean training set into disjoint subsets.
We use 10k labeled examples for source calibration $\mathcal{C}_s$.
We use 10k unlabeled target examples as the target pool used by TCC-KS to estimate the surrogate discrepancy.
We use 10k paired examples $(x_s,x_t)$ where $x_t$ is the deterministic corruption of $x_s$.
We use these pairs to train the transport map and to fit the density ratio model used by weighted-TCC.
We use the remaining training examples to train the target classifier $g_t$ on target-domain images.

\paragraph{SAR$\to$RGB.}
We split paired SAR and RGB tiles into four disjoint subsets of equal size, used for transport training, source calibration, the unlabeled target pool, and the labeled target test set.

\paragraph{Independence.}
To satisfy Assumption A3, we ensure that the samples used for learning $f$, computing conformal thresholds, estimating the KS discrepancy, and estimating importance weights are disjoint. When disjoint splitting is not feasible, we use cross-fitting.

\subsection{Target Predictor and Nonconformity Score}
\label{app:impl:gt-score}

\paragraph{Target classifier.}
For CIFAR and Tiny-ImageNet, the target predictor $g_t$ is a ResNet-18 trained on target-domain images using Adam with learning rate $10^{-3}$, batch size 128, and two epochs.
For SAR$\to$RGB, $g_t$ is a ResNet-18 trained on RGB tiles for a four-class task (agri, barrenland, grassland, and urban).
When a method requires evaluating $g_t$ on SAR inputs (e.g., the no-transport baseline), we first apply a fixed,
deterministic channel adapter to match the 3-channel input format expected by the RGB-trained network.

\paragraph{Score.}
All prediction sets use the least ambiguous class (LAC) score
\[
S_t(x,y) \;=\; 1-\hat p_t(y\mid x),
\qquad
\hat p_t(\cdot\mid x)=\mathrm{softmax}(g_t(x)).
\]
We evaluate $\alpha\in\{0.1,0.2,0.3,0.4\}$ when sweeping nominal levels. We compute split-conformal thresholds using exact order statistics with the standard right-continuous quantile convention.

\subsection{Transport map learning}
\label{app:impl:transport}

\paragraph{CIFAR and Tiny-ImageNet.}
We learn a transport map $f:\mathcal{X}_s\to\mathcal{X}_t$ on paired clean to corrupted images using a convolutional encoder-decoder.
The encoder uses channel widths 3, 32, 64, and 128 with downsampling by pooling.
The decoder uses channel widths 128, 64, and 32 with upsampling by transposed convolutions and a sigmoid output.
We train with an $\ell_1$ reconstruction objective $\|f(x_s)-x_t\|_1$.

We also evaluate a predictive alignment variant that adds a label-free KL term on paired inputs
\[
\mathcal{L}_{\mathrm{trans}}
\;=\;
\|f(x_s)-x_t\|_1
+
\lambda\,\mathrm{KL}\big(\hat p_t(\cdot\mid f(x_s))\ \Vert\ \hat p_t(\cdot\mid x_t)\big),
\]
with $\lambda=0.5$.
We evaluate multiple transport checkpoints at epochs 2, 5, 10, 20, 50, 80, and 150.

\paragraph{SAR$\to$RGB.}
We learn $f$ using a pix2pix-style U-Net generator trained on paired SAR and RGB tiles with an $\ell_1$ objective and the optional predictive KL term above.
We use Adam with learning rate $10^{-3}$ and betas $(0.5,0.999)$.

\subsection{TCC-KS Implementation}
\label{app:impl:tccks}

\paragraph{Surrogate.}
We use the uncertainty surrogate
\[
T(x) \;=\; 1-\max_{y\in\mathcal{Y}}\hat p_t(y\mid x),
\]
which is computable without labels.

\paragraph{One-sided KS Discrepancy.}
Using unlabeled samples from the target pool and transported samples $\tilde x_t=f(x_s)$, we compute empirical CDFs $\widehat F_T$ and $\widehat{\tilde F}_T$ and estimate the one-sided gap in the target-harder direction
\[
\hat\delta \;=\; \sup_{u\in\mathbb{R}}\big\{\widehat{\tilde F}_T(u)-\widehat F_T(u)\big\}_+.
\]
We form a high-probability upper bound $\delta^+$ using the two-sample DKW inflation stated in the main text with confidence parameter $\eta=0.1$.

\paragraph{Adjusted Level and Calibration.}
We set
\[
\alpha^\star \;=\; \max(0,\alpha-\delta^+),
\]
and compute the transported split-conformal threshold at level $\alpha^\star$ using transported labeled calibration scores $\{S_t(f(x_s^{(i)}),y^{(i)})\}_{i=1}^n$.

\subsection{Weighted-TCC Implementation}
\label{app:impl:wtcc}

weighted-TCC performs importance weighting after transport, treating $\tilde X_t=f(X_s)$ as the labeled calibration distribution and $X_t$ as deployment.

\paragraph{Density Ratio Estimation.}
We estimate density ratios via a domain classifier that distinguishes real target inputs $x_t$ from transported inputs
$\tilde x_t=f(x_s)$ (both in the target input format) using features extracted by the target predictor.
Concretely, we use the target-model logits $g_t(x)$ as features and fit a logistic-regression classifier on up to 5k samples
from each domain for 20 epochs with learning rate $10^{-3}$.
This yields an estimate of the density ratio in the induced logit space (i.e., shift as reflected by the predictor’s outputs),
which is more stable than ratio estimation in raw pixel space.

\paragraph{Weights.}
Let $\hat p_{\mathrm{dom}}(x)$ denote the classifier probability that $x$ is a real target input.
We define weights via odds
\[
w(x)\;=\;\frac{\hat p_{\mathrm{dom}}(x)}{1-\hat p_{\mathrm{dom}}(x)},
\]
and clip weights at 5.

\paragraph{Weighted Conformal Threshold.}
We compute a weighted conformal threshold on transported labeled calibration scores using weights $w(\tilde X_i)$ and apply the resulting prediction sets on target test inputs.

\subsection{Baselines}
\label{app:impl:baselines}

We compare against the following baselines.
Oracle CP calibrates using labeled target counterparts of the source calibration examples and is reported as a reference.
Target-only CP calibrates using labeled target calibration data drawn from the target pool.
Transport-only CP calibrates on transported labeled scores without KS correction and without weighting.
Weighted CP without transport applies importance-weighted conformal calibration with the identity map in place of $f$.

\subsection{Evaluation Metrics and Diagnostics}
\label{app:impl:metrics}

All metrics are computed on a held-out labeled target test set. Labels are used only for evaluation.

\paragraph{Empirical Marginal Coverage.}
Given target test examples $\{(X_{t,i},Y_i)\}_{i=1}^{n_{\mathrm{test}}}$ and prediction sets $\Gamma(\cdot)$, we report
\[
\widehat{\mathrm{cov}}
\;=\;
\frac{1}{n_{\mathrm{test}}}\sum_{i=1}^{n_{\mathrm{test}}}\mathbf{1}\{Y_i\in\Gamma(X_{t,i})\}.
\]

\paragraph{Average Prediction Set Size.}
We report
\[
\widehat{\mathrm{size}}
\;=\;
\frac{1}{n_{\mathrm{test}}}\sum_{i=1}^{n_{\mathrm{test}}}|\Gamma(X_{t,i})|.
\]

\paragraph{KS-Aligned Diagnostic for TCC-KS.}
We report the estimated upper bound $\delta^+$ and the resulting worst-case miscoverage bound
\[
\alpha_{\mathrm{bnd}}
\;=\;
\alpha^\star+\delta^+,
\qquad
\alpha^\star=\max(0,\alpha-\delta^+).
\]
We emphasize that $\alpha_{\mathrm{bnd}}$ is a conservative certificate and is not expected to match empirical miscoverage under strong shift.

\paragraph{Weight Stability for Weighted-TCC.}
Given weights $\{w_j\}_{j=1}^{N}$ computed on the unlabeled sample used for weight estimation, we report the effective sample size
\[
\mathrm{ESS}
\;=\;
\frac{\big(\sum_{j=1}^{N} w_j\big)^2}{\sum_{j=1}^{N} w_j^2},
\]
and its normalized form
\[
\mathrm{ESS}\%
\;=\;
100\cdot\frac{\mathrm{ESS}}{N}.
\]

\paragraph{Surrogate--Score Alignment Audit.}
To probe Assumptions A2 and A2$^\prime$, we compute diagnostics on labeled target test data and on transported labeled test data.
We report Spearman rank correlation between surrogate and true-label score,
\[
\rho(T,S)
\;=\;
\mathrm{Corr}\big(\mathrm{rank}(T(X)),\mathrm{rank}(S_t(X,Y))\big).
\]

We also compute bin-wise monotonicity diagnostics using $B=10$ quantile bins of the surrogate.
Let $\mathcal{I}_b$ be the indices in bin $b$, ordered by increasing surrogate level.
Define the bin statistic
\[
\mu_b \;=\; \frac{1}{|\mathcal{I}_b|}\sum_{i\in\mathcal{I}_b} S_t(X_i,Y_i),
\qquad
q_{0.9,b} \;=\; \inf\Big\{u:\ \frac{1}{|\mathcal{I}_b|}\sum_{i\in\mathcal{I}_b}\mathbf{1}\{S_t(X_i,Y_i)\le u\}\ge 0.9\Big\}.
\]
We report violation rates for mean and upper-tail monotonicity as
\[
\mathrm{Viol}_{\mathrm{mean}}
\;=\;
\frac{1}{B-1}\sum_{b=1}^{B-1}\mathbf{1}\{\mu_{b+1}<\mu_b\},
\qquad
\mathrm{Viol}_{q90}
\;=\;
\frac{1}{B-1}\sum_{b=1}^{B-1}\mathbf{1}\{q_{0.9,b+1}<q_{0.9,b}\}.
\]

\paragraph{Averaging Across Target Conditions.}
For corrupted-image benchmarks we often report averages across corruption types and severities.
If there are $K$ target conditions and a metric $M_k$ computed per condition, we report the macro-average
\[
\overline{M} \;=\; \frac{1}{K}\sum_{k=1}^{K} M_k.
\]

\section{Detailed Results for Experiment 1}\label{app:resutls1}

To expose better the per-corruption variability in Table~\ref{tab:exp1_overall} and Table~\ref{tab:exp1_stress}, Figure~\ref{fig:coverage_distribution}
shows the per-condition coverage distributions, and Figure~\ref{fig:setsize_distribution}
shows the corresponding average set-size distributions. Each point corresponds to one
target condition, and horizontal bars denote medians across conditions. 

In addition to Figures~\ref{fig:coverage_distribution} and~\ref{fig:setsize_distribution}, we report results for each corruption for severities that are mild (severity=3) and high (severity=5) for $\alpha \in \{0.1, 0.2, 0.3, 0.4\}$ and using the $\ell_1+\mathrm{KL}$ transport loss at epoch 150.

\begin{figure}
    \centering
    \includegraphics[width=0.82\linewidth]{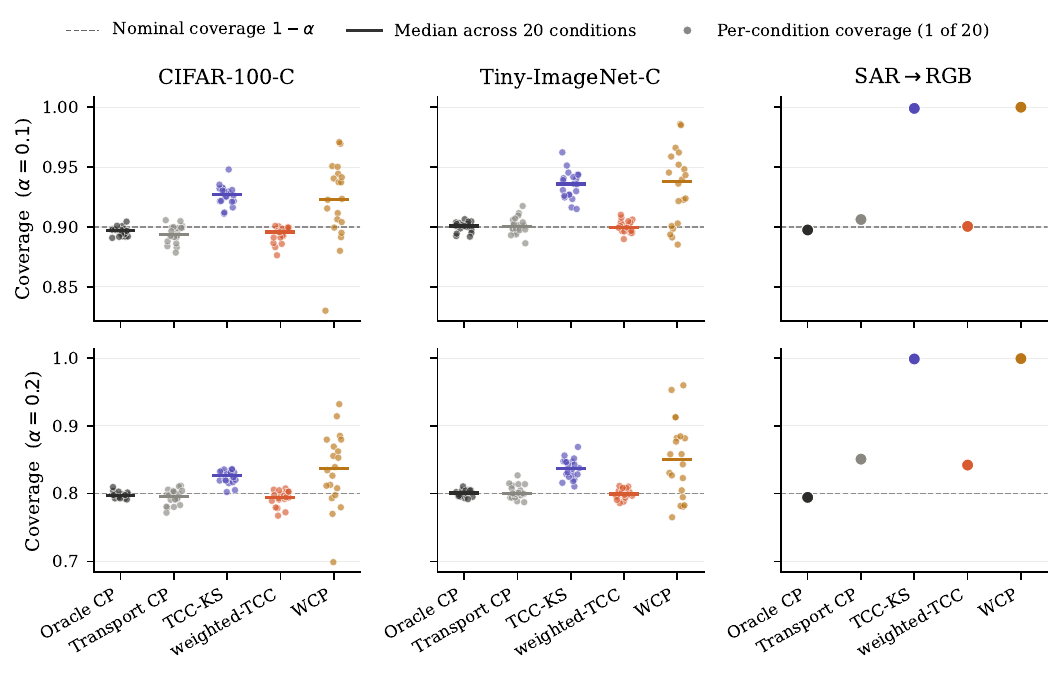}
    \caption{Per-condition coverage distributions for the settings summarized in
Table~\ref{tab:exp1_overall}. For CIFAR-100-C and Tiny-ImageNet-C, each dot
corresponds to one of the 20 target conditions
(4 corruptions $\times$ 5 severities; epoch 150, $\ell_1{+}\mathrm{KL}$
transport), and the horizontal bar marks the median across conditions.
SAR$\to$RGB has a single evaluation setting and is shown as one point per
method. The dashed line marks nominal coverage $1-\alpha$. The distributions
show that Transport CP and weighted-TCC closely track Oracle CP across
conditions, while TCC-KS consistently shifts coverage upward as a
$\delta^+$-driven validity guardrail. WCP exhibits the largest
per-condition variability, including individual under-coverage cases.}
    \label{fig:coverage_distribution}
\end{figure}

\begin{figure}
    \centering
    \includegraphics[width=0.82\linewidth]{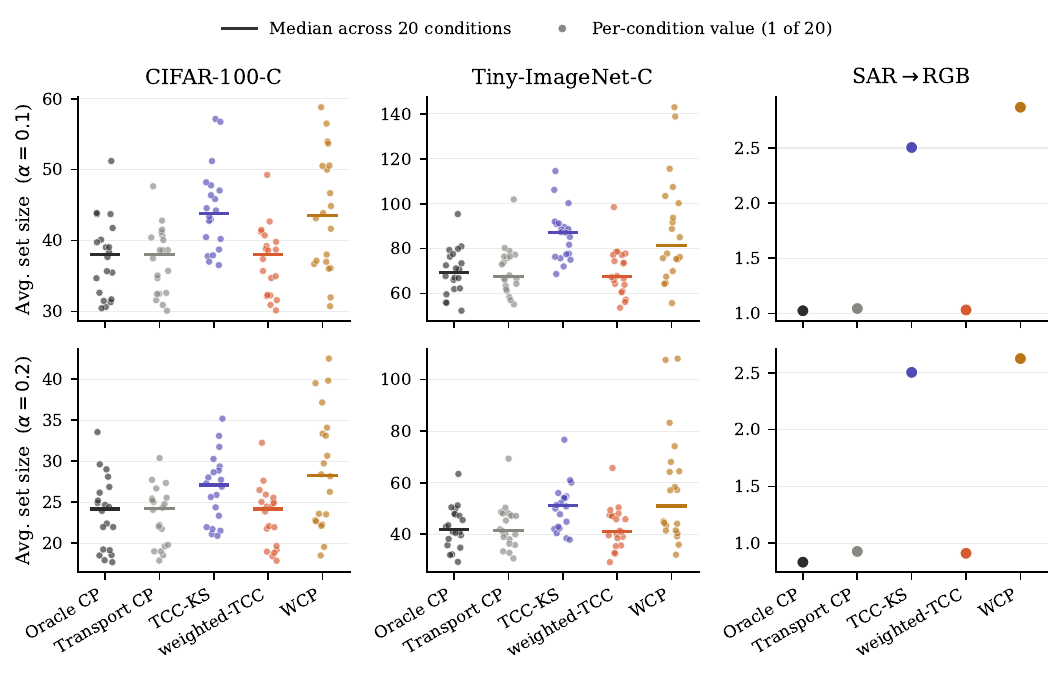}
    \caption{Per-condition average prediction-set-size distributions for the
settings summarized in Table~\ref{tab:exp1_overall}. Conventions follow
Figure~\ref{fig:coverage_distribution}. The y-axes are scaled separately
across datasets because absolute set sizes differ substantially
(CIFAR-100-C: roughly 30--60; Tiny-ImageNet-C: roughly 60--140;
SAR$\to$RGB: roughly 1--3). Transport CP and weighted-TCC remain close to
Oracle CP in both median and spread, whereas TCC-KS incurs a moderate
efficiency cost for its $\delta^+$-driven safety margin. WCP shows the
widest set-size spread, matching its larger coverage variability in
Figure~\ref{fig:coverage_distribution}.}
    \label{fig:setsize_distribution}
\end{figure}

\begin{table}[H]
\centering
\scriptsize
\setlength{\tabcolsep}{3pt}
\begin{tabular}{cl*{24}{c}}
\toprule
 & & \multicolumn{12}{c}{Severity $3$} & \multicolumn{12}{c}{Severity $5$} \\
\cmidrule(lr){3-14}\cmidrule(lr){15-26}
 & & \multicolumn{3}{c}{$\alpha_{\mathrm{nom}}{=}0.10$} &
     \multicolumn{3}{c}{$\alpha_{\mathrm{nom}}{=}0.20$} &
     \multicolumn{3}{c}{$\alpha_{\mathrm{nom}}{=}0.30$} &
     \multicolumn{3}{c}{$\alpha_{\mathrm{nom}}{=}0.40$} &
     \multicolumn{3}{c}{$\alpha_{\mathrm{nom}}{=}0.10$} &
     \multicolumn{3}{c}{$\alpha_{\mathrm{nom}}{=}0.20$} &
     \multicolumn{3}{c}{$\alpha_{\mathrm{nom}}{=}0.30$} &
     \multicolumn{3}{c}{$\alpha_{\mathrm{nom}}{=}0.40$} \\
\cmidrule(lr){3-5}\cmidrule(lr){6-8}\cmidrule(lr){9-11}\cmidrule(lr){12-14}%
\cmidrule(lr){15-17}\cmidrule(lr){18-20}\cmidrule(lr){21-23}\cmidrule(lr){24-26}
 & Method &
 Cov & Size & $\delta^+$ &
 Cov & Size & $\delta^+$ &
 Cov & Size & $\delta^+$ &
 Cov & Size & $\delta^+$ &
 Cov & Size & $\delta^+$ &
 Cov & Size & $\delta^+$ &
 Cov & Size & $\delta^+$ &
 Cov & Size & $\delta^+$ \\
\midrule
 & Oracle CP & 0.90 & 51.21 & 0.03 & 0.81 & 33.54 & 0.03 & 0.70 & 22.70 & 0.03 & 0.61 & 15.78 & 0.03 & 0.89 & 39.06 & 0.01 & 0.78 & 25.60 & 0.01 & 0.68 & 17.44 & 0.01 & 0.58 & 11.93 & 0.01 \\
 & Transport only & 0.89 & 47.66 & 0.03 & 0.80 & 31.03 & 0.03 & 0.69 & 20.73 & 0.03 & 0.59 & 14.14 & 0.03 & 0.89 & 38.65 & 0.01 & 0.78 & 25.30 & 0.01 & 0.67 & 17.13 & 0.01 & 0.58 & 11.73 & 0.01 \\
 & WCP-no-transport & 0.83 & 36.69 & 0.03 & 0.70 & 22.79 & 0.03 & 0.58 & 14.27 & 0.03 & 0.47 & 8.87 & 0.03 & 0.92 & 44.87 & 0.01 & 0.85 & 32.03 & 0.01 & 0.76 & 23.30 & 0.01 & 0.66 & 16.41 & 0.01 \\
 & weighted-TCC & 0.90 & 49.26 & 0.03 & 0.80 & 32.10 & 0.03 & 0.69 & 21.45 & 0.03 & 0.59 & 14.67 & 0.03 & 0.88 & 36.51 & 0.01 & 0.77 & 23.66 & 0.01 & 0.67 & 16.08 & 0.01 & 0.57 & 11.02 & 0.01 \\
 & TCC-KS & 0.93 & 56.74 & 0.03 & 0.84 & 37.76 & 0.03 & 0.74 & 25.97 & 0.03 & 0.64 & 18.17 & 0.03 & 0.93 & 46.39 & 0.01 & 0.83 & 31.15 & 0.01 & 0.74 & 21.44 & 0.01 & 0.64 & 14.99 & 0.01 \\
\bottomrule
\end{tabular}
\caption{Coverage, average prediction set size, and empirical $\delta^+$ on CIFAR-100-C (brightness) using the $\ell_1+\mathrm{KL}$ transport objective at epoch 150.}
\label{tab:cifar100_brightness_e150_l1kl}
\end{table}

\begin{table}[H]
\centering
\scriptsize
\setlength{\tabcolsep}{3pt}
\begin{tabular}{cl*{24}{c}}
\toprule
 & & \multicolumn{12}{c}{Severity $3$} & \multicolumn{12}{c}{Severity $5$} \\
\cmidrule(lr){3-14}\cmidrule(lr){15-26}
 & & \multicolumn{3}{c}{$\alpha_{\mathrm{nom}}{=}0.10$} &
     \multicolumn{3}{c}{$\alpha_{\mathrm{nom}}{=}0.20$} &
     \multicolumn{3}{c}{$\alpha_{\mathrm{nom}}{=}0.30$} &
     \multicolumn{3}{c}{$\alpha_{\mathrm{nom}}{=}0.40$} &
     \multicolumn{3}{c}{$\alpha_{\mathrm{nom}}{=}0.10$} &
     \multicolumn{3}{c}{$\alpha_{\mathrm{nom}}{=}0.20$} &
     \multicolumn{3}{c}{$\alpha_{\mathrm{nom}}{=}0.30$} &
     \multicolumn{3}{c}{$\alpha_{\mathrm{nom}}{=}0.40$} \\
\cmidrule(lr){3-5}\cmidrule(lr){6-8}\cmidrule(lr){9-11}\cmidrule(lr){12-14}%
\cmidrule(lr){15-17}\cmidrule(lr){18-20}\cmidrule(lr){21-23}\cmidrule(lr){24-26}
 & Method &
 Cov & Size & $\delta^+$ &
 Cov & Size & $\delta^+$ &
 Cov & Size & $\delta^+$ &
 Cov & Size & $\delta^+$ &
 Cov & Size & $\delta^+$ &
 Cov & Size & $\delta^+$ &
 Cov & Size & $\delta^+$ &
 Cov & Size & $\delta^+$ \\
\midrule
 & Oracle CP & 0.89 & 27.51 & 0.02 & 0.79 & 17.94 & 0.02 & 0.69 & 11.63 & 0.02 & 0.59 & 7.57 & 0.02 & 0.89 & 24.61 & 0.02 & 0.79 & 15.93 & 0.02 & 0.68 & 10.39 & 0.02 & 0.58 & 6.77 & 0.02 \\
 & Transport only & 0.88 & 25.92 & 0.02 & 0.81 & 19.64 & 0.02 & 0.72 & 14.54 & 0.02 & 0.62 & 10.55 & 0.02 & 0.89 & 25.07 & 0.02 & 0.82 & 18.15 & 0.02 & 0.72 & 13.05 & 0.02 & 0.62 & 9.25 & 0.02 \\
 & WCP-no-transport & 0.76 & 18.72 & 0.02 & 0.88 & 28.16 & 0.02 & 0.82 & 21.76 & 0.02 & 0.74 & 16.61 & 0.02 & 0.84 & 26.47 & 0.02 & 0.79 & 19.33 & 0.02 & 0.73 & 14.06 & 0.02 & 0.64 & 10.08 & 0.02 \\
 & weighted-TCC & 0.88 & 25.00 & 0.02 & 0.80 & 18.99 & 0.02 & 0.71 & 13.98 & 0.02 & 0.61 & 10.03 & 0.02 & 0.89 & 24.46 & 0.02 & 0.80 & 17.57 & 0.02 & 0.71 & 12.62 & 0.02 & 0.61 & 8.91 & 0.02 \\
 & TCC-KS & 0.92 & 30.89 & 0.02 & 0.83 & 21.97 & 0.02 & 0.74 & 15.96 & 0.02 & 0.64 & 11.40 & 0.02 & 0.93 & 31.03 & 0.02 & 0.84 & 22.28 & 0.02 & 0.75 & 16.06 & 0.02 & 0.65 & 11.43 & 0.02 \\
\bottomrule
\end{tabular}
\caption{Coverage, average prediction set size, and empirical $\delta^+$ on CIFAR-100-C (contrast) using the $\ell_1+\mathrm{KL}$ transport objective at epoch 150.}
\label{tab:cifar100_contrast_e150_l1kl}
\end{table}

\begin{table}[H]
\centering
\scriptsize
\setlength{\tabcolsep}{3pt}
\begin{tabular}{cl*{24}{c}}
\toprule
 & & \multicolumn{12}{c}{Severity $3$} & \multicolumn{12}{c}{Severity $5$} \\
\cmidrule(lr){3-14}\cmidrule(lr){15-26}
 & & \multicolumn{3}{c}{$\alpha_{\mathrm{nom}}{=}0.10$} &
     \multicolumn{3}{c}{$\alpha_{\mathrm{nom}}{=}0.20$} &
     \multicolumn{3}{c}{$\alpha_{\mathrm{nom}}{=}0.30$} &
     \multicolumn{3}{c}{$\alpha_{\mathrm{nom}}{=}0.40$} &
     \multicolumn{3}{c}{$\alpha_{\mathrm{nom}}{=}0.10$} &
     \multicolumn{3}{c}{$\alpha_{\mathrm{nom}}{=}0.20$} &
     \multicolumn{3}{c}{$\alpha_{\mathrm{nom}}{=}0.30$} &
     \multicolumn{3}{c}{$\alpha_{\mathrm{nom}}{=}0.40$} \\
\cmidrule(lr){3-5}\cmidrule(lr){6-8}\cmidrule(lr){9-11}\cmidrule(lr){12-14}%
\cmidrule(lr){15-17}\cmidrule(lr){18-20}\cmidrule(lr){21-23}\cmidrule(lr){24-26}
 & Method &
 Cov & Size & $\delta^+$ &
 Cov & Size & $\delta^+$ &
 Cov & Size & $\delta^+$ &
 Cov & Size & $\delta^+$ &
 Cov & Size & $\delta^+$ &
 Cov & Size & $\delta^+$ &
 Cov & Size & $\delta^+$ &
 Cov & Size & $\delta^+$ \\
\midrule
 & Oracle CP & 0.90 & 33.10 & 0.02 & 0.80 & 21.66 & 0.02 & 0.70 & 14.40 & 0.02 & 0.60 & 9.57 & 0.02 & 0.90 & 35.03 & 0.03 & 0.80 & 22.90 & 0.03 & 0.70 & 15.12 & 0.03 & 0.60 & 10.03 & 0.03 \\
 & Transport only & 0.89 & 31.55 & 0.02 & 0.79 & 20.89 & 0.02 & 0.69 & 14.11 & 0.02 & 0.59 & 9.63 & 0.02 & 0.89 & 34.40 & 0.03 & 0.79 & 22.59 & 0.03 & 0.69 & 15.21 & 0.03 & 0.59 & 10.24 & 0.03 \\
 & WCP-no-transport & 0.91 & 35.11 & 0.02 & 0.82 & 23.24 & 0.02 & 0.71 & 15.64 & 0.02 & 0.60 & 10.63 & 0.02 & 0.93 & 40.08 & 0.03 & 0.86 & 28.13 & 0.03 & 0.78 & 20.25 & 0.03 & 0.68 & 14.51 & 0.03 \\
 & weighted-TCC & 0.89 & 31.71 & 0.02 & 0.79 & 20.98 & 0.02 & 0.69 & 14.18 & 0.02 & 0.59 & 9.68 & 0.02 & 0.90 & 34.90 & 0.03 & 0.80 & 22.85 & 0.03 & 0.70 & 15.38 & 0.03 & 0.60 & 10.35 & 0.03 \\
 & TCC-KS & 0.93 & 38.08 & 0.02 & 0.84 & 25.59 & 0.02 & 0.74 & 17.49 & 0.02 & 0.64 & 12.13 & 0.02 & 0.95 & 45.43 & 0.03 & 0.88 & 31.43 & 0.03 & 0.80 & 22.26 & 0.03 & 0.71 & 15.74 & 0.03 \\
\bottomrule
\end{tabular}
\caption{Coverage, average prediction set size, and empirical $\delta^+$ on CIFAR-100-C (motion-blur) using the $\ell_1+\mathrm{KL}$ transport objective at epoch 150.}
\label{tab:cifar100_motion_blur_e150_l1kl}
\end{table}

\begin{table}[H]
\centering
\scriptsize
\setlength{\tabcolsep}{3pt}
\begin{tabular}{cl*{24}{c}}
\toprule
 & & \multicolumn{12}{c}{Severity $3$} & \multicolumn{12}{c}{Severity $5$} \\
\cmidrule(lr){3-14}\cmidrule(lr){15-26}
 & & \multicolumn{3}{c}{$\alpha_{\mathrm{nom}}{=}0.10$} &
     \multicolumn{3}{c}{$\alpha_{\mathrm{nom}}{=}0.20$} &
     \multicolumn{3}{c}{$\alpha_{\mathrm{nom}}{=}0.30$} &
     \multicolumn{3}{c}{$\alpha_{\mathrm{nom}}{=}0.40$} &
     \multicolumn{3}{c}{$\alpha_{\mathrm{nom}}{=}0.10$} &
     \multicolumn{3}{c}{$\alpha_{\mathrm{nom}}{=}0.20$} &
     \multicolumn{3}{c}{$\alpha_{\mathrm{nom}}{=}0.30$} &
     \multicolumn{3}{c}{$\alpha_{\mathrm{nom}}{=}0.40$} \\
\cmidrule(lr){3-5}\cmidrule(lr){6-8}\cmidrule(lr){9-11}\cmidrule(lr){12-14}%
\cmidrule(lr){15-17}\cmidrule(lr){18-20}\cmidrule(lr){21-23}\cmidrule(lr){24-26}
 & Method &
 Cov & Size & $\delta^+$ &
 Cov & Size & $\delta^+$ &
 Cov & Size & $\delta^+$ &
 Cov & Size & $\delta^+$ &
 Cov & Size & $\delta^+$ &
 Cov & Size & $\delta^+$ &
 Cov & Size & $\delta^+$ &
 Cov & Size & $\delta^+$ \\
\midrule
 & Oracle CP & 0.90 & 46.07 & 0.03 & 0.80 & 30.17 & 0.03 & 0.70 & 20.02 & 0.03 & 0.60 & 13.50 & 0.03 & 0.90 & 54.60 & 0.04 & 0.80 & 36.55 & 0.04 & 0.70 & 24.76 & 0.04 & 0.60 & 16.88 & 0.04 \\
 & Transport only & 0.89 & 44.16 & 0.03 & 0.79 & 28.83 & 0.03 & 0.69 & 19.02 & 0.03 & 0.59 & 12.77 & 0.03 & 0.90 & 54.26 & 0.04 & 0.80 & 36.21 & 0.04 & 0.70 & 24.52 & 0.04 & 0.60 & 16.63 & 0.04 \\
 & WCP-no-transport & 0.91 & 50.20 & 0.03 & 0.83 & 34.35 & 0.03 & 0.74 & 24.03 & 0.03 & 0.64 & 17.01 & 0.03 & 0.94 & 64.10 & 0.04 & 0.88 & 45.52 & 0.04 & 0.81 & 32.27 & 0.04 & 0.73 & 23.18 & 0.04 \\
 & weighted-TCC & 0.89 & 44.83 & 0.03 & 0.79 & 29.26 & 0.03 & 0.69 & 19.30 & 0.03 & 0.59 & 12.96 & 0.03 & 0.90 & 54.75 & 0.04 & 0.80 & 36.44 & 0.04 & 0.70 & 24.66 & 0.04 & 0.60 & 16.74 & 0.04 \\
 & TCC-KS & 0.93 & 54.74 & 0.03 & 0.85 & 37.36 & 0.03 & 0.77 & 26.09 & 0.03 & 0.68 & 18.55 & 0.03 & 0.96 & 70.43 & 0.04 & 0.90 & 50.37 & 0.04 & 0.84 & 36.45 & 0.04 & 0.77 & 26.59 & 0.04 \\
\bottomrule
\end{tabular}
\caption{Coverage, average prediction set size, and empirical $\delta^+$ on CIFAR-100-C (shot-noise) using the $\ell_1+\mathrm{KL}$ transport objective at epoch 150.}
\label{tab:cifar100_shot_noise_e150_l1kl}
\end{table}

\begin{table}[H]
\centering
\scriptsize
\setlength{\tabcolsep}{3pt}
\begin{tabular}{cl*{24}{c}}
\toprule
 & & \multicolumn{12}{c}{Severity $3$} & \multicolumn{12}{c}{Severity $5$} \\
\cmidrule(lr){3-14}\cmidrule(lr){15-26}
 & & \multicolumn{3}{c}{$\alpha_{\mathrm{nom}}{=}0.10$} &
     \multicolumn{3}{c}{$\alpha_{\mathrm{nom}}{=}0.20$} &
     \multicolumn{3}{c}{$\alpha_{\mathrm{nom}}{=}0.30$} &
     \multicolumn{3}{c}{$\alpha_{\mathrm{nom}}{=}0.40$} &
     \multicolumn{3}{c}{$\alpha_{\mathrm{nom}}{=}0.10$} &
     \multicolumn{3}{c}{$\alpha_{\mathrm{nom}}{=}0.20$} &
     \multicolumn{3}{c}{$\alpha_{\mathrm{nom}}{=}0.30$} &
     \multicolumn{3}{c}{$\alpha_{\mathrm{nom}}{=}0.40$} \\
\cmidrule(lr){3-5}\cmidrule(lr){6-8}\cmidrule(lr){9-11}\cmidrule(lr){12-14}%
\cmidrule(lr){15-17}\cmidrule(lr){18-20}\cmidrule(lr){21-23}\cmidrule(lr){24-26}
 & Method &
 Cov & Size & $\delta^+$ &
 Cov & Size & $\delta^+$ &
 Cov & Size & $\delta^+$ &
 Cov & Size & $\delta^+$ &
 Cov & Size & $\delta^+$ &
 Cov & Size & $\delta^+$ &
 Cov & Size & $\delta^+$ &
 Cov & Size & $\delta^+$ \\
\midrule
 & Oracle CP & 0.90 & 96.35 & 0.02 & 0.80 & 73.55 & 0.02 & 0.70 & 55.59 & 0.02 & 0.60 & 41.24 & 0.02 & 0.90 & 91.45 & 0.01 & 0.80 & 69.59 & 0.01 & 0.70 & 52.32 & 0.01 & 0.60 & 38.43 & 0.01 \\
 & Transport only & 0.89 & 94.77 & 0.02 & 0.79 & 72.27 & 0.02 & 0.69 & 54.65 & 0.02 & 0.59 & 40.49 & 0.02 & 0.89 & 91.17 & 0.01 & 0.79 & 69.37 & 0.01 & 0.69 & 52.21 & 0.01 & 0.59 & 38.32 & 0.01 \\
 & WCP-no-transport & 0.92 & 103.23 & 0.02 & 0.85 & 82.93 & 0.02 & 0.77 & 67.34 & 0.02 & 0.69 & 55.05 & 0.02 & 0.93 & 98.88 & 0.01 & 0.87 & 78.69 & 0.01 & 0.80 & 63.77 & 0.01 & 0.73 & 51.97 & 0.01 \\
 & weighted-TCC & 0.89 & 95.09 & 0.02 & 0.79 & 72.35 & 0.02 & 0.69 & 54.66 & 0.02 & 0.59 & 40.47 & 0.02 & 0.89 & 91.32 & 0.01 & 0.79 & 69.46 & 0.01 & 0.69 & 52.29 & 0.01 & 0.59 & 38.37 & 0.01 \\
 & TCC-KS & 0.93 & 107.49 & 0.02 & 0.87 & 86.28 & 0.02 & 0.80 & 70.62 & 0.02 & 0.72 & 58.12 & 0.02 & 0.94 & 102.89 & 0.01 & 0.89 & 82.12 & 0.01 & 0.83 & 67.01 & 0.01 & 0.77 & 54.89 & 0.01 \\
\bottomrule
\end{tabular}
\caption{Coverage, average prediction set size, and empirical $\delta^+$ on Tiny-ImageNet-C (brightness) using the $\ell_1+\mathrm{KL}$ transport objective at epoch 150.}
\label{tab:tinyimagenet_brightness_e150_l1kl}
\end{table}

\begin{table}[H]
\centering
\scriptsize
\setlength{\tabcolsep}{3pt}
\begin{tabular}{cl*{24}{c}}
\toprule
 & & \multicolumn{12}{c}{Severity $3$} & \multicolumn{12}{c}{Severity $5$} \\
\cmidrule(lr){3-14}\cmidrule(lr){15-26}
 & & \multicolumn{3}{c}{$\alpha_{\mathrm{nom}}{=}0.10$} &
     \multicolumn{3}{c}{$\alpha_{\mathrm{nom}}{=}0.20$} &
     \multicolumn{3}{c}{$\alpha_{\mathrm{nom}}{=}0.30$} &
     \multicolumn{3}{c}{$\alpha_{\mathrm{nom}}{=}0.40$} &
     \multicolumn{3}{c}{$\alpha_{\mathrm{nom}}{=}0.10$} &
     \multicolumn{3}{c}{$\alpha_{\mathrm{nom}}{=}0.20$} &
     \multicolumn{3}{c}{$\alpha_{\mathrm{nom}}{=}0.30$} &
     \multicolumn{3}{c}{$\alpha_{\mathrm{nom}}{=}0.40$} \\
\cmidrule(lr){3-5}\cmidrule(lr){6-8}\cmidrule(lr){9-11}\cmidrule(lr){12-14}%
\cmidrule(lr){15-17}\cmidrule(lr){18-20}\cmidrule(lr){21-23}\cmidrule(lr){24-26}
 & Method &
 Cov & Size & $\delta^+$ &
 Cov & Size & $\delta^+$ &
 Cov & Size & $\delta^+$ &
 Cov & Size & $\delta^+$ &
 Cov & Size & $\delta^+$ &
 Cov & Size & $\delta^+$ &
 Cov & Size & $\delta^+$ &
 Cov & Size & $\delta^+$ \\
\midrule
 & Oracle CP & 0.90 & 95.04 & 0.03 & 0.80 & 72.45 & 0.03 & 0.70 & 54.88 & 0.03 & 0.60 & 40.98 & 0.03 & 0.90 & 97.29 & 0.02 & 0.80 & 74.58 & 0.02 & 0.70 & 56.78 & 0.02 & 0.60 & 42.58 & 0.02 \\
 & Transport only & 0.89 & 93.66 & 0.03 & 0.79 & 71.24 & 0.03 & 0.69 & 53.78 & 0.03 & 0.59 & 40.04 & 0.03 & 0.89 & 96.15 & 0.02 & 0.79 & 73.40 & 0.02 & 0.69 & 55.62 & 0.02 & 0.59 & 41.54 & 0.02 \\
 & WCP-no-transport & 0.93 & 101.96 & 0.03 & 0.86 & 81.72 & 0.03 & 0.78 & 66.25 & 0.03 & 0.70 & 54.08 & 0.03 & 0.94 & 105.14 & 0.02 & 0.89 & 85.16 & 0.02 & 0.83 & 69.75 & 0.02 & 0.77 & 57.40 & 0.02 \\
 & weighted-TCC & 0.89 & 94.01 & 0.03 & 0.79 & 71.31 & 0.03 & 0.69 & 53.77 & 0.03 & 0.59 & 40.01 & 0.03 & 0.89 & 96.48 & 0.02 & 0.79 & 73.48 & 0.02 & 0.69 & 55.63 & 0.02 & 0.59 & 41.54 & 0.02 \\
 & TCC-KS & 0.94 & 105.99 & 0.03 & 0.88 & 84.87 & 0.03 & 0.81 & 69.30 & 0.03 & 0.74 & 57.05 & 0.03 & 0.95 & 109.58 & 0.02 & 0.90 & 89.17 & 0.02 & 0.85 & 74.03 & 0.02 & 0.79 & 62.00 & 0.02 \\
\bottomrule
\end{tabular}
\caption{Coverage, average prediction set size, and empirical $\delta^+$ on Tiny-ImageNet-C (contrast) using the $\ell_1+\mathrm{KL}$ transport objective at epoch 150.}
\label{tab:tinyimagenet_contrast_e150_l1kl}
\end{table}

\begin{table}[H]
\centering
\scriptsize
\setlength{\tabcolsep}{3pt}
\begin{tabular}{cl*{24}{c}}
\toprule
 & & \multicolumn{12}{c}{Severity $3$} & \multicolumn{12}{c}{Severity $5$} \\
\cmidrule(lr){3-14}\cmidrule(lr){15-26}
 & & \multicolumn{3}{c}{$\alpha_{\mathrm{nom}}{=}0.10$} &
     \multicolumn{3}{c}{$\alpha_{\mathrm{nom}}{=}0.20$} &
     \multicolumn{3}{c}{$\alpha_{\mathrm{nom}}{=}0.30$} &
     \multicolumn{3}{c}{$\alpha_{\mathrm{nom}}{=}0.40$} &
     \multicolumn{3}{c}{$\alpha_{\mathrm{nom}}{=}0.10$} &
     \multicolumn{3}{c}{$\alpha_{\mathrm{nom}}{=}0.20$} &
     \multicolumn{3}{c}{$\alpha_{\mathrm{nom}}{=}0.30$} &
     \multicolumn{3}{c}{$\alpha_{\mathrm{nom}}{=}0.40$} \\
\cmidrule(lr){3-5}\cmidrule(lr){6-8}\cmidrule(lr){9-11}\cmidrule(lr){12-14}%
\cmidrule(lr){15-17}\cmidrule(lr){18-20}\cmidrule(lr){21-23}\cmidrule(lr){24-26}
 & Method &
 Cov & Size & $\delta^+$ &
 Cov & Size & $\delta^+$ &
 Cov & Size & $\delta^+$ &
 Cov & Size & $\delta^+$ &
 Cov & Size & $\delta^+$ &
 Cov & Size & $\delta^+$ &
 Cov & Size & $\delta^+$ &
 Cov & Size & $\delta^+$ \\
\midrule
 & Oracle CP & 0.90 & 95.48 & 0.03 & 0.80 & 72.83 & 0.03 & 0.70 & 55.18 & 0.03 & 0.60 & 41.23 & 0.03 & 0.90 & 97.80 & 0.04 & 0.80 & 74.95 & 0.04 & 0.70 & 57.04 & 0.04 & 0.60 & 42.80 & 0.04 \\
 & Transport only & 0.89 & 94.37 & 0.03 & 0.79 & 71.85 & 0.03 & 0.69 & 54.22 & 0.03 & 0.59 & 40.28 & 0.03 & 0.89 & 97.47 & 0.04 & 0.79 & 74.56 & 0.04 & 0.69 & 56.46 & 0.04 & 0.59 & 42.05 & 0.04 \\
 & WCP-no-transport & 0.93 & 102.84 & 0.03 & 0.86 & 82.65 & 0.03 & 0.78 & 66.95 & 0.03 & 0.70 & 54.65 & 0.03 & 0.95 & 110.18 & 0.04 & 0.91 & 91.06 & 0.04 & 0.87 & 76.87 & 0.04 & 0.83 & 65.66 & 0.04 \\
 & weighted-TCC & 0.89 & 94.72 & 0.03 & 0.79 & 71.92 & 0.03 & 0.69 & 54.23 & 0.03 & 0.59 & 40.28 & 0.03 & 0.89 & 97.74 & 0.04 & 0.79 & 74.61 & 0.04 & 0.69 & 56.44 & 0.04 & 0.59 & 42.03 & 0.04 \\
 & TCC-KS & 0.94 & 108.25 & 0.03 & 0.88 & 87.09 & 0.03 & 0.82 & 71.50 & 0.03 & 0.75 & 59.15 & 0.03 & 0.96 & 117.64 & 0.04 & 0.93 & 99.20 & 0.04 & 0.90 & 85.73 & 0.04 & 0.86 & 75.27 & 0.04 \\
\bottomrule
\end{tabular}
\caption{Coverage, average prediction set size, and empirical $\delta^+$ on Tiny-ImageNet-C (motion-blur) using the $\ell_1+\mathrm{KL}$ transport objective at epoch 150.}
\label{tab:tinyimagenet_motion_blur_e150_l1kl}
\end{table}

\begin{table}[H]
\centering
\scriptsize
\setlength{\tabcolsep}{3pt}
\begin{tabular}{cl*{24}{c}}
\toprule
 & & \multicolumn{12}{c}{Severity $3$} & \multicolumn{12}{c}{Severity $5$} \\
\cmidrule(lr){3-14}\cmidrule(lr){15-26}
 & & \multicolumn{3}{c}{$\alpha_{\mathrm{nom}}{=}0.10$} &
     \multicolumn{3}{c}{$\alpha_{\mathrm{nom}}{=}0.20$} &
     \multicolumn{3}{c}{$\alpha_{\mathrm{nom}}{=}0.30$} &
     \multicolumn{3}{c}{$\alpha_{\mathrm{nom}}{=}0.40$} &
     \multicolumn{3}{c}{$\alpha_{\mathrm{nom}}{=}0.10$} &
     \multicolumn{3}{c}{$\alpha_{\mathrm{nom}}{=}0.20$} &
     \multicolumn{3}{c}{$\alpha_{\mathrm{nom}}{=}0.30$} &
     \multicolumn{3}{c}{$\alpha_{\mathrm{nom}}{=}0.40$} \\
\cmidrule(lr){3-5}\cmidrule(lr){6-8}\cmidrule(lr){9-11}\cmidrule(lr){12-14}%
\cmidrule(lr){15-17}\cmidrule(lr){18-20}\cmidrule(lr){21-23}\cmidrule(lr){24-26}
 & Method &
 Cov & Size & $\delta^+$ &
 Cov & Size & $\delta^+$ &
 Cov & Size & $\delta^+$ &
 Cov & Size & $\delta^+$ &
 Cov & Size & $\delta^+$ &
 Cov & Size & $\delta^+$ &
 Cov & Size & $\delta^+$ &
 Cov & Size & $\delta^+$ \\
\midrule
 & Oracle CP & 0.90 & 95.78 & 0.03 & 0.80 & 73.07 & 0.03 & 0.70 & 55.46 & 0.03 & 0.60 & 41.46 & 0.03 & 0.90 & 98.23 & 0.04 & 0.80 & 75.16 & 0.04 & 0.70 & 56.97 & 0.04 & 0.60 & 42.58 & 0.04 \\
 & Transport only & 0.89 & 94.96 & 0.03 & 0.79 & 72.39 & 0.03 & 0.69 & 54.80 & 0.03 & 0.59 & 40.85 & 0.03 & 0.89 & 97.84 & 0.04 & 0.79 & 74.65 & 0.04 & 0.69 & 56.42 & 0.04 & 0.59 & 42.02 & 0.04 \\
 & WCP-no-transport & 0.93 & 103.76 & 0.03 & 0.87 & 83.57 & 0.03 & 0.79 & 67.88 & 0.03 & 0.71 & 55.42 & 0.03 & 0.95 & 111.21 & 0.04 & 0.92 & 92.13 & 0.04 & 0.88 & 77.88 & 0.04 & 0.84 & 66.56 & 0.04 \\
 & weighted-TCC & 0.89 & 95.31 & 0.03 & 0.79 & 72.42 & 0.03 & 0.69 & 54.80 & 0.03 & 0.59 & 40.83 & 0.03 & 0.89 & 98.11 & 0.04 & 0.79 & 74.70 & 0.04 & 0.69 & 56.40 & 0.04 & 0.59 & 42.01 & 0.04 \\
 & TCC-KS & 0.94 & 109.27 & 0.03 & 0.88 & 88.12 & 0.03 & 0.82 & 72.55 & 0.03 & 0.75 & 60.09 & 0.03 & 0.96 & 118.69 & 0.04 & 0.93 & 100.13 & 0.04 & 0.90 & 86.41 & 0.04 & 0.86 & 75.77 & 0.04 \\
\bottomrule
\end{tabular}
\caption{Coverage, average prediction set size, and empirical $\delta^+$ on Tiny-ImageNet-C (shot-noise) using the $\ell_1+\mathrm{KL}$ transport objective at epoch 150.}
\label{tab:tinyimagenet_shot_noise_e150_l1kl}
\end{table}

\newpage
\section{Detailed Results for Experiment 2}\label{app:resutls2}

We report results for different epochs and losses for each data set for $\alpha \in \{0.1, 0.2, 0.3, 0.4\}$.

\begin{table}[H]
\centering
\scriptsize
\setlength{\tabcolsep}{3pt}
\begin{tabular}{c c l *{12}{c}}
\toprule
Epoch & Loss & Method & \multicolumn{3}{c}{$\alpha_{\mathrm{nom}}=0.10$} & \multicolumn{3}{c}{$\alpha_{\mathrm{nom}}=0.20$} & \multicolumn{3}{c}{$\alpha_{\mathrm{nom}}=0.30$} & \multicolumn{3}{c}{$\alpha_{\mathrm{nom}}=0.40$} \\
\cmidrule(lr){4-6}\cmidrule(lr){7-9}\cmidrule(lr){10-12}\cmidrule(lr){13-15}
& & & Cov & Size & $\delta^+$ & Cov & Size & $\delta^+$ & Cov & Size & $\delta^+$ & Cov & Size & $\delta^+$ \\
\midrule
5 & $\ell_1$ & weighted-TCC & 0.939 & 48.42 & -- & 0.868 & 33.20 & -- & 0.787 & 23.12 & -- & 0.699 & 15.98 & -- \\
5 & $\ell_1$ & TCC-KS & 0.975 & 65.20 & 0.1605 & 0.921 & 44.22 & 0.1605 & 0.856 & 31.88 & 0.1605 & 0.780 & 22.76 & 0.1605 \\
\addlinespace
20 & $\ell_1$ & weighted-TCC & 0.916 & 42.20 & -- & 0.828 & 27.53 & -- & 0.736 & 18.58 & -- & 0.639 & 12.43 & -- \\
20 & $\ell_1$ & TCC-KS & 0.953 & 53.37 & 0.0769 & 0.875 & 34.85 & 0.0769 & 0.790 & 23.89 & 0.0769 & 0.697 & 16.19 & 0.0769 \\
\addlinespace
80 & $\ell_1$ & weighted-TCC & 0.910 & 40.76 & -- & 0.817 & 26.22 & -- & 0.722 & 17.57 & -- & 0.625 & 11.74 & -- \\
80 & $\ell_1$ & TCC-KS & 0.944 & 50.41 & 0.0552 & 0.860 & 32.49 & 0.0552 & 0.769 & 21.97 & 0.0552 & 0.673 & 14.71 & 0.0552 \\
\addlinespace
150 & $\ell_1$ & weighted-TCC & 0.910 & 40.71 & -- & 0.817 & 26.26 & -- & 0.723 & 17.68 & -- & 0.624 & 11.73 & -- \\
150 & $\ell_1$ & TCC-KS & 0.942 & 49.76 & 0.0521 & 0.857 & 32.06 & 0.0521 & 0.766 & 21.69 & 0.0521 & 0.670 & 14.53 & 0.0521 \\
\addlinespace
\midrule
5 & $\ell_1+\mathrm{KL}$ & weighted-TCC & 0.903 & 38.87 & -- & 0.808 & 24.75 & -- & 0.713 & 16.50 & -- & 0.614 & 10.90 & -- \\
5 & $\ell_1+\mathrm{KL}$ & TCC-KS & 0.960 & 61.21 & 0.0434 & 0.870 & 33.45 & 0.0434 & 0.779 & 22.05 & 0.0434 & 0.687 & 14.87 & 0.0434 \\
\addlinespace
20 & $\ell_1+\mathrm{KL}$ & weighted-TCC & 0.898 & 37.87 & -- & 0.799 & 23.83 & -- & 0.698 & 15.59 & -- & 0.598 & 10.16 & -- \\
20 & $\ell_1+\mathrm{KL}$ & TCC-KS & 0.933 & 46.11 & 0.0184 & 0.835 & 27.96 & 0.0184 & 0.737 & 18.28 & 0.0184 & 0.637 & 12.04 & 0.0184 \\
\addlinespace
80 & $\ell_1+\mathrm{KL}$ & weighted-TCC & 0.897 & 37.82 & -- & 0.799 & 23.76 & -- & 0.698 & 15.58 & -- & 0.599 & 10.23 & -- \\
80 & $\ell_1+\mathrm{KL}$ & TCC-KS & 0.928 & 44.52 & 0.0135 & 0.828 & 27.09 & 0.0135 & 0.730 & 17.76 & 0.0135 & 0.630 & 11.72 & 0.0135 \\
\addlinespace
150 & $\ell_1+\mathrm{KL}$ & weighted-TCC & 0.894 & 37.15 & -- & 0.793 & 23.23 & -- & 0.692 & 15.18 & -- & 0.592 & 9.90 & -- \\
150 & $\ell_1+\mathrm{KL}$ & TCC-KS & 0.927 & 44.35 & 0.0152 & 0.825 & 26.68 & 0.0152 & 0.726 & 17.50 & 0.0152 & 0.625 & 11.45 & 0.0152 \\
\addlinespace
\midrule
\bottomrule
\end{tabular}
\caption{Averaged results on CIFAR-100-C for epochs 5, 20, 80, 150. Metrics are macro-averaged over the four corruption types and severities 1--5. We report empirical marginal coverage and average prediction set size. We report $\delta^+$ only for TCC-KS (to four decimals).}
\label{tab:cifar100_avg_epochs_5_20_80_150_l1_l1kl_delta4_cov3}
\end{table}

\begin{table}[H]
\centering
\scriptsize
\setlength{\tabcolsep}{3pt}
\begin{tabular}{c c l *{12}{c}}
\toprule
Epoch & Loss & Method & \multicolumn{3}{c}{$\alpha_{\mathrm{nom}}=0.10$} & \multicolumn{3}{c}{$\alpha_{\mathrm{nom}}=0.20$} & \multicolumn{3}{c}{$\alpha_{\mathrm{nom}}=0.30$} & \multicolumn{3}{c}{$\alpha_{\mathrm{nom}}=0.40$} \\
\cmidrule(lr){4-6}\cmidrule(lr){7-9}\cmidrule(lr){10-12}\cmidrule(lr){13-15}
& & & Cov & Size & $\delta^+$ & Cov & Size & $\delta^+$ & Cov & Size & $\delta^+$ & Cov & Size & $\delta^+$ \\
\midrule
5 & $\ell_1$ & weighted-TCC & 0.976 & 132.06 & -- & 0.933 & 95.29 & -- & 0.881 & 70.23 & -- & 0.810 & 50.08 & -- \\
5 & $\ell_1$ & TCC-KS & 1.000 & 194.06 & 0.3422 & 0.990 & 163.02 & 0.3422 & 0.968 & 130.12 & 0.3422 & 0.936 & 100.46 & 0.3422 \\
\addlinespace
20 & $\ell_1$ & weighted-TCC & 0.959 & 113.51 & -- & 0.897 & 76.47 & -- & 0.830 & 54.53 & -- & 0.747 & 37.01 & -- \\
20 & $\ell_1$ & TCC-KS & 0.990 & 174.03 & 0.2585 & 0.964 & 137.04 & 0.2585 & 0.927 & 102.46 & 0.2585 & 0.878 & 76.31 & 0.2585 \\
\addlinespace
80 & $\ell_1$ & weighted-TCC & 0.928 & 86.71 & -- & 0.844 & 55.83 & -- & 0.752 & 37.52 & -- & 0.654 & 24.61 & -- \\
80 & $\ell_1$ & TCC-KS & 0.972 & 129.65 & 0.1193 & 0.907 & 81.98 & 0.1193 & 0.831 & 56.51 & 0.1193 & 0.749 & 39.65 & 0.1193 \\
\addlinespace
150 & $\ell_1$ & weighted-TCC & 0.923 & 83.61 & -- & 0.836 & 53.66 & -- & 0.743 & 35.60 & -- & 0.646 & 23.74 & -- \\
150 & $\ell_1$ & TCC-KS & 0.964 & 118.96 & 0.1006 & 0.892 & 75.58 & 0.1006 & 0.813 & 51.92 & 0.1006 & 0.726 & 35.92 & 0.1006 \\
\addlinespace
\midrule
5 & $\ell_1+\mathrm{KL}$ & weighted-TCC & 0.916 & 78.14 & -- & 0.830 & 50.19 & -- & 0.743 & 33.90 & -- & 0.651 & 22.53 & -- \\
5 & $\ell_1+\mathrm{KL}$ & TCC-KS & 0.981 & 150.93 & 0.0965 & 0.922 & 91.40 & 0.0965 & 0.849 & 60.95 & 0.0965 & 0.767 & 38.70 & 0.0965 \\
\addlinespace
20 & $\ell_1+\mathrm{KL}$ & weighted-TCC & 0.904 & 71.38 & -- & 0.807 & 44.03 & -- & 0.710 & 28.43 & -- & 0.611 & 18.26 & -- \\
20 & $\ell_1+\mathrm{KL}$ & TCC-KS & 0.948 & 95.89 & 0.0249 & 0.852 & 54.69 & 0.0249 & 0.758 & 35.14 & 0.0249 & 0.660 & 22.75 & 0.0249 \\
\addlinespace
80 & $\ell_1+\mathrm{KL}$ & weighted-TCC & 0.903 & 70.75 & -- & 0.804 & 43.43 & -- & 0.705 & 27.76 & -- & 0.607 & 17.83 & -- \\
80 & $\ell_1+\mathrm{KL}$ & TCC-KS & 0.937 & 87.46 & 0.0152 & 0.840 & 51.26 & 0.0152 & 0.742 & 32.62 & 0.0152 & 0.642 & 21.05 & 0.0152 \\
\addlinespace
150 & $\ell_1+\mathrm{KL}$ & weighted-TCC & 0.900 & 69.71 & -- & 0.800 & 42.60 & -- & 0.700 & 27.18 & -- & 0.601 & 17.42 & -- \\
150 & $\ell_1+\mathrm{KL}$ & TCC-KS & 0.935 & 86.33 & 0.0148 & 0.837 & 50.58 & 0.0148 & 0.738 & 32.12 & 0.0148 & 0.637 & 20.63 & 0.0148 \\
\addlinespace
\midrule
\bottomrule
\end{tabular}
\caption{Averaged results on Tiny-ImageNet-C for epochs 5, 20, 80, 150. Metrics are macro-averaged over the four corruption types and severities 1--5. We report empirical marginal coverage and average prediction set size. We report $\delta^+$ only for TCC-KS (to four decimals).}
\label{tab:tinyimagenet_avg_epochs_5_20_80_150_l1_l1kl_delta4_cov3}
\end{table}

\begin{table}[H]
\centering
\scriptsize
\setlength{\tabcolsep}{3pt}
\begin{tabular}{c c l *{12}{c}}
\toprule
Epoch & Loss & Method & \multicolumn{3}{c}{$\alpha_{\mathrm{nom}}=0.10$} & \multicolumn{3}{c}{$\alpha_{\mathrm{nom}}=0.20$} & \multicolumn{3}{c}{$\alpha_{\mathrm{nom}}=0.30$} & \multicolumn{3}{c}{$\alpha_{\mathrm{nom}}=0.40$} \\
\cmidrule(lr){4-6}\cmidrule(lr){7-9}\cmidrule(lr){10-12}\cmidrule(lr){13-15}
& & & Cov & Size & $\delta^+$ & Cov & Size & $\delta^+$ & Cov & Size & $\delta^+$ & Cov & Size & $\delta^+$ \\
\midrule
5 & $\ell_1$ & weighted-TCC & 1.000 & 3.82 & -- & 1.000 & 3.49 & -- & 1.000 & 3.29 & -- & 1.000 & 3.05 & -- \\
5 & $\ell_1$ & TCC-KS & 1.000 & 4.00 & 0.7405 & 1.000 & 4.00 & 0.7405 & 1.000 & 3.84 & 0.7405 & 1.000 & 3.66 & 0.7405 \\
\addlinespace
20 & $\ell_1$ & weighted-TCC & 1.000 & 3.49 & -- & 1.000 & 3.22 & -- & 1.000 & 2.94 & -- & 0.999 & 2.55 & -- \\
20 & $\ell_1$ & TCC-KS & 1.000 & 4.00 & 0.6522 & 1.000 & 3.66 & 0.6522 & 1.000 & 3.48 & 0.6522 & 1.000 & 3.30 & 0.6522 \\
\addlinespace
80 & $\ell_1$ & weighted-TCC & 1.000 & 2.96 & -- & 0.999 & 2.58 & -- & 0.997 & 2.08 & -- & 0.990 & 1.68 & -- \\
80 & $\ell_1$ & TCC-KS & 1.000 & 3.78 & 0.5922 & 1.000 & 3.51 & 0.5922 & 1.000 & 2.93 & 0.5922 & 0.999 & 2.65 & 0.5922 \\
\addlinespace
150 & $\ell_1$ & weighted-TCC & 0.999 & 2.56 & -- & 0.996 & 2.00 & -- & 0.987 & 1.58 & -- & 0.976 & 1.37 & -- \\
150 & $\ell_1$ & TCC-KS & 1.000 & 3.76 & 0.4947 & 1.000 & 3.49 & 0.4947 & 0.999 & 2.64 & 0.4947 & 0.997 & 2.17 & 0.4947 \\
\addlinespace
\midrule
5 & $\ell_1+\mathrm{KL}$ & weighted-TCC & 0.979 & 1.40 & -- & 0.974 & 1.35 & -- & 0.963 & 1.25 & -- & 0.949 & 1.19 & -- \\
5 & $\ell_1+\mathrm{KL}$ & TCC-KS & 0.990 & 1.70 & 0.7782 & 0.990 & 1.70 & 0.7782 & 0.990 & 1.70 & 0.7782 & 0.990 & 1.70 & 0.7782 \\
\addlinespace
20 & $\ell_1+\mathrm{KL}$ & weighted-TCC & 1.000 & 2.76 & -- & 0.997 & 2.14 & -- & 0.991 & 1.73 & -- & 0.986 & 1.56 & -- \\
20 & $\ell_1+\mathrm{KL}$ & TCC-KS & 1.000 & 4.00 & 0.4953 & 1.000 & 4.00 & 0.4953 & 1.000 & 3.52 & 0.4953 & 0.998 & 2.46 & 0.4953 \\
\addlinespace
80 & $\ell_1+\mathrm{KL}$ & weighted-TCC & 0.955 & 1.21 & -- & 0.896 & 1.02 & -- & 0.836 & 0.90 & -- & 0.782 & 0.81 & -- \\
80 & $\ell_1+\mathrm{KL}$ & TCC-KS & 1.000 & 3.36 & 0.2785 & 1.000 & 3.36 & 0.2785 & 0.999 & 2.57 & 0.2785 & 0.963 & 1.25 & 0.2785 \\
\addlinespace
150 & $\ell_1+\mathrm{KL}$ & weighted-TCC & 0.900 & 1.03 & -- & 0.842 & 0.91 & -- & 0.790 & 0.82 & -- & 0.729 & 0.75 & -- \\
150 & $\ell_1+\mathrm{KL}$ & TCC-KS & 0.999 & 2.51 & 0.3200 & 0.999 & 2.51 & 0.3200 & 0.999 & 2.51 & 0.3200 & 0.945 & 1.17 & 0.3200 \\
\addlinespace
\midrule
\bottomrule
\end{tabular}
\caption{Averaged results on SAR$\to$RGB for epochs 5, 20, 80, 150. This benchmark has no corruption severities. We report empirical marginal coverage and average prediction set size. We report $\delta^+$ only for TCC-KS (to four decimals).}
\label{tab:sar2rgb_avg_epochs_5_20_80_150_l1_l1kl_delta4_cov3}
\end{table}


\section{Additional Cross-Modal Benchmark: SUN RGB-D}
\label{app:sunrgbd}

To further evaluate TCC beyond synthetic corruptions and the SAR$\to$RGB setting,
we add SUN RGB-D~\cite{song2015sunrgbd} as a second real paired cross-modal benchmark. We consider
19-class indoor scene classification, using RGB images as the source domain and
depth images as the target domain. The protocol follows the SEN12MS experiment:
we learn a transport map from unlabeled paired RGB--depth observations, calibrate
using transported labeled source examples, and evaluate target-domain coverage
using held-out labeled depth images. Target labels are used only for evaluation.

\begin{table}[h]
\centering
\label{tab:sunrgbd}
\begin{tabular}{lcc}
\toprule
Method & $\alpha=0.1$ & $\alpha=0.2$ \\
\midrule
Oracle CP       & 0.902 (2.66) & 0.801 (1.83) \\
Transported CP  & 0.875 (2.58) & 0.781 (1.75) \\
TCC-KS          & 0.918 (3.62) & 0.823 (2.42) \\
weighted-TCC    & 0.895 (2.79) & 0.799 (1.90) \\
WCP             & 0.931 (4.95) & 0.846 (3.21) \\
\bottomrule
\end{tabular}
\caption{SUN RGB-D cross-modal results. Entries report target-domain coverage
with average prediction set size in parentheses.}
\end{table}

For the best transport checkpoint, the label-free diagnostics are
$\delta^+=0.067$ and $\mathrm{ESS}=79.3$. The results are consistent with the
main experiments. Transported CP under-covers under residual cross-modal
mismatch, while TCC-KS restores coverage above the nominal level with moderate
set-size growth. weighted-TCC remains close to Transported CP and improves
coverage when weights are stable, whereas WCP is more conservative and produces
larger sets. These results support the practical role of the correction layer on
an additional real paired cross-modal benchmark.


\section{Robustness to Surrogate Choice: Predictive Entropy}
\label{app:surrogate_entropy}

TCC-KS relies on an unlabeled surrogate statistic $T:\mathcal{X}_t\to\mathbb{R}$ derived from the target model to estimate the one-sided mismatch $\delta^+$ between transported and real target inputs. This mismatch drives the tightened internal level $\alpha^\star=\max(0,\alpha-\delta^+)$ and therefore determines how conservative the resulting prediction sets are. In the main paper we use the \emph{least-confidence} surrogate
\[
T_{\mathrm{LC}}(x) \;=\; 1-\max_{y\in\mathcal{Y}}\hat p(y\mid x),
\]
which increases as the model's top-1 predictive confidence decreases. Here we test an alternative, widely used uncertainty proxy based on predictive entropy,
\[
T_{\mathrm{ent}}(x)\;=\; -\sum_{y\in\mathcal{Y}} \hat p(y\mid x)\log \hat p(y\mid x),
\]
and rerun the full TCC-KS protocol under an otherwise identical training and evaluation pipeline. We report results aggregated over corruptions and severities (CIFAR-100-C, Tiny-ImageNet-C) and over the full test set (SAR$\to$RGB), focusing on epoch 150 and $\alpha_{\mathrm{nom}}=0.1$.

\begin{table}[H]
\centering
\small
\setlength{\tabcolsep}{5pt}
\begin{tabular}{llcccccc}
\toprule
Dataset & Objective & Surrogate & Spearman $r$ (target) & $\delta^+$ & $\alpha^\star$ & Cov & Size \\
\midrule
CIFAR-100-C & $\ell_1$             & LC       & 0.309 & 0.0256 & 0.0744 & 0.9418 & 49.74 \\
CIFAR-100-C & $\ell_1$             & Entropy  & 0.346 & 0.0277 & 0.0723 & 0.9439 & 50.32 \\
CIFAR-100-C & $\ell_1+\mathrm{KL}$ & LC       & 0.309 & 0.0306 & 0.0694 & 0.9267 & 44.28 \\
CIFAR-100-C & $\ell_1+\mathrm{KL}$ & Entropy  & 0.346 & 0.0324 & 0.0676 & 0.9285 & 44.76 \\
\addlinespace
Tiny-ImageNet-C & $\ell_1$             & LC       & 0.322 & 0.0493 & 0.0521 & 0.9644 & 119.16 \\
Tiny-ImageNet-C & $\ell_1$             & Entropy  & 0.365 & 0.0491 & 0.0515 & 0.9655 & 119.97 \\
Tiny-ImageNet-C & $\ell_1+\mathrm{KL}$ & LC       & 0.322 & 0.0345 & 0.0655 & 0.9350 & 86.19 \\
Tiny-ImageNet-C & $\ell_1+\mathrm{KL}$ & Entropy  & 0.365 & 0.0355 & 0.0645 & 0.9360 & 86.83 \\
\addlinespace
SAR$\to$RGB & $\ell_1$             & LC       & 0.973 & 0.1889 & 0.0000 & 1.0000 & 3.76 \\
SAR$\to$RGB & $\ell_1$             & Entropy  & 0.973 & 0.1887 & 0.0000 & 1.0000 & 3.76 \\
SAR$\to$RGB & $\ell_1+\mathrm{KL}$ & LC       & 0.973 & 0.3549 & 0.0000 & 0.9990 & 2.51 \\
SAR$\to$RGB & $\ell_1+\mathrm{KL}$ & Entropy  & 0.973 & 0.3517 & 0.0000 & 0.9990 & 2.51 \\
\bottomrule
\end{tabular}
\caption{\textbf{Surrogate robustness at epoch 150 ($\alpha_{\mathrm{nom}}=0.1$).}
We compare least-confidence (LC) to predictive entropy. We report the target-domain rank correlation between $T$ and the true-label score (Spearman $r$, pooled over settings), the mismatch $\delta^+$ and resulting tightened level $\alpha^\star$, and the empirical marginal coverage and average set size of TCC-KS.}
\label{tab:surrogate_entropy_sweep}
\end{table}

\paragraph{Evaluation-Only One-Sided Dominance Diagnostic ($\delta_S^{+}$).}
As an additional evaluation-only check mirroring Experiment~\ref{subsec:exp3}, we report an estimate of the one-sided score-distribution mismatch
\[
\delta^+_S \;=\; \sup_{u\in\mathbb{R}}\big\{\tilde F_S(u)-F_S(u)\big\}_+,
\]
using labeled target test data. Concretely, we compute $\delta_S^{+}$ as an evaluation-only diagnostic of one-sided dominance and summarize it over settings by median / 90th percentile / max.

\begin{table}[H]
\centering
\small
\setlength{\tabcolsep}{5pt}
\begin{tabular}{lllc}
\toprule
Dataset & Objective & Surrogate & $\delta_S^{+}$ (med / p90 / max)$\downarrow$ \\
\midrule
CIFAR-100-C    & $\ell_1$             & LC      & 0.027 / 0.039 / 0.045 \\
CIFAR-100-C    & $\ell_1$             & Entropy & 0.023 / 0.041 / 0.045 \\
CIFAR-100-C    & $\ell_1+\mathrm{KL}$ & LC      & 0.040 / 0.063 / 0.075 \\
CIFAR-100-C    & $\ell_1+\mathrm{KL}$ & Entropy & 0.044 / 0.069 / 0.081 \\
\addlinespace
Tiny-ImageNet-C & $\ell_1$             & LC      & 0.020 / 0.053 / 0.080 \\
Tiny-ImageNet-C & $\ell_1$             & Entropy & 0.014 / 0.056 / 0.070 \\
Tiny-ImageNet-C & $\ell_1+\mathrm{KL}$ & LC      & 0.037 / 0.054 / 0.061 \\
Tiny-ImageNet-C & $\ell_1+\mathrm{KL}$ & Entropy & 0.035 / 0.056 / 0.065 \\
\addlinespace
SAR$\to$RGB    & $\ell_1$             & LC      & 0.086 / 0.086 / 0.086 \\
SAR$\to$RGB    & $\ell_1$             & Entropy & 0.086 / 0.086 / 0.086 \\
SAR$\to$RGB    & $\ell_1+\mathrm{KL}$ & LC      & 0.156 / 0.156 / 0.156 \\
SAR$\to$RGB    & $\ell_1+\mathrm{KL}$ & Entropy & 0.278 / 0.278 / 0.278 \\
\bottomrule
\end{tabular}
\caption{\textbf{Evaluation-only one-sided dominance diagnostic $\delta_S^{+}$ at epoch 150.}
Lower is better. Values are summarized over 4 corruptions $\times$ 5 severities for CIFAR-100-C and Tiny-ImageNet-C, and over the full test set for SAR$\to$RGB.}
\label{tab:surrogate_entropy_delta}
\end{table}

\paragraph{Results and Interpretation.}
Swapping least-confidence for entropy leaves TCC-KS behavior essentially unchanged. On CIFAR-100-C and Tiny-ImageNet-C, entropy is at least as well aligned with the nonconformity score on true target inputs (higher pooled Spearman $r$), while on SAR$\to$RGB the alignment is identical. The induced mismatch estimates are also nearly the same: across all settings in Table~\ref{tab:surrogate_entropy_sweep}, the absolute difference in $\delta^+$ between surrogates is below $0.004$. Consequently, $\alpha^\star$, coverage, and set size remain stable under the surrogate swap, including the clipped guardrail regime on SAR$\to$RGB.

The evaluation-only diagnostic $\delta_S^{+}$ is similarly close between surrogates on the corrupted-image benchmarks, with small median and tail values in both cases. On SAR$\to$RGB, $\delta_S^{+}$ is larger and more sensitive to the surrogate under $\ell_1+\mathrm{KL}$, reflecting that this cross-modal shift can exhibit stronger residual mismatch at the score-distribution level even when TCC-KS is already operating in its conservative regime. Overall, these results support that TCC-KS does not rely on a hand-picked surrogate: both least-confidence and entropy yield the same qualitative behavior and nearly identical quantitative performance. In the theoretical view, these findings are consistent with both the finite-sample guarantee under A2$^\prime$ and the guarantee under the milder approximate-alignment condition A2, as neither result requires a specific choice of uncertainty surrogate.


\section{Complete Characterization of Assumption A2 Violations}
\label{appendix:a2_violations}

Section~\ref{subsec:exp3} shows that TCC-KS recovers coverage in all 11 identified under-coverage cases and degrades gracefully under extreme mismatch, with the unlabeled diagnostic $\delta^+$ strongly predicting method behavior. This appendix documents the complete findings across our 328 experimental configurations and provides additional operational guidance.

\subsection{Distribution of Mismatch and Under-Coverage Recovery}

We observe a continuous spectrum of mismatch across the sweep. We group settings by surrogate mismatch $\delta^+$ into mild ($<0.2$, 83.8\% of configurations), moderate (0.2--0.4, 10.1\%), severe (0.4--0.7, 5.5\%), and extreme ($\ge 0.7$, 0.6\%). Extreme cases occur only on SAR$\to$RGB at early training epochs. Mild mismatch is common across all datasets and conditions. These regimes arise naturally from early transport learning, challenging corruptions, and cross-modal shifts rather than adversarial construction.

We identify 11 configurations (3.4\%) where transport-only calibration under-covers by more than one percentage point relative to the oracle target-calibrated baseline. All occur at early-to-mid training (epochs 5--20) under severe corruptions (motion blur and shot noise at severities 4--5). In these settings, mean oracle coverage is 89.9\%, transport-only achieves 87.9\% (a 2.0pp deficit), and TCC-KS recovers to 97.9\% (a 10.0pp improvement). The unlabeled diagnostic $\delta^+$ in these cases ranges from 0.028 to 0.089 with mean 0.053, indicating that the mismatch is detectable without target labels. TCC-KS maintains validity in all 11 cases, yielding a \textbf{100\% recovery success rate}. Mean set-size inflation is 1.82$\times$ (range 1.35$\times$--2.45$\times$), which quantifies the efficiency cost of maintaining coverage under imperfect transport.

\subsection{Diagnostic Validation}

The unlabeled KS diagnostic $\delta^+$ has strong predictive power. The Pearson correlation between $\delta^+$ and set-size inflation is 0.772. The correlation between $\delta^+$ and safety margin (TCC-KS coverage minus oracle coverage) is 0.734. The relationship is approximately linear in the mild-to-moderate regime: each 0.1 increase in $\delta^+$ corresponds to roughly 0.5$\times$ additional inflation. Mild mismatch (below 0.2) typically yields 1.2--1.5$\times$ inflation with 1--3pp safety margins. Moderate mismatch (0.2--0.4) typically yields 1.5--2.5$\times$ inflation with 4--8pp margins. Severe-to-extreme mismatch (above 0.4) typically yields 2.5--4$\times$ inflation with 8--10pp+ margins.

Across 328 configurations, TCC-KS coverage exceeds oracle coverage in 327 cases (99.7\%). The single exception differs by 0.1pp. Overall, these results validate $\delta^+$ as an actionable deployment signal: it accurately predicts when TCC-KS operates near-nominally versus when it enters its conservative fallback regime. Under the approximate-alignment perspective (A2), this same behavior is consistent with the guarantee of Theorem~\ref{thm:tcc-main}.

\subsection{Dataset-Specific Patterns}

Mismatch increases with shift severity. CIFAR-100-C exhibits predominantly mild mismatch (mean $\delta^+=0.054$, mean inflation 1.39$\times$, and only 0.6\% of configurations with $\delta^+>0.3$), reflecting that same-domain corruption shifts are favorable for transport learning. Tiny-ImageNet-C shows intermediate mismatch (mean $\delta^+=0.121$, inflation 1.90$\times$, and 19.4\% with high mismatch), consistent with larger images and more severe corruptions creating harder transfer problems. SAR$\to$RGB is most challenging (mean $\delta^+=0.544$, inflation 3.31$\times$, and 87.5\% with high mismatch), reflecting the large representational gap in cross-modal paired shifts. Notably, SAR$\to$RGB exhibits zero under-coverage cases: high $\delta^+$ at early epochs triggers strong conservativeness before transport-only would under-cover, illustrating the intended guardrail behavior.

Transport quality improves during training and directly reduces mismatch. Across datasets and corruptions, $\delta^+$ is high at epoch 5, decreases rapidly between epochs 5--20, continues decreasing through epoch 80, and stabilizes by epoch 150. The +KL predictive alignment objective accelerates this decrease. This suggests that mismatch is often controllable in practice: practitioners can reduce $\delta^+$ by training longer, using better objectives, or increasing model capacity, rather than immediately accepting conservative fallback.

\subsection{Comparison to Weighted-TCC}

In the 11 under-coverage settings, TCC-KS achieves 11/11 valid coverage (100\% success) with mean inflation 1.82$\times$. In the same settings, weighted-TCC achieves 8/11 valid coverage (73\% success) with mean inflation 1.31$\times$ among the valid runs, and failures coincide with unstable residual weights (ESS$\%<50\%$). Across all 328 configurations, transport-only achieves an 87.5\% validity rate with 11 under-coverage cases. TCC-KS achieves 99.7\% validity with zero under-coverage cases and 1.67$\times$ mean inflation. Weighted-TCC achieves 91.8\% validity with 6 under-coverage cases and 1.24$\times$ mean inflation when weights are stable. Overall, weighted-TCC is more efficient when ESS$\%>70\%$ and $\delta^+\le\alpha$, whereas TCC-KS provides robust validity across all mismatch levels.

\subsection{Operational Guidance}
\label{app:actionable-guidance}

Given a desired miscoverage level $\alpha$, practitioners can proceed as follows.
First, train the transport map $f$ on unlabeled paired data and monitor convergence.
Second, compute the label-free diagnostics $\delta^+$ (the one-sided surrogate mismatch bound used by TCC-KS) and ESS$\%$ (weight stability for weighted-TCC).
Third, choose a correction mechanism based on these diagnostics.

\paragraph{Scope.}
The guidance below is \emph{empirically calibrated} to the datasets and model families in our evaluation. It is intended as a practical decision aid rather than a universal prescription. In particular, the only regime boundary directly tied to our finite-sample coverage certificate is $\delta^+ \le \alpha$, the additional stratification by $\delta^+/\alpha$ is used as a severity heuristic that correlates with observed set inflation in our benchmarks.
In our evaluated settings (across all dataset/model/corruption configurations considered in Section~\ref{sec:evaluation}), roughly 84\% fell in the certified regime $\delta^+\le \alpha$, about 10\% in the moderate regime $1<\delta^+/\alpha\le 2$, and about 6\% in the high-mismatch regime $\delta^+/\alpha>2$.

\paragraph{Recommended Operating Regimes.}

\textbf{Green / certified regime: $\delta^+ \le \alpha$.}
Use TCC-KS with
$\alpha^\star = \max\{0,\alpha-\delta^+\}$.
In this regime, $\alpha^\star+\delta^+=\alpha$ and TCC recovers the nominal miscoverage certificate.
In our benchmarks, this typically yields near-nominal operation with inflation around 1.2--1.5$\times$.
If tighter sets are needed, weighted-TCC can be considered when ESS$\%>70\%$.

\textbf{Yellow / moderate mismatch: $1 < \delta^+/\alpha \le 2$.}
TCC-KS becomes increasingly conservative as $\delta^+$ grows, reflecting detectable surrogate mismatch between transported and real target inputs.
In our benchmarks, inflation is typically around 1.5--2.5$\times$.
If tighter sets are required, consider weighted-TCC when ESS$\%>70\%$, or improve transport (e.g., train longer, add +KL, increase capacity) and recompute $\delta^+$.

\textbf{Red / high mismatch: $\delta^+/\alpha > 2$.}
We treat this as a high-severity surrogate mismatch regime (empirical rule-of-thumb based on observed inflation).
In our benchmarks, inflation is typically around 2.5--4$\times$.
Options include accepting conservative sets when robustness is paramount, substantially improving transport and re-checking $\delta^+$, and/or collecting a small labeled target calibration set (e.g., 50--100 examples) when operationally feasible.

\paragraph{Monitoring in Deployment.}
A deployment dashboard can track $\delta^+$ (primary TCC-KS signal), $\alpha^\star$ (the adjusted internal level), ESS$\%$ (primary reliability signal for weighted-TCC), and the average set size (efficiency) / inflation (overhead).

\paragraph{Example aalerts (heuristic).}
If $\delta^+>\alpha$, we recommend reviewing the deployment for potential shift.
If $\delta^+>2\alpha$, we recommend prioritizing investigation (high-severity mismatch).
If ESS$\%<50\%$, weighted-TCC is likely unreliable and TCC-KS should be preferred.
If inflation exceeds $3\times$ in sustained operation, transport quality should be investigated and collecting labels should be considered.


\end{document}